\newcommandx{\lhe}[2][1=]{\todo[inline,linecolor=red,backgroundcolor=red!25,bordercolor=red,#1]{LH: #2}}
\newcommandx{\spk}[2][1=]{\todo[inline,linecolor=red,backgroundcolor=red!25,bordercolor=red,#1]{SPK: #2}}
\newcommandx{\mj}[2][1=]{\todo[inline,linecolor=green,backgroundcolor=green!25,bordercolor=green,#1]{MJ: #2}}
\DeclareMathAlphabet{\mathpzc}{OT1}{pzc}{m}{it}
\newcommand{\gset}{\ensuremath{\cV_{\mathsf{R}}}}
\newcommand{\bset}{\ensuremath{\cV_{\mathsf{B}}}}
\newcommand{\algo}{\textsc{ClippedGossip}\xspace}
\newcommand{\rebuttal}[1]{{ #1}}
\newcommand{\cut}[1]{}
\newtheorem{thm}{Theorem}
\newenvironment{thmbis}[1]
{%
    \addtocounter{thm}{-1}%
    \begin{thm}}
        {\end{thm}}
\DeclarePairedDelimiterX{\inp}[2]{\langle}{\rangle}{#1, #2}
\DeclarePairedDelimiterX{\abs}[1]{\lvert}{\rvert}{#1}
\DeclarePairedDelimiterX{\norm}[1]{\lVert}{\rVert}{#1}
\DeclarePairedDelimiterX{\cbr}[1]{\{}{\}}{#1} 
\DeclarePairedDelimiterX{\rbr}[1]{(}{)}{#1} 
\DeclarePairedDelimiterX{\sbr}[1]{[}{]}{#1} 
\providecommand{\tsum}{\textstyle\sum} 
\providecommand{\R}{\mathbb{R}} 
\DeclareMathOperator{\expect}{\mathbb{E}}
\DeclareMathOperator{\E}{\expect}
\DeclareMathOperator{\sgn}{sign}
\def\sign{\@ifnextchar*{\@sgnargscaled}{\@ifnextchar[{\sgnargscaleas}{\@ifnextchar{\bgroup}{\@sgnarg}{\sgn} }}}
\def\@sgnarg#1{\sgn\rbr{#1}}
\def\@sgnargscaled#1{\sgn\rbr*{#1}}
\def\@sgnargscaleas[#1]#2{\sgn\rbr[#1]{#2}}
\DeclareMathOperator*{\argmin}{arg\,min}
\providecommand{\0}{\bm{0}}
\providecommand{\1}{\bm{1}}
\renewcommand{\gg}{\bm{g}}
\providecommand{\mm}{\bm{m}}
\providecommand{\uu}{\bm{u}}
\renewcommand{\vv}{\bm{v}}
\providecommand{\ww}{\bm{w}}
\providecommand{\xx}{\bm{x}}
\providecommand{\yy}{\bm{y}}
\providecommand{\zz}{\bm{z}}
\providecommand{\mA}{\bm{A}}
\providecommand{\mI}{\bm{I}}
\providecommand{\mW}{\bm{W}}
\providecommand{\cA}{\mathcal{A}}
\providecommand{\cD}{\mathcal{D}}
\providecommand{\cE}{\mathcal{E}}
\providecommand{\cG}{\mathcal{G}}
\providecommand{\cN}{\mathcal{N}}
\providecommand{\cO}{\mathcal{O}}
\providecommand{\cS}{\mathcal{S}}
\providecommand{\cV}{\mathcal{V}}
\newtheorem{theorem}{Theorem}
\newtheorem{proposition}[theorem]{Proposition}
\newtheorem{lemma}{Lemma}
\newtheorem{remark}[lemma]{Remark}
\newtheorem{definition}{Definition}
\newtheoremstyle{shortassumptionstyle}%
{}
{}
{\itshape}
{0.5em}
{\bfseries}
{.}
{0.5em}
{(\thmname{#1}\thmnumber{#2}) \thmnote{#3}}
\theoremstyle{shortassumptionstyle}
\newtheorem{assumption}{A}
\crefname{assumption}{}{}
\Crefname{assumption}{}{}
\theoremstyle{plain}
\newcommand{\ignore}[1]{}
\definecolor{color1}{RGB}{228,26,28}
\definecolor{color2}{RGB}{55,126,184}
\definecolor{color3}{RGB}{77,175,74}
\definecolor{color4}{RGB}{152,78,163}
\definecolor{color5}{RGB}{255,127,0}
\newcommand{\myitem}[1]{%
    \item[\textbf{(#1)}]\protected@edef\@currentlabel{#1}%
}
\colorlet{worker}{red!40}
\newcommand{\speedup}[1]{{\color{gray}(\ifdim #1 pt > 0.3pt #1\else $< #1$\fi{}$\times$)}}
\renewcommand{\epsilon}{\varepsilon}
\title{Byzantine-robust decentralized learning\\ via ClippedGossip}
\author{Lie He \thanks{Equal contribution.} \\
EPFL\\
\texttt{lie.he@epfl.ch} \\
\And
Sai Praneeth Karimireddy$^*$\\
UCB \\
\texttt{sp.karimireddy@berkeley.edu} \\
\AND
Martin Jaggi \\
EPFL\\
\texttt{martin.jaggi@epfl.ch} \\
}
\begin{document}

\maketitle

\begin{abstract}
      In this paper, we study the challenging task of Byzantine-robust decentralized training on arbitrary communication graphs. Unlike federated learning where workers communicate through a server, workers in the decentralized environment can only talk to their neighbors, making it harder to reach consensus and benefit from collaborative training. To address these issues, we propose a \algo algorithm for Byzantine-robust consensus and optimization, which is the first to provably converge to a $\cO(\delta_{\max}\zeta^2/\gamma^2)$ neighborhood of the stationary point for non-convex objectives under standard assumptions. Finally, we demonstrate the encouraging empirical performance of \algo under a large number of attacks.
\end{abstract}


\vspace{-1.2em}
\section{Introduction}\vspace{-2em}
\begin{quotation}
    \hfill``\emph{Divide et impera}''.
\end{quotation}\vspace{-3mm}

Distributed training arises as an important topic due to privacy constraints of decentralized data storage~\citep{mcmahan2016communicationefficient,kairouz2019federated}. As the server-worker paradigm suffers from a single point of failure, there is a growing amount of works on training in the absence of server~\citep{lian2017decentralized,nedic2020review,koloskova2021unified}. We are particularly interested in decentralized scenarios where direct communication may be unavailable due to physical constraints. For example, devices in a sensor network can only communicate devices within short physical distances.

Failures---from malfunctioning or even malicious participants---are ubiquitous in all kinds of distributed computing. A \textit{Byzantine} adversarial worker can deviate from the prescribed algorithm and send arbitrary messages and is assumed to have the knowledge of the whole system \citep{lamport2019byzantine}. It means Byzantine workers not only collude, but also know the data, algorithm, and models of all regular workers.
However, they cannot directly modify the states on regular workers, nor compromise messages sent between two connected regular workers.

Defending Byzantine attacks in a communication-constrained graph is challenging. As secure broadcast protocols are no longer available~\citep{Pease1980ReachingAI,Dolev1983AuthenticatedAF,Hirt2014MultivaluedBB}, regular workers can only utilize information from their own neighbors who have heterogeneous data distribution or are malicious, making it very difficult to reach global consensus. While there are some works attempt to solve this problem \citep{su2016multi,sundaram2018distributed}, their strategies suffer from serious drawbacks: 1) they require regular workers to be very densely connected; 2) they only show asymptotic convergence or no convergence proof; 3) there is no evidence if their algorithms are better than training alone.

In this work, we study the Byzantine-robustness decentralized training in a constrained topology and address the aforementioned issues. The main contributions of our paper are summarized as follows:
\begin{itemize}[leftmargin=*,topsep=0ex, itemsep=0ex]
    \item We identify a novel network robustness criterion, characterized in terms of the spectral gap of the topology~($\gamma$) and the number of attackers~($\delta$), for consensus and decentralized training, applying to a much broader spectrum of graphs than~\citep{su2016multi,sundaram2018distributed}.


    \item We propose \algo as the defense strategy and provide, for the first time, precise rates of robust convergence to a $\cO(\delta_{\max}\zeta^2/\gamma^2)$ neighborhood of a stationary point for stochastic objectives under standard assumptions.
          \footnote{In a previous version, we referred to \algo as \textit{self-centered clipping}.}
          We also empirically demonstrate the advantages of \algo over previous works.
    \item Along the way, we also obtain the fastest convergence rates for standard non-robust (Byzantine-free) decentralized stochastic non-convex optimization by using local worker momentum.
\end{itemize}

%
\section{Related work}
\vspace{-3mm}

Recently there have been extensive works on Byzantine-resilient distributed learning with a trustworthy server. The statistics-based robust aggregation methods cover a wide spectrum of works including median~\citep{chen2017distributed,blanchard2017machine,yin2018byzantinerobust,mhamdi2018hidden,xie2018generalized,yin2019defending}, geometric median~\citep{pillutla2019robust}, signSGD~\citep{bernstein2019signsgd,li2019rsa,sohn2020election}, clipping~\citep{Karimireddy2021LearningFH,karimireddy2021byzantinerobust}, and concentration filtering~\citep{alistarh2018byzantine,allen2020byzantine,data2021byzantine}.
Other works explore special settings where the server owns the entire training dataset \citep{xie2019zeno,regatti2020bygars,su2016robust,chen2018draco,rajput2019detox,gupta2021byzantine}. The state-of-the-art attacks take advantage of the variance of good gradients and accumulate bias over time \citep{baruch2019little,xie2019fall}.
A few strategies have been proposed to provably defend against such attacks, including momentum \citep{Karimireddy2021LearningFH,el2021distributed} and concentration filtering \citep{allenzhu2020byzantineresilient}.


Decentralized machine learning has been extensively studied in the past few years \citep{lian2017decentralized,koloskova2021unified,li2021communicationefficient,ying2021bluefog,lin2021quasi,kong2021consensus,yuan2021decentlam,kovalev2021linearly}. The state-of-the-art convergence rate is established in \citep{koloskova2021unified} is $\cO(\frac{\sigma^2}{n\epsilon^2}+\frac{\sigma}{\sqrt{\gamma} \epsilon^{3/2}})$ where the leading $\frac{\sigma^2}{n\epsilon^2}$ is optimal. In this paper we improve this rate to $\cO(\frac{\sigma^2}{n\epsilon^2}+\frac{\sigma^{2/3}}{\gamma^{2/3} \epsilon^{4/3}})$ using local momentum.

Decentralized machine learning with certified Byzantine-robustness is less studied. When the communication is unconstrained, there exist secure broadcast protocols that guarantee all regular workers have identical copies of each other's update~\citep{gorbunov2021secure,el2021collaborative}.
We are interested in a more challenging scenario where not all workers have direct communication links. In this case, regular workers may behave very differently depending on their neighbors in the topology. One line of work constructs a Public-Key Infrastructure (PKI) so that the message from each worker can be authenticated using digital signatures. However, this is very inefficient requiring quadratic communication \citep{abraham2020communication}. Further, it also requires every worker to have a globally unique identifier which is known to every other worker. This assumption is rendered impossible on general communication graphs, motivating our work to explicitly address the graph topology in decentralized training. 
Sybil attacks are an important orthogonal issue where a single Byzantine node can create innumerable ``fake nodes'' overwhelming the network (cf. recent overview by \cite{ford202110}). Truly decentralized solutions to this are challenging and sometimes rely on heavy machinery, e.g. blockchains \citep{poupko2021building} or Proof-of-Personhood~\citep{borge2017personhood}.

More related to the approaches we study, \cite{su2016multi}; \cite{sundaram2018distributed}; \cite{yang2019byrdie,yang2019bridge} use trimmed mean at each worker to aggregate models of its neighbors. This approach only works when all regular workers have an honest majority among their neighbors and are densely connected. \cite{guo2021byzantineresilient} evaluate the incoming models of a good worker with its local samples and only keep those well-perform models for its local update step. However, this method only works for IID data. \cite{peng2020byzantine} reformulate the original problem by adding TV-regularization and propose a GossipSGD type algorithm which works for strongly convex and non-IID objectives. However, its convergence guarantees are inferior to non-parallel SGD.
In this work, we address all of the above issues and are able to provably relate the communication graph (spectral gap) with the fraction of Byzantine workers.
Besides, most works do not consider attacks that exploit communication topology, except \citep{peng2020byzantine} who propose zero-sum attack. We defer detailed comparisons and more related works to \S~\ref{sec:other_related_work}.


\vspace{-0.5em}
\section{Setup}\label{sec:setup}
\vspace{-0.5em}
\subsection{Decentralized threat model} 
\vspace{-0.5em}
Consider an undirected graph $\cG=(\cV, \cE)$ where $\cV=\{1,\ldots,n\}$ denotes the set of workers and $\cE$ denotes the set of edges. Let $\cN_i \subset \cV$ be the neighbors of node $i$ and $\overline{\cN}_i:=\cN_i \cup\{i\}$. In addition, we assume there are no self-loops and the system is synchronous.
Let $\bset\subset\cV$ be the set of Byzantine workers with $b=|\bset|$ and the set of regular (non-Byzantine) workers is $\gset:=\cV\backslash\bset$.
Let $\cG_{\mathsf{R}}$ be the subgraph of~$\cG$ induced by the regular nodes $\gset$ which means removing all Byzantine nodes and their associated edges.
If the reduced graph $\cG_{\mathsf{R}}$ is disconnected, then there exist two regular workers who cannot reliably exchange information. In this setting, training on the combined data of all the good workers is impossible. Hence, we make the following necessary assumption.
\begin{assumption}[Connectivity]\label{a:connectivity}
    $\cG_{\mathsf{R}}$ is connected.
\end{assumption}

\begin{remark}\label{remark:assumption}
    In contrast, \cite{su2016multi,sundaram2018distributed} impose a much stronger assumption that the subgraph of $\cG_{\mathsf{R}}$ of the regular workers remain connected even after additionally removing any $|\bset|$ number of edges.
    For example, the graph in Fig.~\ref{fig:impossibility} with 1 Byzantine worker $V_1$ satisfies \Cref{a:connectivity} but does not satisfy their assumption as removing an additional edge at $A_1$ or $B_1$ may discard the graph cut.
\end{remark}

In decentralized learning, each regular worker $i\in\gset$ locally stores a vector $\{\mW_{ij}\}_{j=1}^n$ of mixing weights, for how to aggregate model updates received from neighbors.
We make the following assumption on the weight vectors.
\begin{assumption}[Mixing weights]\label{a:W} The weight vectors on regular workers satisfy the following properties:
    \begin{itemize}[nosep,leftmargin=2em]
        \item Each regular worker $i\in\gset$ stores non-negative $\{\mW_{ij}\}_{j=1}^n$ with $\mW_{ij}>0$ iff $j\in\overline{\cN}_i$;
        \item The adjacent weights to each regular worker $i\in\gset$ sum up to 1, i.e. $\sum_{j=1}^n\mW_{ij}=1$;
        \item For $i,j\in\gset$, $\mW_{ij}=\mW_{ji}$.
    \end{itemize}
\end{assumption}
We can construct such weights even in the presence of Byzantine workers, using algorithms that only rely on communication with local neighbors, e.g. Metropolis-Hastings \citep{hastings1970monte}. We defer details of the construction to \S~\ref{ssec:construct_mixing_matrix}.
Note that the Byzantine workers $\bset$ might also obtain such weights, however, they can use arbitrary different weights in reality during the training.

We define $\delta_i:=\sum_{j\in\bset}\mW_{ij}$ to be the total weight of adjacent Byzantine edges around a regular worker~$i$, and define the maximum Byzantine weight as $\delta_{\max}:=\max_{i\in\gset}\delta_i$.
\begin{remark}
    In the decentralized setting, the total fraction of Byzantine nodes $|\bset|/n$ is irrelevant. Instead, what matters is the fraction of the edge weights they control which are adjacent to regular nodes (as defined by $\delta_i$ and $\delta_{\max}$). This is because a Byzantine worker can send different messages along each edge. Thus, a single Byzantine worker connected to all other workers with large edge weights can have a large influence on all the other workers. Similarly, a potentially very large number of Byzantine workers may overall have very little effect---if the edges they control towards good nodes have little weight. When we have a uniform fully connected graph (such as in the centralized setting), the two notions of bad nodes \& edges become equivalent.\vspace{-1mm}
\end{remark}

To facilitate our analysis of convergence rate, we define a \textit{hypothetical} mixing matrix $\widetilde{\mW}\in\R^{(n-b)\times(n-b)}$ for the subgraph $\mathcal{G}_{\mathcal{R}}$ of regular workers with entry $i,j\in\gset$ defined as\vspace{-1mm}
\begin{equation}
    \widetilde{\mW}_{ij} = \begin{cases}
        \mW_{ij}            & \text{if } i\neq j \\
        \mW_{ii} + \delta_i & \text{if } i = j.
    \end{cases}
\end{equation}
By the construction of this hypothetical matrix $\widetilde{\mW}$, the following property directly follows.
\begin{lemma}\label{corollary:W}\vspace{-0.3em}
    Given \Cref{a:W}, then $\widetilde{\mW}$ is symmetric and doubly stochastic, i.e.
    $$\widetilde{\mW}_{ij}=\widetilde{\mW}_{ji}, ~\tsum_{i=1}^n \widetilde{\mW}_{ij}=1, ~\tsum_{j=1}^n \widetilde{\mW}_{ij}=1.\qquad \forall i,j\in[n\rebuttal{-b}] $$
\end{lemma}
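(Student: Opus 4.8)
The plan is to check the three asserted properties one at a time. Symmetry is essentially inherited from $\mW$, and once the row sums are shown to equal one, double stochasticity follows immediately from symmetry. The only point requiring care is the bookkeeping over index sets: $\widetilde{\mW}$ is indexed by the regular workers $\gset$ only, whereas the row-normalization in \Cref{a:W} is a sum over all $n$ workers, so I must account for the weight each regular worker sends toward its Byzantine neighbors.

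First I would establish symmetry. For $i,j\in\gset$ with $i\neq j$, the definition gives $\widetilde{\mW}_{ij}=\mW_{ij}$ and $\widetilde{\mW}_{ji}=\mW_{ji}$, and these are equal by the third bullet of \Cref{a:W}; the diagonal entries trivially match themselves. Next, to verify the row sums, I would fix $i\in\gset$ and split the full-row identity $\sum_{j=1}^n\mW_{ij}=1$ according to whether $j$ is regular or Byzantine. The Byzantine part is exactly $\delta_i$ by definition, so $\sum_{j\in\gset}\mW_{ij}=1-\delta_i$. Adding the diagonal correction then yields
$$\sum_{j\in\gset}\widetilde{\mW}_{ij}=\delta_i+\sum_{j\in\gset}\mW_{ij}=\delta_i+(1-\delta_i)=1.$$
Intuitively, the term $\delta_i$ placed on the diagonal restores precisely the mass that ``leaks away'' to Byzantine workers when we restrict to the reduced index set. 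Finally, since $\widetilde{\mW}$ is symmetric, its column sums coincide with its row sums, so each column also sums to one.

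I do not expect a genuine obstacle: the statement is a short verification. The one thing worth stating explicitly is that symmetry of $\mW$ is assumed (and used) only among regular workers, which is all that is needed here, and that the choice $\delta_i=\sum_{j\in\bset}\mW_{ij}$ is exactly what makes the diagonal correction cancel the off-diagonal weight lost to Byzantine nodes.
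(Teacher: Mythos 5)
Your verification is correct and is exactly the routine check the paper leaves implicit when it says the property ``directly follows'' from the construction: symmetry off the diagonal comes from the third bullet of Assumption 2, and the diagonal correction $\delta_i=\sum_{j\in\bset}\mW_{ij}$ is defined precisely so that the row sum over $\gset$ recovers $1$, with column sums then following by symmetry. No gaps.
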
\vspace{-0.5em}
Further, the spectral gap of the matrix $\widetilde{\mW}$ is positive.
\begin{lemma}\label{lemma:p}\vspace{-0.3em}
    By \Cref{a:connectivity} and \Cref{a:W}, there exists $\gamma\in(0, 1]$ such that $\forall~\xx\in\R^{n-b}$ and $\bar{\xx}=\frac{\1^\top\xx}{n-b}\1\in\R^{n-b}$
    \vspace{-0.5em}
    \begin{equation}
        \label{eq:consensus}
        \norm{\widetilde{\mW}\xx - \bar{\xx}}_2 \le (1-\gamma)\norm{\xx-\bar{\xx}}_2.
        \vspace{-0.5em}
    \end{equation}
\end{lemma}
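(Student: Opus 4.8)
The plan is to read off the contraction factor directly from the spectrum of $\widetilde{\mW}$. By \Cref{corollary:W} the matrix $\widetilde{\mW}$ is real, symmetric and doubly stochastic, so it admits an orthonormal eigenbasis with real eigenvalues $1 = \lambda_1 \ge \lambda_2 \ge \cdots \ge \lambda_{n-b} \ge -1$, where the top eigenvalue $1$ is attained at the normalized all-ones vector $\1/\sqrt{n-b}$ (double stochasticity gives $\widetilde{\mW}\1 = \1$, and Gershgorin/Perron--Frobenius force all $|\lambda_i|\le 1$). The vector $\bar{\xx} = \frac{\1^\top\xx}{n-b}\1$ is precisely the orthogonal projection of $\xx$ onto $\mathrm{span}(\1)$, and since $\widetilde{\mW}\1=\1$ we have $\widetilde{\mW}\bar{\xx}=\bar{\xx}$. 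Therefore
\begin{equation*}
\widetilde{\mW}\xx - \bar{\xx} = \widetilde{\mW}\xx - \widetilde{\mW}\bar{\xx} = \widetilde{\mW}(\xx-\bar{\xx}),
\end{equation*}
and the residual $\xx-\bar{\xx}$ lives in the orthogonal complement $\1^\perp$, i.e. the span of the eigenvectors for $\lambda_2,\dots,\lambda_{n-b}$. Restricted to this invariant subspace the spectral norm of $\widetilde{\mW}$ equals $\rho := \max_{i\ge 2}|\lambda_i|$, so $\norm{\widetilde{\mW}\xx-\bar{\xx}}_2 \le \rho\,\norm{\xx-\bar{\xx}}_2$, and it remains only to show $\rho<1$, after which we set $\gamma := 1-\rho \in (0,1]$.

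Showing $\rho<1$ is the crux, and it splits into ruling out the two ways $|\lambda_i|$ could reach $1$ for $i\ge 2$: the eigenvalue $+1$ having multiplicity greater than one, and $-1$ appearing in the spectrum. For the former I would invoke \Cref{a:connectivity}: if $\widetilde{\mW}\vv=\vv$, pick a coordinate $k$ where $v_k$ is maximal; the averaging identity $v_k = \sum_j \widetilde{\mW}_{kj}v_j$ together with $\sum_j \widetilde{\mW}_{kj}=1$ forces $v_j=v_k$ for every $j$ with $\widetilde{\mW}_{kj}>0$, i.e. for every neighbor of $k$ in $\cG_{\mathsf{R}}$, and connectivity propagates equality to all coordinates, so $\vv\in\mathrm{span}(\1)$ and $\lambda_1=1$ is simple. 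For the latter I would exploit that $\widetilde{\mW}$ has a strictly positive diagonal: by \Cref{a:W} each $\mW_{ii}>0$ (since $i\in\overline{\cN}_i$), hence $\widetilde{\mW}_{ii}=\mW_{ii}+\delta_i>0$. If $\widetilde{\mW}\vv=-\vv$, choosing $k$ with $|v_k|$ maximal and (negating if needed) $v_k>0$, the bound $-v_k = \widetilde{\mW}_{kk}v_k + \sum_{j\ne k}\widetilde{\mW}_{kj}v_j \ge \widetilde{\mW}_{kk}v_k - (1-\widetilde{\mW}_{kk})v_k = (2\widetilde{\mW}_{kk}-1)v_k$ yields $\widetilde{\mW}_{kk}\le 0$, contradicting positivity. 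Hence every $\lambda_i$ with $i\ge 2$ satisfies $-1<\lambda_i<1$, giving $\rho<1$.

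The main obstacle is the second part of the spectral argument: the strict positivity of the diagonal is exactly what excludes the $-1$ eigenvalue, since a bipartite regular subgraph with zero self-weights would be periodic and admit $\lambda=-1$, destroying the contraction. The argument must therefore genuinely use the self-weight condition of \Cref{a:W}, not connectivity alone; everything else is the standard spectral-gap bookkeeping for symmetric doubly stochastic matrices and is routine.
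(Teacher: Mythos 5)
Your proof is correct and is exactly the standard spectral argument that the paper implicitly relies on: the paper states Lemma~\ref{lemma:p} without proof, treating it as a known consequence of $\widetilde{\mW}$ being symmetric and doubly stochastic (Lemma~\ref{corollary:W}) together with connectivity of $\cG_{\mathsf{R}}$. Your two non-routine steps — using connectivity to make the eigenvalue $1$ simple, and using the strictly positive diagonal $\widetilde{\mW}_{ii}=\mW_{ii}+\delta_i>0$ (which holds since $i\in\overline{\cN}_i$ forces $\mW_{ii}>0$ under \Cref{a:W}) to exclude $-1$ — are both sound and correctly identify where each assumption is needed.
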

The $\gamma(\widetilde{\mW})$ is the \textit{spectral gap} of the subgraph of regular workers  $\cG_{\mathsf{R}}$. We have $\gamma = 0$ if and only if $\cG_{\mathsf{R}}$ is disconnected, and $\gamma = 1$ if and only if $\cG_{\mathsf{R}}$ is fully connected.

In summary, $\gamma$ measures the connectivity of the regular subgraph $\cG_{\mathsf{R}}$ formed after removing the Byzantine nodes, whereas $\delta_i$ and $\delta_{\max}$ are a measure of the influence of the Byzantine nodes.
\vspace{-0.5em}
\subsection{Optimization assumptions}\vspace{-0.5em}
We study the general distributed optimization problem
\begin{equation}
    \textstyle
    \min_{\xx\in\R^d} f(\xx) \!:=\! \frac{1}{|\gset|} \sum_{i\in\gset} \big\{f_i(\xx) \!:=\! \E_{\xi_i\sim\cD_i} F_i(\xx; \xi_i)\big\}
\end{equation}
on heterogeneous (non-IID) data,
where $f_i$ is the local objective on worker $i$ with data distribution~$\cD_i$ and independent noise $\xi_i$. We assume that the gradients computed over these data distributions satisfy the following standard properties.
\begin{assumption}[Bounded noise and heterogeneity]\label{a:noise} Assume that for all $i\in \gset$ and $\xx\in\R^d$, we have
    \begin{equation}\label{eq:sigma}
        \E_{\xi\sim\cD_i}\norm{\nabla F_i(\xx; \xi) - \nabla f_i(\xx)}^2 \le \sigma^2,
        \qquad
        \E_{j\sim \gset} \norm{\nabla f_j(\xx) - \nabla f(\xx)}^2 \le \zeta^2.
    \end{equation}
\end{assumption}
\begin{assumption}[L-smoothness]\label{a:L}
    For $i\in\gset$, $f_i(\xx):\R^d\rightarrow \R$ is differentiable and there exists a constant s$L\ge0$ such that for each $\xx,\yy\in\R^d$:
    \begin{equation}\label{eq:L}
        \norm{\nabla f_i(\xx) - \nabla f_i(\yy)} \le L \norm{\xx-\yy}.
    \end{equation}
\end{assumption}
\vspace{-3mm}
We denote $\xx^{t}_i\in\R^d$ as the state of worker $i\in\gset$ at time $t$.

\vspace{-2mm}
\section{Robust Decentralized Consensus}\label{sec:proposed_attacks}
\vspace{-2mm}
Agreeing on one value (\textit{consensus}) among regular workers is one of the fundamental questions in distributed computing.
\textit{Gossip averaging} is a common consensus algorithm in the Byzantine-free case ($\delta=0$). Applying gossip averaging steps iteratively to all nodes formally writes as
\begin{equation}\label{eq:gossip}\tag{\textsc{Gossip}}
    \xx_i^{t+1}:=\tsum_{j=1}^n \mW_{ij} \xx_j^t, \qquad t=0,1,\ldots
\end{equation}
Suppose each worker $i\in[n]$ initially owns a different $\xx_i^0$ and \Cref{a:connectivity} and \Cref{a:W} hold true, then each worker's iterate $\xx_i^t$ asymptotically converges to $\xx_i^{\infty}=\bar{\xx}= \frac{1}{n}\sum_{j=1}^n \xx_j^{0}$, for all $i\in[n]$, which is also known as average consensus~\citep{boyd2006randomized}.
%
Reaching consensus in the presence of Byzantine workers is more challenging, with a long history of study~\citep{leblanc2013resilient,su2016multi}.

\subsection{The Clipped Gossip algorithm}
\vspace{-2mm}

We introduce a novel decentralized gossip-based aggregator, termed \algo, for Byzantine-robust consensus. \algo uses its local reference model as center and clips all received neighbor model weights. Formally, for $\textsc{Clip}(\zz, \tau) := \min(1, \tau/\norm{\zz}) \cdot \zz$, we define for node $i$
\begin{equation}\label{eq:ClippedGossip} \tag{\textsc{ClippedGossip}}
    \xx_i^{t+1} :=\ \tsum_{\!j=\!1}^n\!\mW_{ij}\!(\xx_i^t \!+\!\textsc{Clip}(\xx_j^t\!-\!\xx_i^t, \tau_i) ), \qquad t=0,1,\ldots
\end{equation}
\begin{theorem}\label{theorem:consensus}
    Let $\bar{\xx}^t:=\tfrac{1}{|\gset|}\tsum_{i\in\gset}\xx_i^t$ be the average iterate over the unknown set of regular nodes.
    If the initial consensus distance is bounded as
    $\textstyle \frac{1}{|\gset|}\sum_{i\in\gset}\expect\norm{\xx_i^t - \bar{\xx}^t}^2\le\rho^2,$
    then for all $i\in\gset$, the output ${\xx}_i^{t+1}$ of \algo with an appropriate choice of clipping radius satisfies
    $$\textstyle \frac{1}{|\gset|}\sum_{i\in\gset}\expect\norm{\xx_i^{t+1} - \bar{\xx}^t}^2 \le \big(1-\gamma+c \sqrt{\delta_{\max}}\big)^2 \rho^2\qquad \text{and } \expect\norm{\bar\xx^{t+1} - \bar\xx^t}^2 \leq c^2\delta_{\max}\rho^2$$
    where the expectation is over the random variable $\{\xx_i^t\}_{i\in\gset}$ and $c>0$ is a constant.
\end{theorem}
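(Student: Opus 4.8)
The plan is to view one step of \algo as one step of the hypothetical linear gossip driven by $\widetilde{\mW}$, plus a perturbation caused by clipping honest messages and by the Byzantine messages. Writing the regular-node update as $\xx_i^{t+1}=\xx_i^t+\sum_{j=1}^n\mW_{ij}\textsc{Clip}(\xx_j^t-\xx_i^t,\tau_i)$ and splitting the sum over $j\in\gset$ and $j\in\bset$, I would use $\widetilde{\mW}_{ii}=\mW_{ii}+\delta_i$ to match the honest part to $(\widetilde{\mW}\xx^t)_i$ up to a clipping defect, and treat the Byzantine part as pure error. Concretely, set $\varepsilon_i^{\mathsf{R}}:=\sum_{j\in\gset}\mW_{ij}\big(\textsc{Clip}(\xx_j^t-\xx_i^t,\tau_i)-(\xx_j^t-\xx_i^t)\big)$ and $\varepsilon_i^{\mathsf{B}}:=\sum_{j\in\bset}\mW_{ij}\textsc{Clip}(\xx_j^t-\xx_i^t,\tau_i)$, so that $\xx_i^{t+1}=(\widetilde{\mW}\xx^t)_i+\varepsilon_i^{\mathsf{R}}+\varepsilon_i^{\mathsf{B}}$.

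Since $\widetilde{\mW}$ is doubly stochastic by \Cref{corollary:W}, the regular average is preserved, $\frac{1}{|\gset|}\sum_{i\in\gset}(\widetilde{\mW}\xx^t)_i=\bar{\xx}^t$, and \Cref{lemma:p} gives the contraction $\frac{1}{|\gset|}\sum_{i\in\gset}\norm{(\widetilde{\mW}\xx^t)_i-\bar{\xx}^t}^2\le(1-\gamma)^2\rho^2$. I would then apply a triangle (Minkowski) inequality in the root-mean-square sense over $i$ together with the expectation, obtaining $\sqrt{\tfrac{1}{|\gset|}\sum_i\E\norm{\xx_i^{t+1}-\bar{\xx}^t}^2}\le(1-\gamma)\rho+\sqrt{\tfrac{1}{|\gset|}\sum_i\E\norm{\varepsilon_i^{\mathsf{R}}+\varepsilon_i^{\mathsf{B}}}^2}$. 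The whole task reduces to bounding the averaged error by $c\sqrt{\delta_{\max}}\,\rho$; squaring then yields the first claim with factor $(1-\gamma+c\sqrt{\delta_{\max}})^2$.

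For the error I would take a \emph{uniform} radius $\tau_i=\tau\asymp\rho/\sqrt{\delta_{\max}}$. The Byzantine part is immediate: each clipped term has norm at most $\tau$, so $\norm{\varepsilon_i^{\mathsf{B}}}\le\delta_i\tau\le\delta_{\max}\tau=O(\sqrt{\delta_{\max}}\,\rho)$, which is the balance point that makes injecting $b$ messages of size $\tau$ cost only $\sqrt{\delta_{\max}}\,\rho$. The honest defect uses $\norm{\textsc{Clip}(\zz,\tau)-\zz}=(\norm{\zz}-\tau)_+\le \norm{\zz}^2/(2\tau)$; because $\tau\gtrsim\rho/\sqrt{\delta_{\max}}\ge\rho$ exceeds the typical pairwise distance $\norm{\xx_j^t-\xx_i^t}=O(\rho)$ guaranteed by the consensus bound, this defect only fires in the tail and its averaged square should come out at $O(\delta_{\max}\rho^2)$. \textbf{This honest-clipping term is the main obstacle}: it is genuinely nonlinear, and controlling $\frac{1}{|\gset|}\sum_i\E\norm{\varepsilon_i^{\mathsf{R}}}^2$ at the $\delta_{\max}\rho^2$ scale using only the second-moment consensus bound (rather than a higher-moment/almost-sure bound) is what forces both the dedicated clipping inequality above and the precise choice of $\tau$ that keeps $\delta_i\tau$ and the honest defect on the same $\sqrt{\delta_{\max}}\,\rho$ budget.

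Finally, for the drift of the average I would exploit symmetry. Since $\bar{\xx}^{t+1}-\bar{\xx}^t=\frac{1}{|\gset|}\sum_{i\in\gset}\sum_{j=1}^n\mW_{ij}\textsc{Clip}(\xx_j^t-\xx_i^t,\tau)$, the uniform radius together with $\mW_{ij}=\mW_{ji}$ for $i,j\in\gset$ and the oddness $\textsc{Clip}(-\zz,\tau)=-\textsc{Clip}(\zz,\tau)$ makes the regular--regular double sum cancel in pairs, leaving only $\frac{1}{|\gset|}\sum_{i\in\gset}\varepsilon_i^{\mathsf{B}}$. Jensen's inequality and $\norm{\varepsilon_i^{\mathsf{B}}}\le\delta_i\tau$ then give $\E\norm{\bar{\xx}^{t+1}-\bar{\xx}^t}^2\le\frac{1}{|\gset|}\sum_i\delta_i^2\tau^2\le\delta_{\max}^2\tau^2=O(\delta_{\max}\rho^2)$, which is exactly the second claim; this cancellation is also what justifies choosing $\tau$ uniformly across regular workers.
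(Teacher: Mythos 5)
Your skeleton is the same as the paper's: write $\xx_i^{t+1}$ as the hypothetical $\widetilde{\mW}$-gossip step plus an error from clipping honest messages and from Byzantine messages, contract the gossip part with \Cref{lemma:p}, and recombine (your root-mean-square Minkowski step is equivalent to the paper's Young's inequality with an optimized $\epsilon$, both giving the $(1-\gamma+c\sqrt{\delta_{\max}})^2$ factor). The place where you diverge -- and where your proposal has a genuine gap -- is exactly the term you flag as ``the main obstacle.'' With a \emph{uniform deterministic} radius $\tau\asymp\rho/\sqrt{\delta_{\max}}$ and only the second-moment hypothesis $\frac{1}{|\gset|}\sum_i\E\norm{\xx_i^t-\bar\xx^t}^2\le\rho^2$, the honest defect $\frac{1}{|\gset|}\sum_i\E\norm{\varepsilon_i^{\mathsf{R}}}^2$ does not come out at $O(\delta_{\max}\rho^2)$: after Cauchy--Schwarz you must control $\E\big[(\norm{\xx_j^t-\xx_i^t}-\tau)_+^2\big]$, and the only second-moment bound available is $(a-\tau)_+^2\le a^2\,\mathbb{1}[a>\tau]\le a^2$, which yields $O(\rho^2)$, not $O(\delta_{\max}\rho^2)$; pushing it lower needs fourth-moment or almost-sure control of the consensus distances, which is not assumed. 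So the step you describe as ``should come out at $O(\delta_{\max}\rho^2)$'' is not actually established, and it is the crux of the theorem.

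The paper closes this by making the radius \emph{per-node and expectation-based}: $\tau_i=\sqrt{\tfrac{1}{\delta_i}\sum_{j\in\gset}\mW_{ij}\E\norm{\xx_i-\xx_j}^2}$ (\Cref{lemma:e2} with $\eta=0$). The pointwise inequality $\norm{\zz_{j\to i}-\xx_j}\le\norm{\xx_i-\xx_j}^2/\tau_i$ then makes the honest defect and the Byzantine injection \emph{both} equal to $\delta_i\tau_i$ per node, so the two error sources balance exactly and $e_2^{t+1}\le 32\,\delta_{\max}\rho^2$ follows by expanding $\norm{\xx_i-\xx_j}^2$ around $\bar\xx^t$. (Note that even there the write-up moves the expectation inside the square of $\sum_j\mW_{ij}\norm{\cdot}$, a step you would have to justify or reproduce if you formalize your version; it is not a consequence of Jensen in the direction used.) Two further remarks. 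First, your symmetry-cancellation argument for $\E\norm{\bar\xx^{t+1}-\bar\xx^t}^2$ is appealing but is tied to the radius being uniform across workers: with per-node radii, $\mW_{ij}\textsc{Clip}(\xx_j-\xx_i,\tau_i)$ and $\mW_{ji}\textsc{Clip}(\xx_i-\xx_j,\tau_j)$ no longer cancel. The paper instead gets the second claim directly from $\bar\xx^{t+1}-\bar\xx^t=\Delta^{t+1}$ and $\E\norm{\Delta^{t+1}}^2\le e_2^{t+1}$, which is robust to non-uniform radii. Second, the theorem's phrase ``an appropriate choice of clipping radius'' is precisely the adaptive $\tau_i$ of \eqref{eq:tau_consensus}; any proof that fixes a single $\tau$ a priori has to re-derive the balance you correctly identify as delicate.
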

We inspect \Cref{theorem:consensus} on corner cases.
If regular workers have already reached consensus before aggregation ($\rho=0$), then Theorem~\ref{theorem:consensus} shows that we retain consensus even in the face of Byzantine agents. In this case, we can use a simple majority, which corresponds to setting clipping threshold $\tau_i =0$.
Further, if there is no Byzantine worker ($\delta_{\max}\!=\!0$), then the robust aggregator must improve the consensus distance by a factor of $(1-\gamma)^2$ which matches standard gossiping analysis~\citep{boyd2006randomized}. Finally, for the complete graph ($\gamma\!=\!1$) \algo satisfies the centralized notion of ($\delta_{\max}$, $c^2$)-robust aggregator in \citep[Definition C]{Karimireddy2021LearningFH}. Thus, \algo recovers all past optimal aggregation methods as special cases.

Note that if the topology is poorly connected and there are Byzantine attackers with ($\gamma < c\sqrt{\delta_{\max}}$), then \Cref*{theorem:consensus} gives no guarantee that the consensus distance will reduce after aggregation. This is unfortunately not possible to improve upon, as we will show in the following \S~\ref{ssec:info_bn}---if the connectivity is poor then the effect of Byzantine workers can be significantly amplified.

\vspace{-2mm}
\subsection{Lower bounds due to communication constraints}\label{ssec:info_bn}
\vspace{-2mm}
Not all pairs of workers have direct communication links due to constraints such as physical distances in a sensor network.
It is common that a subset of sensors are clustered within a small physical space while only few of them have communication links to the rest of the sensors. Such links form a cut-set of the communication topology and are crucial for information diffusion.
On the other hand, attackers can increase consensus errors in the presence of these critical links.  

\begin{figure}[!t]
    \vspace{-1em}
    \begin{minipage}[t]{0.5\textwidth}
        \centering
        {
            \includegraphics[page=10,width=.9\linewidth,clip,trim=4.3cm 9.25cm 26.1cm 7.9cm]{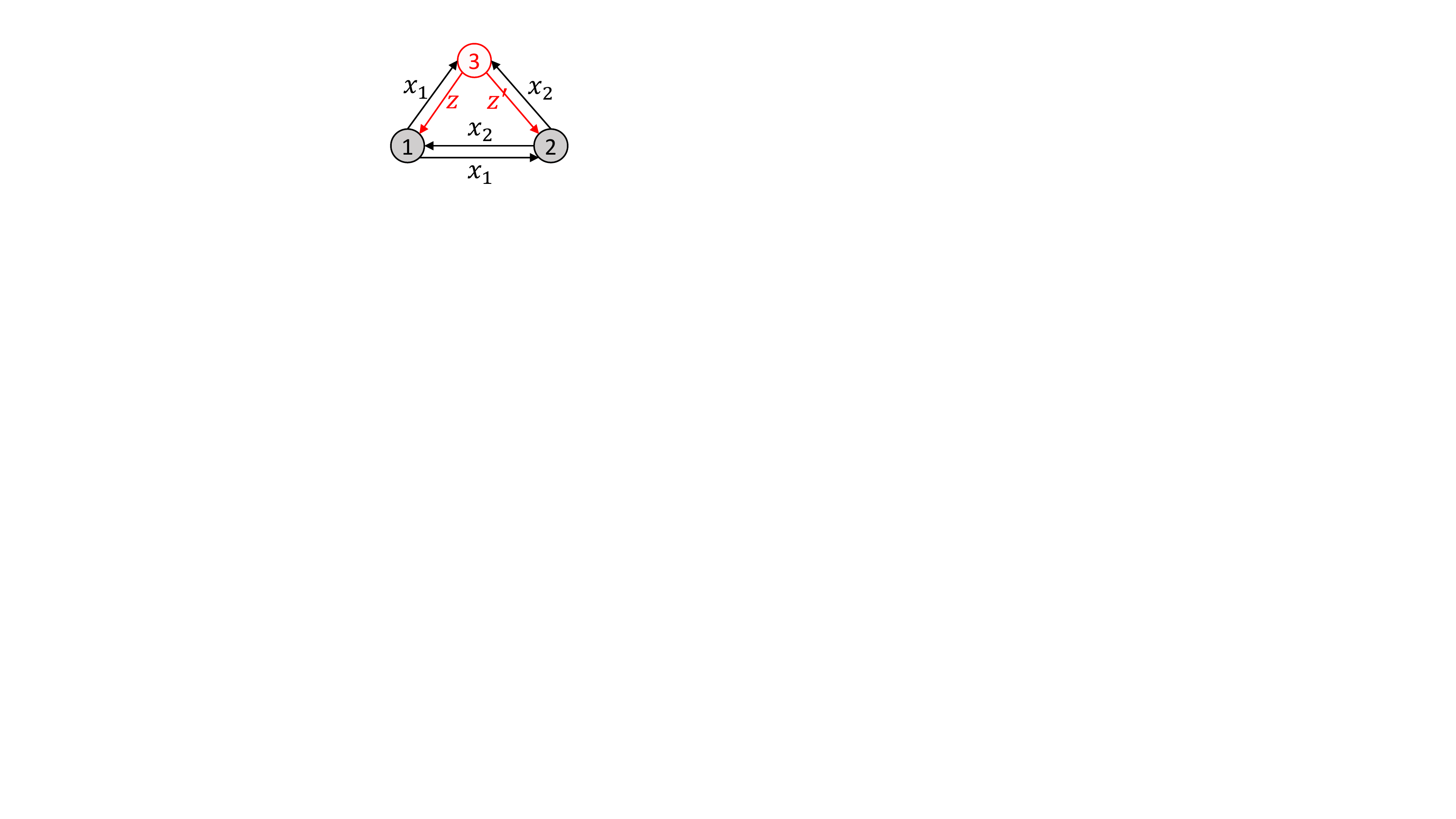}
        }
        \vspace{-2mm}
        \caption{
            A dumbbell topology of two cliques A and B of regular workers connected by an edge (graph cut). Byzantine workers ({\color{red} red}) may attack the graph at different places.
        }
        \label{fig:impossibility}
    \end{minipage}
    ~
    \begin{minipage}[t]{0.5\textwidth}
        \centering
        \includegraphics[width=.9\linewidth]{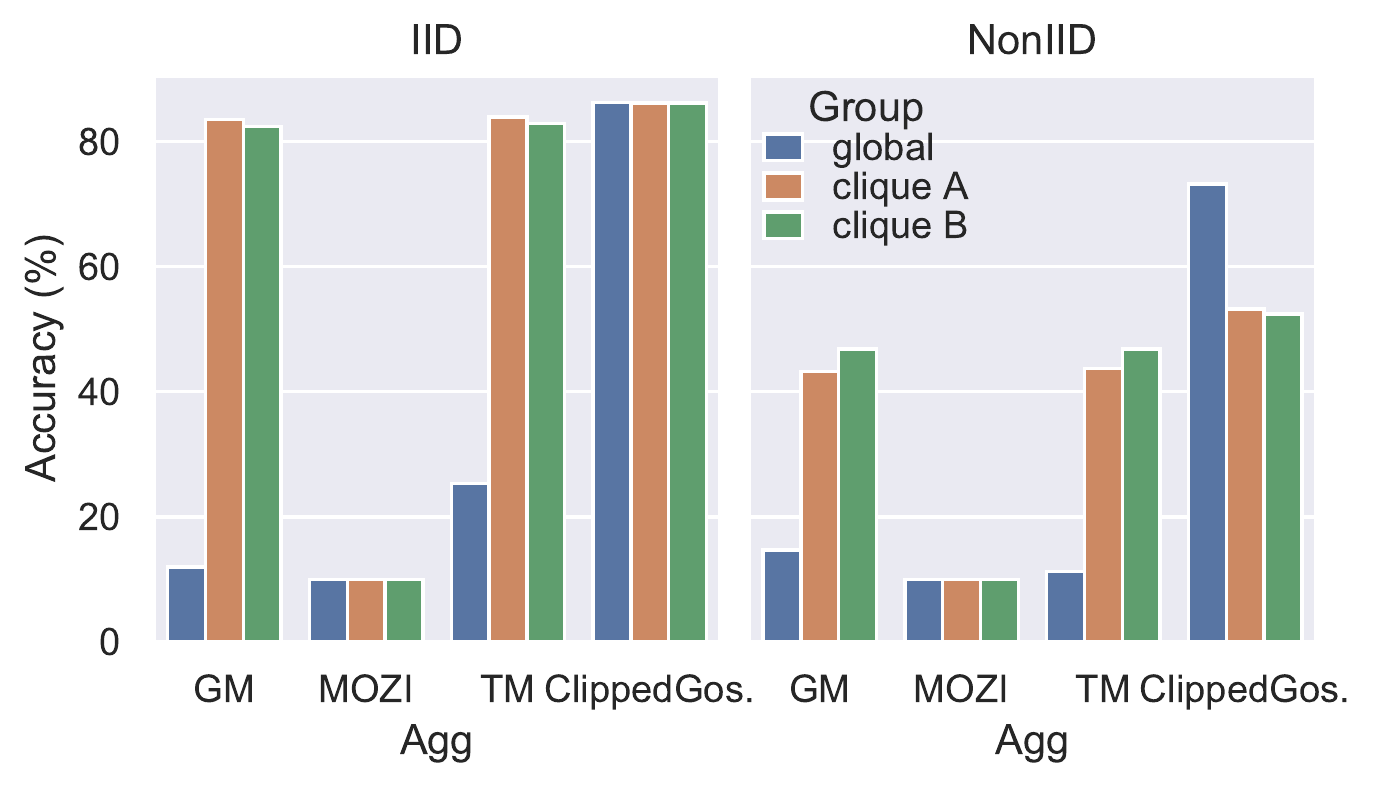}\vspace{-2mm}
        \caption{Accuracies of models trained with robust aggregators over dumbbell topology and CIFAR-10 dataset ($\delta=0$). The models are averaged within clique A, B, or all regular workers
            separately.
        }
        \label{fig:CIFAR_bar}
    \end{minipage}
\end{figure}

\begin{theorem}\label{thm:impossibility}
    Consider networks satisfying \Cref{a:connectivity} of $n$ nodes, each holding a number in $\{0,1\}$, and only $\cO({1}/{n^2})$ of the edges are adjacent to attackers. For any robust consensus algorithm $\cA$, there exists a network such that the output of $\cA$ has an average consensus error of at least $\Omega(1)$.\vspace{-1em}
\end{theorem}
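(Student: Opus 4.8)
The plan is to prove this by an indistinguishability (two‑worlds) argument on a symmetric ``dumbbell'' network, exhibiting two executions that are \emph{identical} from the viewpoint of a constant fraction of regular workers yet have true consensus values differing by a constant. First I would fix the bad network: two cliques $A$ and $B$, each on $m\approx n/2$ regular nodes, joined only through two ``gateway'' nodes $g_1,g_2$, where each $g_k$ is adjacent to a single node $A_1\in A$ and a single node $B_1\in B$. Exactly one gateway is Byzantine in each scenario; since deleting either gateway still leaves the other one bridging $A$ and $B$, the regular subgraph $\cG_{\mathsf{R}}$ stays connected and \Cref{a:connectivity} holds in both scenarios. The two cliques contribute $\Theta(m^2)=\Theta(n^2)$ edges while the attacker touches only its two gateway edges, so the fraction of edges adjacent to attackers is $\cO(1/n^2)$, as required.

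Next I would set up the two worlds. In both, every node of $A$ holds $0$; in World~1 every node of $B$ holds $0$ and $g_2$ is Byzantine, while in World~2 every node of $B$ holds $1$ and $g_1$ is Byzantine. The true averages of the regular inputs are then $0$ and $\tfrac12$, so the correct consensus outputs differ by $\tfrac12$. The Byzantine strategy is a mutual replay: along its edge to $A_1$, the Byzantine gateway in each world transmits exactly the messages that the \emph{honest} gateway of the \emph{other} world sends to $A_1$. Concretely, writing $h_k^t$ for the round-$t$ message that the honest $g_k$ sends to $A_1$ in the world where $g_k$ is regular, I would have the Byzantine $g_2$ in World~1 replay $h_2^t$ and the Byzantine $g_1$ in World~2 replay $h_1^t$. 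Then in \emph{both} worlds $A_1$ receives precisely the pair $(g_1\!\to h_1^t,\ g_2\!\to h_2^t)$ at every round.

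The core step is an induction on the round $t$ showing that the entire state of clique $A$ coincides in the two worlds. At $t=0$ all of $A$ holds $0$ in both. For the inductive step, every node of $A$ updates from messages received within $A$ (equal by the hypothesis that $A$'s states agree) and, for $A_1$, from the two gateway messages, which are the common pair $(h_1^t,h_2^t)$ by the replay construction; hence $A$'s states again agree after the update. The quantities $h_1^t,h_2^t$ are themselves well defined by running the two executions in lockstep: each honest gateway evolves from its own (world-specific) neighbors, and each Byzantine gateway only needs the round-$t$ honest-gateway value of the opposite world, which is available because Byzantine workers are assumed to know the algorithm and the whole system and can simulate both runs simultaneously. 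For a randomized $\cA$ I would couple the random bits of $A$ across the two worlds, which is legitimate precisely because $A$'s views are identical, and argue in expectation.

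Having made clique $A$ produce the same output $y$ in both worlds, I would conclude by the constant gap: if $\lvert y\rvert\ge\tfrac14$ then World~1 (target $0$) already incurs average consensus error $\ge \tfrac12\cdot(\tfrac14)^2$, and otherwise $\lvert y-\tfrac12\rvert>\tfrac14$ forces the same bound in World~2 (target $\tfrac12$), since clique $A$ is a constant (about one half) fraction of the regular nodes; either way some admissible network forces $\Omega(1)$ error. I expect the main obstacle to be making the mutual-replay simulation rigorous: the Byzantine behaviour in each world is defined in terms of the honest trajectory of the other world, which is circular, and the fix is the simultaneous round-by-round induction above (knowledge of the full system lets each Byzantine gateway reproduce the foreign honest messages). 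A secondary subtlety is pinning down the exact notion of ``consensus error'' and handling randomization by coupling, but neither affects the constant-gap conclusion.
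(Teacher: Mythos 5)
Your proposal is correct and takes essentially the same route as the paper: a two-world indistinguishability argument on a dumbbell topology in which clique $A$ receives, in both worlds, one stream of messages consistent with ``$B$ holds $0$'' and one consistent with ``$B$ holds $1$'', so it cannot avoid $\Omega(1)$ error in at least one world while the attacker touches only $\cO(1/n^2)$ of the edges. Your version merely makes the paper's sketch rigorous (explicit round-by-round induction to break the mutual-simulation circularity, coupling for randomized $\cA$, and an explicit constant-gap calculation), with the cosmetic difference that the paper uses a single Byzantine node simulating a phantom clique whereas you use two gateways with a cross-world replay.
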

\begin{proof}
    Consider two cliques A and B with $n$ nodes each connected by an edge to each other and to a Byzantine node $V_2$, c.f. Fig.~\ref{fig:impossibility}. Suppose that we know all nodes have values in $\{0,1\}$. Let all nodes in A have value 0. Now consider two settings:

    \textbf{World 1.} All B nodes have value 0. However, Byzantine node $V_2$ pretends to be part of a clique identical to B which it \emph{simulates}, except that all nodes have value 1. The true consensus average is~0.\\
    \textbf{World 2.} All B nodes have value 1. This time the Byzantine node $V_2$ simulates clique B with value 0. The true consensus average here is 0.5.

    From the perspective of clique A, the two worlds are identical--it seems to be connected to one clique with value 0 and another with value 1. Thus, it must make $\Omega(1)$ error at least in one of the worlds. This proves that 
    consensus is impossible in this setting.   \vspace{-1em}
\end{proof}


While arguments above are similar to classical lower bounds in decentralized consensus which show we need $\delta \leq 1/3$ \citep{fischer1986easy}, in our case there is only~1 Byzantine node (out of $2n+1$ regular nodes) which controls only 2 edges i.e. $\delta = \cO(1/n^2)$. This impossibility result thus drives home the additional impact through the restricted communication topology. Further, past impossibility results about robust decentralized consensus such as \citep{sundaram2018distributed,su2016multi} use combinatorial concepts such as the number of node-disjoint paths between the good nodes. However, such notions cannot account for the edge weights easily and cannot give finite-time convergence guarantees. Instead, our theory shows that the ratio of $\delta_{\max}/\gamma^2$ accurately captures the difficulty of the problem. We next verify this empirically.

In Fig.~\ref{fig:epsilon:spectral-gap}, we show the final consensus error of three defenses under Byzantine attacks.
\textsc{TM} and \textsc{Median} have a large error even for small $\delta_{\max}$ and large~$\gamma$.
The consensus error of \algo increases almost linearly with $\delta_{\max}/\gamma^2$. However, this phenomenon is not observed by looking at $\gamma^{-2}$ or $\delta_{\max}$ alone, validating our theoretical analysis in \Cref{theorem:consensus}.
Details are deferred to \S~\ref{ssec:consensus_attacks}.

\begin{figure}[t]
    \centering
    \includegraphics[width=.5\linewidth]{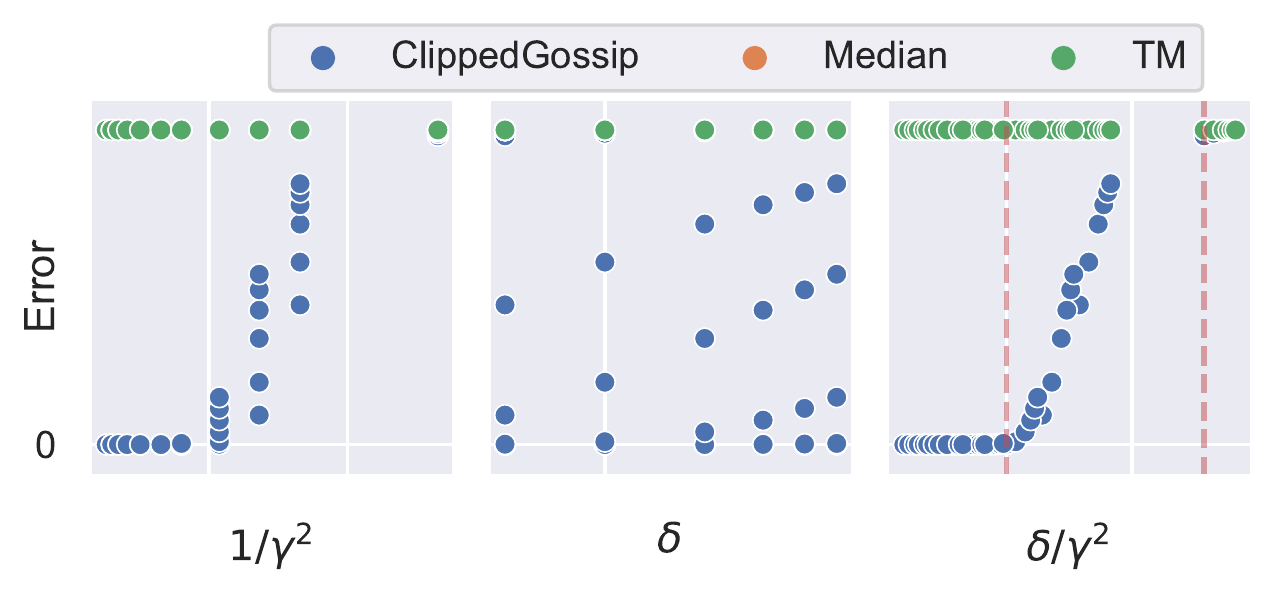}
    \caption{
        Performance of \algo and baselines (\textsc{TM} and \textsc{Median}) under Byzantine attacks with varying~$\gamma$ and~$\delta_{\max}$. Each point represents the squared average consensus error of the last iterate of an algorithm.
        \textsc{Median} and \textsc{TM} have identical performance and \algo is consistently better. Further, the performance of \algo is best explained by the magnitude of $(\delta / \gamma^2)$ -- it is excellent when the ratio is less than a threshold and degrades as it increases.
    }
    \label{fig:epsilon:spectral-gap}
\end{figure}

\vspace{-3mm}
\section{Robust Decentralized Optimization}\label{sec:algo}
\vspace{-3mm}

The general decentralized training algorithm can be formulated as
\begin{align*}
    \xx^{t+\nicefrac{1}{2}}_i & :=\begin{cases}
        \xx_i^{t} - \eta \gg_i(\xx_i^{t}) & i \in \gset \\
        *                                 & i \in \bset
    \end{cases}, \qquad 
    \xx_i^{t+1}                :=  \textsc{Agg}_i(\{\xx_k^{t+\nicefrac{1}{2}}: k \in \overline{\cN}_i\})
\end{align*}
where $\eta$ is the learning rate, $\gg_i(\xx):=\nabla F( \xx, \xi_i)$ is a stochastic gradient, and $\xi^{t}_i\!\sim\!\cD_i$ is the random batch at time~$t$ on worker $i$.
The received message $\xx^{t+\nicefrac{1}{2}}_k$ can be arbitrary for Byzantine nodes $k\in\bset$.
Replacing \textsc{Agg} with plain gossip averaging \eqref{eq:gossip} recovers standard gossip SGD \citep{koloskova2019decentralized}.
Under the presence of Byzantine workers, which is the main interest of our work, we will show that we can replace \textsc{Agg} with \algo and use local worker momentum to achieve Byzantine robustness~\citep{Karimireddy2021LearningFH}. The full procedure is described in \Cref{algo:ClippedGossip}.

\begin{algorithm}[t]
    \caption{Byzantine-Resilient Decentralized Optimization with \algo}\label{algo:ClippedGossip}
    \algrenewcommand\algorithmicrequire{\textbf{Input:}}
    \algblockdefx[NAME]{ParallelFor}{ParallelForEnd}[1]{\textbf{for} #1 \textbf{in parallel}}{\textbf{end for}}
    \begin{algorithmic}[1] 
        \Require $\xx^0\in\R^d$, $\alpha$, $\eta$, $\{\tau_i^t\}$, $\mm_i^0=\gg_i(\xx^0)$
        \For{$t=0, 1,\ldots$}
        \ParallelFor{$i=1,\ldots,n$}
        \State $\mm_i^{t+1}=(1-\alpha)\mm_i^t + \alpha \gg_i(\xx_i^t)$ 
        \State $\xx_i^{t+\nicefrac{1}{2}}=\xx_i^t-\eta\mm_i^{t+1}\text{ if } i\in\gset \text{ else } *$ 
        \State Exchange $\xx_i^{t+\nicefrac{1}{2}}$ with $\cN_i$ 
        \State $\xx_i^{t+1} = \algo_i(\xx_1^{t+\nicefrac{1}{2}},\ldots,\xx_n^{t+\nicefrac{1}{2}}; \tau_i^{t+1})$
        \ParallelForEnd
        \EndFor
    \end{algorithmic}
\end{algorithm}

\begin{table*}[t]
    \caption{Comparison with prior work of convergence rates for non-convex objectives to a $\cO(\delta\zeta^2)$-neighborhood of stationary points. We recover comparable or improved rates as special cases.
    }
    \centering
    \small
    \label{tab:comparison}
    \begin{threeparttable}
        \begin{tabular}{cccc}
            \toprule
             & Reference                                                                                                                                                        & Setting                       & Convergence to $\epsilon$-accuracy                                                                                                                           \\
            \midrule
            \multirow{2}{*}{\shortstack{Regular ($\delta=0$)                                                                                                                                                                                                                                                                                                                   \\Decentralized}} &
            \cite{koloskova2021unified}
             & -                                                                                                                                                                &
            $\cO(\frac{\sigma^2}{n\epsilon^2}\!+\!\frac{\zeta}{\gamma\epsilon^{3/2}}\!+\!
                \frac{{\sigma}}{\sqrt{\gamma}\epsilon^{3/2}} \!+\!\frac{1}{\gamma\epsilon} )$                                                                                                                                                                                                                                                                                  \\
             & This work                                                                                                                                                        & $\delta\!=\!0$
             & $\cO(\frac{\sigma^2}{n\epsilon^2}\!+\! \frac{\zeta}{\gamma\epsilon^{3/2}} \!+\! \frac{\sigma^{2/3}}{\gamma^{2/3}\epsilon^{4/3}}\!+\!\frac{1}{\gamma\epsilon}  )$                                                                                                                                                                                                \\
            \midrule
            \multirow{4}{*}{\shortstack{Byzantine-robust                                                                                                                                                                                                                                                                                                                       \\Fully-connected ($\gamma=1$)
                    \\IID ($\zeta=0$)}}
             & \cite{guo2021byzantineresilient}                                                                                                                                 & -                             & \ding{55}

            \\
             & \cite{gorbunov2021secure}                                                                                                                                        & $\delta$ known                &
            $\cO(\frac{\sigma^2}{n\epsilon^2}\!+\!\frac{n\delta\sigma^2}{m\epsilon}\!+\!
                \frac{1}{\epsilon} )$ \tnote{\textdagger}
            \\
             & \cite{gorbunov2021secure}                                                                                                                                        & $\delta$ unknown              &
            $\cO(\frac{\sigma^2}{n\epsilon^2}\!+\!\frac{n^2\delta\sigma^2}{m\epsilon}\!+\!
                \frac{1}{\epsilon} )$ \tnote{\textdagger}
            \\
             & This work                                                                                                                                                        & $\gamma\!=\!1$, $\zeta\!=\!0$
             & $\cO(\frac{\sigma^2}{n\epsilon^2}
                \!+\! \frac{\delta\sigma^2}{\epsilon^2}
                \!+\! \frac{1}{\epsilon}  )$                                                                                                                                                                                                                                                                                                                                   \\\midrule
            \multirow{2}{*}{\shortstack{Byzantine-robust                                                                                                                                                                                                                                                                                                                       \\Federated Learning}}          & \cite{karimireddy2021byzantinerobust}                                                                                                & -                          &       $\cO(\frac{\sigma^2}{\epsilon^2}{(\delta\!+\!\frac{1}{n})}\!+\!\frac{1}{\epsilon} )$                             \\
             & This work                                                                                                                                                        & $\gamma=1$                    & $\cO(\frac{\sigma^2}{\epsilon^2}{(\delta\!+\!\frac{1}{n})}\!+\!\frac{\zeta}{\epsilon^{3/2}}\!+\!\frac{\sigma^{2/3}}{\epsilon^{4/3}}\!+\!\frac{1}{\epsilon})$ \\
            \bottomrule
        \end{tabular}
        \begin{tablenotes}\footnotesize
            \item[\textdagger] This method does not generalize to constrained communication topologies.
        \end{tablenotes}
    \end{threeparttable}
    \vspace{-0.5em}
\end{table*}

\begin{theorem}\label{theorem:1}
    Suppose Assumptions~\ref{a:connectivity}--\ref{lemma:p} hold and $\delta_{\max}=\cO(\gamma^{2})$.
    Then for $\alpha:=3\eta L$, \Cref{algo:ClippedGossip} reaches $\frac{1}{T+1}\sum_{t=0}^T\norm{\nabla f(\bar{\xx}^{t})}_2^2\le \frac{\delta_{\max}\zeta^2}{\gamma^2} + \epsilon$ in iteration complexity
    \vspace{-1.5mm}
    \begin{align*}
        \cO\bigg(
        \frac{\sigma^2}{n\epsilon^2}\Big(\frac{1}{n} \!+\! \delta_{\max}\Big)\!+\! \frac{\zeta}{\gamma\epsilon^{3/2}} \!+\! \frac{\sigma^{2/3}}{\gamma^{2/3}\epsilon^{4/3}}\!+\!\frac{1}{\gamma\epsilon} \bigg).
    \end{align*}
    \vspace{-1.5mm}
    Furthermore, the consensus distance satisfies the upper bound
    $$\tfrac{1}{|\gset|} \tsum_{i\in\gset} \norm{\xx_i^T - \bar{\xx}^T}_2^2
        \le \cO(\tfrac{\zeta^2}{\gamma^2(T+1)}).
    $$
\end{theorem}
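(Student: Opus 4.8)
The plan is to run the standard potential-function analysis for decentralized SGD, but with two crucial modifications: replacing the plain gossip contraction by the Byzantine-robust contraction of \Cref{theorem:consensus}, and exploiting the local momentum to suppress the gradient variance. I would track three coupled quantities at the regular-average iterate $\bar\xx^t$: the function value $\E f(\bar\xx^t)$, the averaged-momentum deviation $\E\norm{\bar\mm^{t+1}-\nabla f(\bar\xx^t)}^2$, and the consensus distance $\tce^t:=\tfrac{1}{|\gset|}\sum_{i\in\gset}\E\norm{\xx_i^t-\bar\xx^t}^2$.

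First I would establish a descent lemma. Writing $\bar\xx^{t+1}-\bar\xx^t=(\bar\xx^{t+1}-\bar\xx^{t+1/2})-\eta\bar\mm^{t+1}$ and applying $L$-smoothness, the term $-\eta\inp{\nabla f(\bar\xx^t)}{\bar\mm^{t+1}}$ yields the descent $-\tfrac{\eta}{2}\norm{\nabla f(\bar\xx^t)}^2$ plus a momentum-deviation error, while the Byzantine-drift inner product $\inp{\nabla f(\bar\xx^t)}{\bar\xx^{t+1}-\bar\xx^{t+1/2}}$ is split by Young's inequality into $\tfrac{\eta}{4}\norm{\nabla f(\bar\xx^t)}^2$ and a term of order $\tfrac{\delta_{\max}}{\eta}\tce^{t+1/2}$, using the average-movement bound $\E\norm{\bar\xx^{t+1}-\bar\xx^{t+1/2}}^2\le c^2\delta_{\max}\tce^{t+1/2}$ from \Cref{theorem:consensus}.

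Next I would set up the two auxiliary recursions. For the momentum, expanding $\bar\mm^{t+1}=(1-\alpha)\bar\mm^t+\alpha\overline{\nabla f}^t+\alpha\bar\bxi^t$ and using independence of the per-worker noise gives a $(1-\alpha)$ contraction, a per-step variance input of order $\alpha^2\sigma^2/n$, and a drift input controlled by the iterate movement (hence by $\eta^2\norm{\bar\mm}^2$ and $\tce^t$); the choice $\alpha=3\eta L$ is exactly what balances these so that the accumulated momentum error is small relative to the descent. For the consensus distance, the gradient step inflates $\tce^t$ by at most $\eta^2$ times the momentum variation (bounded through the heterogeneity $\zeta^2$ and noise $\sigma^2$), after which \Cref{theorem:consensus} contracts it by $(1-\gamma+c\sqrt{\delta_{\max}})^2$; the hypothesis $\delta_{\max}=\cO(\gamma^2)$ keeps this below $(1-\gamma/2)^2$, so $\tce^t$ satisfies a geometric recursion with fixed point $\cO(\eta^2\zeta^2/\gamma^2)$, which directly yields the stated consensus bound once the tuned step size is substituted.

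The final step is to assemble a single Lyapunov potential $\Phi^t=\E f(\bar\xx^t)+a\,\E\norm{\bar\mm^t-\nabla f(\bar\xx^{t-1})}^2+b\,\tce^t$ with weights $a,b$ chosen so the cross-terms cancel, telescope over $t=0,\dots,T$, and divide by $\eta(T+1)$. The irreducible bias emerges precisely here: substituting the consensus fixed point $\tce^{t+1/2}=\cO(\eta^2\zeta^2/\gamma^2)$ into the $\tfrac{\delta_{\max}}{\eta}\tce^{t+1/2}$ drift term gives a per-step contribution $\cO(\eta\,\delta_{\max}\zeta^2/\gamma^2)$, which after dividing by $\eta$ leaves the non-vanishing floor $\delta_{\max}\zeta^2/\gamma^2$, while the analogous noise part of the fixed point produces the vanishing $\delta_{\max}\sigma^2$ contribution to the complexity; optimizing $\eta$ over the remaining vanishing terms gives the claimed rate. \textbf{The main obstacle} is the tight coupling of the three recursions: $\tce^t$ feeds both the momentum error and the Byzantine drift while being itself driven by the momentum variation, so the weights $a,b$ and the step-size constraints must be solved simultaneously, and one must verify that the robust contraction of \Cref{theorem:consensus} continues to dominate these feedback terms throughout the entire regime $\delta_{\max}=\cO(\gamma^2)$.
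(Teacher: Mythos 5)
Your proposal is correct and follows essentially the same route as the paper's proof: a smoothness descent lemma at the regular-average iterate with the Byzantine/clipping drift $\Delta^{t+1}$ controlled by $\delta_{\max}$ times the consensus distance, a $(1-\alpha)$-contracting momentum-error recursion with $\alpha^2\sigma^2/n$ variance input and $\alpha=3\eta L$, and a consensus-distance recursion contracted by the robust gossip factor under $\delta_{\max}=\cO(\gamma^2)$, all coupled, telescoped, and closed by step-size tuning. Your identification of the $\delta_{\max}\zeta^2/\gamma^2$ floor as arising from the consensus fixed point $\cO(\eta^2\zeta^2/\gamma^2)$ fed into the $\tfrac{\delta_{\max}}{\eta^2}$-weighted drift term matches the paper's derivation exactly.
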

We compare our analysis with existing works for non-convex objectives in \Cref{tab:comparison}.

\textbf{Regular decentralized training.}
Even if there are no Byzantine workers ($\delta_{\max}\!=\!0$), our convergence rate is slightly faster than that of standard gossip SGD \citep{koloskova2021unified}. The difference is that our third term $\cO(\frac{\sigma^{2/3}}{\gamma^{2/3}\epsilon^{4/3}})$ is faster than their $\cO(\frac{{\sigma}}{\sqrt{\gamma}\epsilon^{3/2}})$ for large $\sigma$ and small $\epsilon$. This is because we use local momentum which reduces the effect of variance $\sigma$. Thus momentum has a double use in this paper in achieving robustness as well as accelerating optimization.

\textbf{Byzantine-robust federated learning.} Federated learning uses a fully connected graph ($\gamma=1$). We compare state of the art federated learning method~\citep{karimireddy2021byzantinerobust} with our rate when $\gamma =1$.
Both algorithms converge to a $\Theta(\delta\zeta^2)$-neighborhood of a stationary point and share the same leading term.
This neighborhood can be circumvented with strong growth condition and over-parameterized models \citep[Theorem III]{karimireddy2021byzantinerobust}.
We incur additional higher-order terms $\cO( \frac{\zeta}{\gamma\epsilon^{3/2}} + \frac{\sigma^{2/3}}{\gamma^{2/3}\epsilon^{4/3}})$ as a penalty for the generality of our analysis. This shows that the trusted server in federated learning can be removed without significant slowdowns.

\textbf{Byzantine-robust decentralized SGD with fully connected topology}. If we limit our analysis to a special case of a fully connected graph ($\gamma\!=\!1$) and IID data ($\zeta\!=\!0$), then our rate has the same leading term as \citep{gorbunov2021secure}, which enjoys the scaling of the total number of regular nodes. The second term $\cO(\frac{n}{m}\frac{\delta\sigma^2}{\epsilon})$ of~\citep{gorbunov2021secure} is better than our $\cO(\frac{1}{\epsilon}\frac{\delta\sigma^2}{\epsilon})$ for small $\epsilon$ because they additionally validate $m$ random updates in each step. However, \citep{gorbunov2021secure} relies on secure protocols which do not easily generalize to constrained communication.

\textbf{Byzantine-robust decentralized SGD with constrained communication}. \textsc{Mozi} \citep{guo2021byzantineresilient} does not provide a theoretical analysis on convergence and \textsc{TM} \citep{sundaram2018distributed,su2016multi,yang2019bridge} only prove the asymptotic convergence of full gradient under a very strong assumption on connectivity and local honest majority.\footnote{\textsc{Mozi} is renamed to \textsc{Ubar} in the latest version.} \cite{peng2020byzantine} don't prove a rate for non-convex objective; but \cite{gorbunov2021secure} which shows convergence of \citep{peng2020byzantine} on strongly convex objectives at a rate inferior to parallel SGD. In contrast, our convergence rate matches the standard stochastic analysis under much weaker assumptions than \cite{sundaram2018distributed,su2016multi,yang2019bridge}. Unlike these prior works, our guarantees hold even if some subsets of nodes are surrounded by a majority of Byzantine attackers. This can also be observed in practice, as we show in \S~\ref{subsec:honest-maj}.

\textbf{Consensus for Byzantine-robust decentralized optimization.} \Cref{theorem:1} gives a non-trivial result that regular workers reach consensus under the \algo aggregator.
In Fig.~\ref{fig:CIFAR_bar} we demonstrate the consensus behavior of robust aggregators on  the CIFAR-10 dataset on a dumbbell topology, without attackers ($\delta\!=\!0$).
We compare the accuracies of models averaged within cliques A and B with model averaged over all workers. In the IID setting, the clique-averaged models of GM and TM are over 80\% accuracy but the globally-averaged models are less than 30\% accuracy. It means clique A and clique B are converging to two different critical points and GM and TM fail to reach consensus within the entire network! In contrast, the globally-averaged model of \algo is as good as or better than the clique-averaged models, both in the IID and non-IID setting.

Finally, we point out some avenues for further improvement: our results depend on the worst-case $\delta_{\max}$. We believe it is possible to replace it with a (weighted) average of the $\{\delta_i\}$ instead. Also, extending our protocols to time-varying topologies would greatly increase their practicality.
\begin{remark}[Adaptive choice of clipping radius $\tau_i^t$]
    In \S~\ref{ssub:additional}, we give an adaptive rule to choose the clipping radius $\tau_i^t$ for all $i\in\gset$ and times $t$, based on the top percentile of close neighbors. This adaptive rule results in a value $\tau_i^t$ slightly smaller than the required theoretical value to preserve Byzantine robustness.
    In experiments, we found that the performance of optimization is robust to small perturbations of the clipping radius and that the adaptive rule performs well in all cases.
\end{remark}



\vspace{-3mm}
\section{Experiments}
\vspace{-3mm}

In this section, we empirically demonstrate successes and failures of decentralized training in the presence of Byzantine workers, and compare the performance of \algo with existing robust aggregators: 1) geometric median \textsc{GM} \citep{pillutla2019robust}; 2) coordinate-wise trimmed mean \textsc{TM} \citep{yang2019bridge}; 3) \textsc{Mozi}~\citep{guo2020byzantineresilient}. Coordinate-wise median \citep{yin2018byzantinerobust} and Krum \citep{blanchard2017machine} usually perform worse than \textsc{GM} so we exclude them in the experiments. All  implementations are based on PyTorch \citep{paszke2019pytorch} and evaluated on different graph topologies, with a distributed MNIST dataset \citep{lecun-mnisthandwrittendigit-2010}. We defer the experiments on CIFAR10 \citep{krizhevsky2009learning} to \S~\ref{sec:cifar}.
\footnote{
    The code is available at \href{https://anonymous.4open.science/r/byzantine-robust-decentralized-optimizer-023F}{this anonymous repository}.
}

We defer details of robust aggregators to \S~\ref{sec:aggregators}, attacks to \S~\ref{sec:attacks}, topologies and mixing matrix to \S~\ref{apx:sec:topologies_mixing} and experiment setups and additional experiments to \S~\ref{apx:sec:experiment}.

\subsection{Decentralized defenses without attackers}\label{sec:exp:noattacker}
\vspace{-2mm}

\begin{figure*}[t]
    \centering
    \vspace{-1em}
    \includegraphics[width=\linewidth]{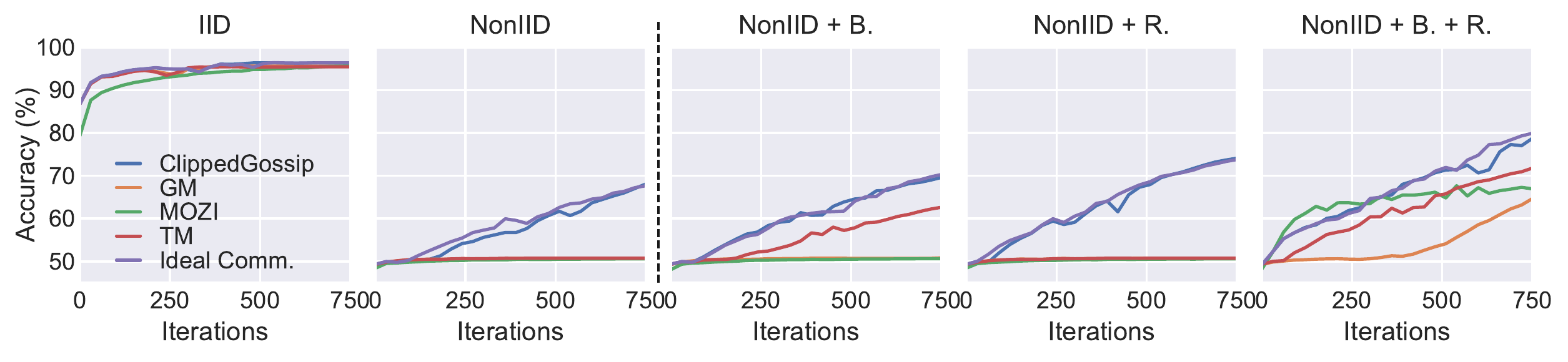}
    \vspace{-2.5em}
    \caption{
        Accuracy of the averaged model in clique A for the dumbbell topology. In the plot title ``B.'' stands for the bucketing (aggregating means of bucketed values) and ``R.'' stands for adding 1 additional random edge between two cliques. We see that i) \algo is consistently the best matching ideal averaging performance, ii) performance mildly improves by using bucketing, and iii) significantly improves when adding a single random edge (thereby improving connectivity).
        \vspace{-1em}
    }
    \label{fig:exp6_1}
\end{figure*}

Challenging topologies and data distribution may prevent existing robust aggregators from reaching consensus even when there is no Byzantine worker ($\delta=0$).
In this part, we consider the ``dumbbell'' topology c.f. Fig.~\ref{fig:impossibility}. As non-IID data distribution, we split the training dataset by labels such that workers in clique A are training on digits 0 to 4 while workers in clique B are training on digits 5 to 9. This entanglement of topology and data distribution is motivated by realistic geographic constraints such as continents with dense intra-connectivity but sparse inter-connection links e.g. through an undersea cable. 
In Fig.~\ref{fig:exp6_1} we compare \algo with existing robust aggregators \textsc{GM}, \textsc{TM}, \textsc{Mozi} in terms of their accuracies of averaged model in clique A.
The ideal communication refers to aggregation with gossip averaging.


\textbf{Existing robust aggregators impede information diffusion.} When cliques A and B have distinct data distribution (non-IID), workers in clique A rely on the graph cut to access the full spectrum of data and attain good performance. However, existing robust aggregators in clique A completely discard information from clique B because: 1) clique B model updates are outliers to clique A due to data heterogeneity; 2) clique B updates are outnumbered by clique A updates --- clique A can only observe 1 update from B due to constrained communication. The 2nd plot in Fig.~\ref{fig:exp6_1} shows that \textsc{GM}, \textsc{TM}, and \textsc{Mozi} only reach 50\% accuracy in the non-IID setting, supporting that they impede information diffusion.
This is in contrast to the 1st plot where cliques A  and B have identical data distribution (IID) and information on clique A alone is enough to attain good performance. However, reaching local models does not imply reaching consensus, c.f. Fig.~\ref{fig:CIFAR_bar}.
On the other hand, \algo is the only robust aggregator that preserves the information diffusion rate as the ideal gossip averaging.

\textbf{Techniques that improve information diffusion.}
To address these issues, we locally employ the \textit{bucketing} technique of \citep{karimireddy2021byzantinerobust} for the non-IID case in the 3rd subplot. 
Plots 4 and 5 demonstrate the impact of one additional edge between the cliques to improve the spectral gap.

\begin{itemize}[nosep,leftmargin=*]
    \item The bucketing technique randomly inputs received vectors into buckets of equal size, averages the vectors in each bucket, and finally feeds the averaged vectors to the aggregator. While bucketing helps \textsc{TM} to overcome 50\% accuracy, \textsc{TM} is still behind \algo. \textsc{GM} only improves by 1\% while \textsc{Mozi} remains at almost the same accuracy.

    \item Adding one more random edge between two cliques improves the spectral gap $\gamma$  from $0.0154$ to $0.0286$. \algo and gossip averaging converge faster as the theory predicts. However, \textsc{TM}, \textsc{GM}, and \textsc{Mozi} are still stuck at 50\% for the same heterogeneity reason.

    \item Bucketing and adding a random edge help all aggregators exceed 50\% accuracy.
\end{itemize}

\vspace{-2mm}
\subsection{Decentralized learning under more attacks and topologies.}
\vspace{-2mm}
In this section, we compare robust aggregators over more topologies and Byzantine attacks in the non-IID setting. We consider two topologies: randomized small world ($\gamma\!=\!0.084$) and torus $(\gamma\!=\!0.131)$. They are much less restrictive than the dumbbell topology $(\gamma\!=\!0.043)$ where all existing aggregators fail to reach consensus even $\delta\!=\!0$.
For attacks, we implement state of the art federated attacks {Inner product manipulation} (\textbf{IPM}) \citep{xie2019fall} and {A little is enough} (\textbf{ALIE}) \citep{baruch2019little} and label-flipping (LF) and bit-flipping (BF). Details about topologies and the adaptation of FL attacks to the decentralized setup are provided in \S~\ref{apx:ssec:constrained} and \S~\ref{sec:attacks}.

The results in Fig.~\ref{fig:more_topologies} show that \algo has consistently superior performance under all topologies and attacks. All robust aggregators are generally performing better on easier topology (large $\gamma$). The GM has a very good performance on these two topologies but, as we have demonstrated in the dumbbell topology, GM does not work in more challenging topologies. Therefore, \algo is recommended for a general constrained topology.

\begin{figure}[t]
    \centering
    \includegraphics[width=0.9\linewidth]{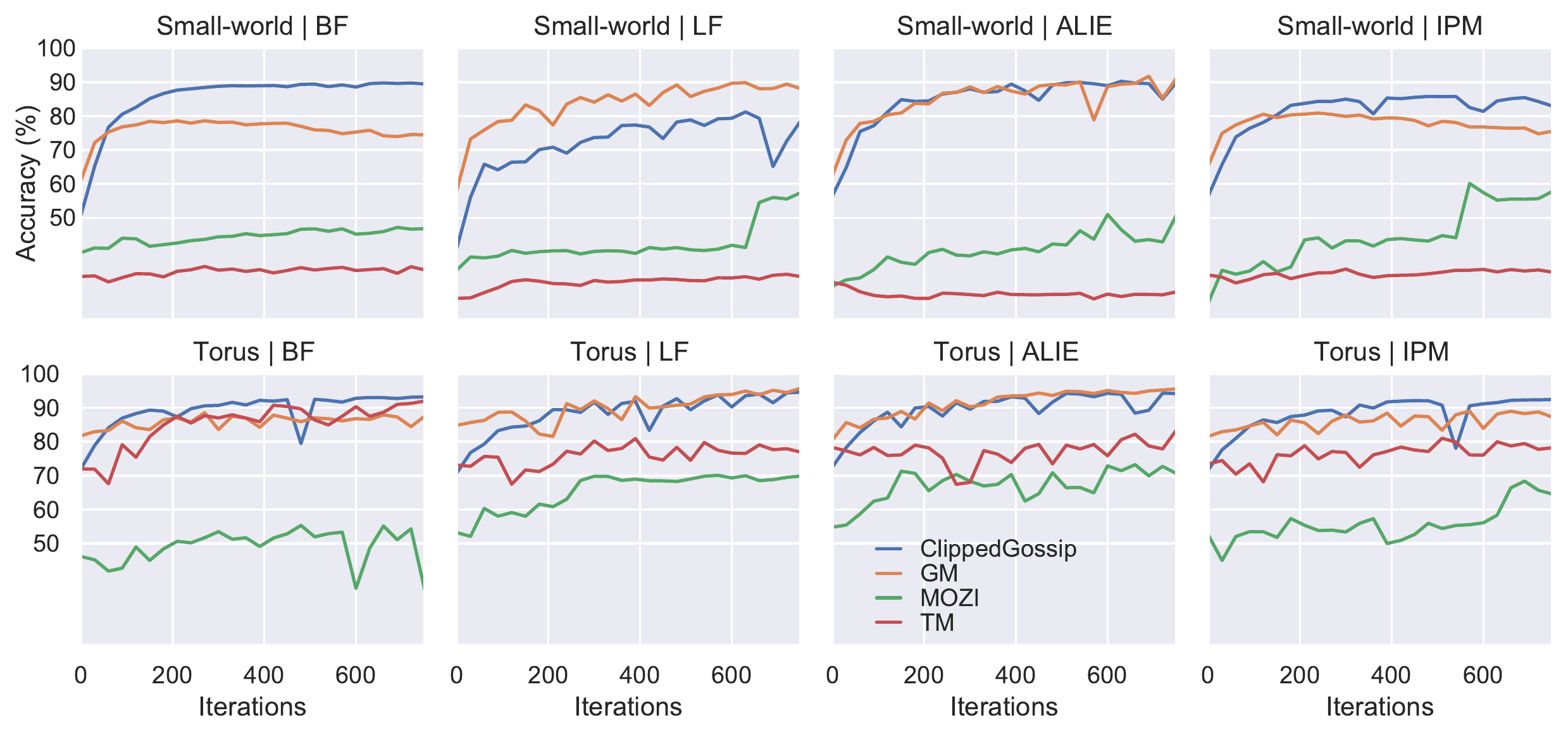}
    \vspace{-1.5em}
    \caption{Robust aggregators on randomized small-world (10 regular nodes) and torus topology (9 regular nodes) under Byzantine attacks (2 attackers). We observe that across all attacks and networks, clipped gossip has excellent performance, with the geometric median (GM) coming second.
    }
    \label{fig:more_topologies}
\end{figure}

\vspace{-2mm}
\subsection{Lower bound of optimization}
\vspace{-2mm}

We empirically investigate the lower bound of optimization $O(\delta_{\max}\zeta^2\gamma^{-2})$ in \Cref{theorem:1}.
In this experiment, we fix spectral gap $\gamma$, heterogeneity $\zeta^2$ and use different $\delta_{\max}$ fractions of Byzantine edges in the dumbbell topology. The Byzantine workers are added to $V_1$ in clique A and its mirror node in clique B.
We define the following \textit{dissensus} attack for decentralized optimization
\begin{definition}[\textsc{Dissensus} attack]\label{def:dissensus}
    For $i\in\gset$ and $\epsilon_i>0$, a dissensus attacker $j\in\cN_i\cap\bset$ sends\vspace{-0.8em}
    \begin{equation}\label{eq:dissensus}
        \xx_j := \xx_i - \epsilon_i \tfrac{\sum_{k\in \cN_i\cap\gset} \mW_{ik} (\xx_k - \xx_i)}{
            \sum_{j\in\cN_i\cap\bset} \mW_{ij}
        }.
    \end{equation}
\end{definition}

\begin{wrapfigure}{r}{0.5\textwidth}
    \includegraphics[width=\linewidth]{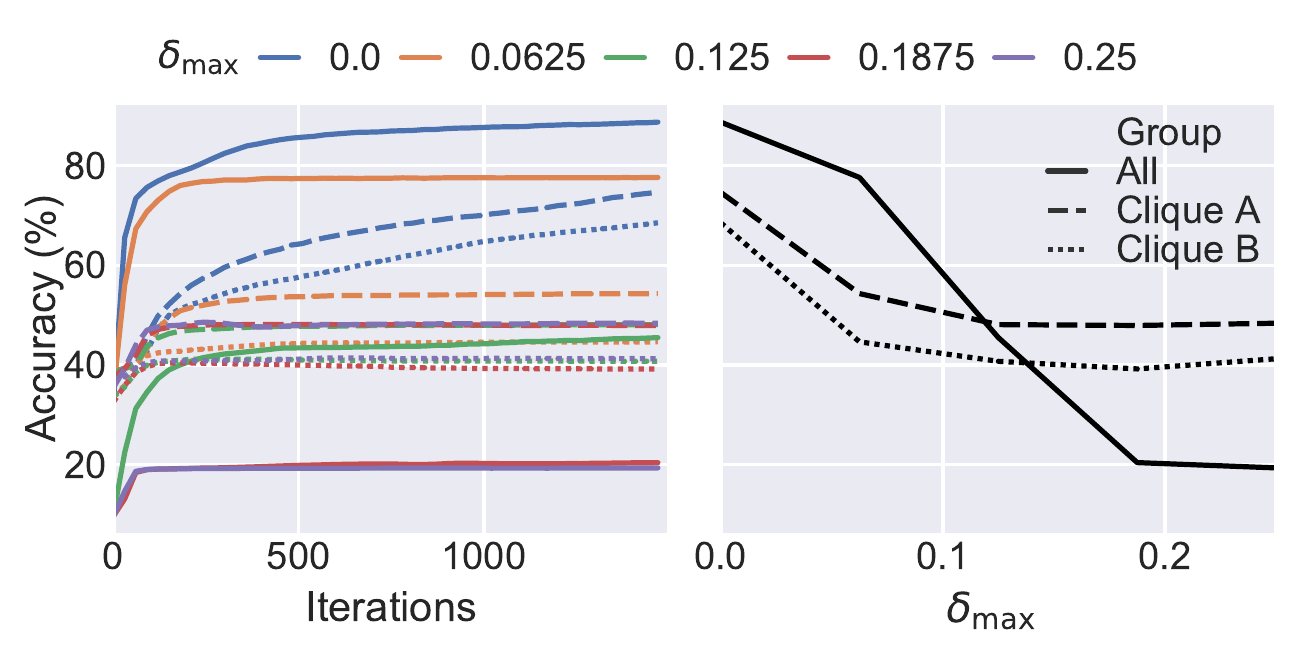}
    \vspace{-2em}
    \caption{
        Effect of the number of attackers on the accuracy of \algo under dissensus attack with varying $\delta_{\max}$ and fixed $\gamma$, $\zeta^2$. The solid (resp. dashed) lines denote models averaged over all (resp. clique A or B) regular workers. The right figure shows the performance of the last iterates of curves in the left figure.
    }
    \label{fig:vary_deltamax}
    \vspace{-1.5em}
\end{wrapfigure}

The resulting \Cref{fig:vary_deltamax} shows that with increasing $\delta_{\max}$ the model quality drops significantly. This is in line with our proven robust convergence rate in terms of $\delta_{\max}$.
Notice that for large $\delta_{\max}$, the model averaged over all workers performs even worse than those averaged within cliques. It means the models in two cliques are essentially disconnected and are converging to different local minima or stationary points of a non-convex landscape.
See \S~\ref{ssec:lowerbound} for details.

\vspace{-3mm}
\section{Discussion}
\vspace{-3mm}



The main takeaway from our work is that ill-connected communication topologies can vastly magnify the effect of bad actors. As long as the communication topology is reasonably well connected (say $\gamma = 0.35$) and the fraction of attackers is mild (say $\delta = 10\%$), clipped gossip provably ensures robustness. Under more extreme conditions, however, \emph{no algorithm} can guarantee robust convergence.
Given that decentralized consensus has been proposed as a backbone for digital democracy \citep{bulteau2021aggregation}, and that decentralized learning is touted to be an alternative to current centralized training paradigms, our findings are significant. A simple strategy we recommend (along with using \algo) is adding random edges to improve the connectivity and robustify the network.


\begin{small}
      \bibliography{biblio.bib}
      \bibliographystyle{iclr2023_conference}
\end{small}

\newpage
\appendix
\appendixpage
{
      \hypersetup{linkcolor=black}
      \parskip=0em
      \renewcommand{\contentsname}{Contents of the Appendix}
      \tableofcontents
      \addtocontents{toc}{\protect\setcounter{tocdepth}{3}}
}

\section{Existing robust aggregators}\label{sec:aggregators}
In this section, we describe existing robust aggregators mentioned in this paper.
Regular nodes can replace gossip averaging~\eqref{eq:gossip} with robust aggregators in the federated learning. Let's take geometric median and trimmed mean for example.
\begin{itemize}[nosep,leftmargin=*]
    \item \textbf{Geometric median (GM).} \cite{pillutla2019robust} implements the geometric median
          \[
              \textsc{GM}(\xx_1,\ldots,\xx_n):=\argmin_{\vv}\tsum_{i=1}^n\norm{\vv-\xx_i}_2.
          \]
    \item \textbf{Coordinate-wise trimmed mean (TM).} \cite{yin2018byzantinerobust,yang2019bridge} computes the $k$-th coordinate of TM as
          \[
              [\textsc{TM}(\xx_1,\ldots,\xx_n)]_k:=\tfrac{1}{(1-2\beta)n}\tsum_{i\in U_k} [\xx_i]_k
          \]
          where $U_k$ is a subset of $[n]$ obtained by removing the largest and smallest $\beta$-fraction of its elements.
\end{itemize}
These aggregators don't take advantage of the trusted local information and treat all models equally.

The \textsc{Mozi} algorithm \citep{guo2021byzantineresilient} leverages local information to filter outliers.
\begin{itemize}[nosep,leftmargin=*]
    \item \textbf{Mozi.} \citet{guo2021byzantineresilient} applies two screening steps on worker $i\in\gset$
          \begin{align*}
              \cN^s_i := & \argmin_{\substack{\cN^*\subset\cN_i                                 \\|\cN^*|=\delta_i|\cN_i|}} \sum_{j\in\cN^*}\norm{\xx_i - \xx_j}, \\
              \cN^r_i := & \cN^s_i \cap \{j\in[n]: \ell(\xx_j, \xi_i) \le \ell(\xx_i, \xi_i) \}
          \end{align*}
          where $\xi_i\sim\cD_i$ is a random sample. If $\cN^r_i = \emptyset$, then redefine
          $\cN^r_i:=\{\argmin_j \ell(\xx_j, \xi_i)\}$. Then they update the model with
          \begin{equation*}
              \xx^{t+1}_i:=\alpha \xx^{t}_i + \tfrac{1-\alpha}{|\cN^r_i|}\tsum_{j\in\cN^r_i} \xx^t_j
              - \eta \nabla F_i(\xx_i^{t}; \xi_i^{t})
          \end{equation*}
          where $\alpha\in[0, 1]$ is an hyperparameter.
\end{itemize}

\section{Byzantine attacks in the decentralized environment}\label{sec:attacks}
In this section, we first describe how to transform attacks from the federated learning to the decentralized environment. Then we introduce the \textit{dissensus} attack for decentralized environment.

\subsection{Existing attacks in federated learning}
\paragraph{A little is enough (ALIE).} The attackers estimate the mean $\mu_{\cN_i}$ and standard deviation $\sigma_{\cN_i}$ of the regular models, and send $\mu_{\cN_i}-z \sigma_{\cN_i}$ to regular worker $i$ where $z$ is a small constant controlling the strength of the attack \citep{baruch2019little}. The hyperparameter $z$ for ALIE is computed according to  \citep{baruch2019little}
\begin{equation}\label{eq:z:alie}
    z=\max_{z} \left(\phi(z) < \frac{n-b-s}{n-b} \right)
\end{equation}
where $s=\lfloor \frac{n}{2}+1\rfloor-b$ and $\phi$ is the cumulative standard normal function.

\paragraph{Inner product manipulation attack (IPM).} The inner product manipulation attack is proposed in \citep{xie2019fall} which lets all attackers send same corrupted gradient $\uu$ based on the good gradients
\[
    \uu_j =-\epsilon \textsc{Avg}(\{\vv_i: i\in\gset\}) \qquad \forall~j\in\bset.
\]
If $\epsilon$ is small enough, then $\uu_j$ can be detected as \textbf{good} by the defense,  circumventing the defense.
There are 3 main differences where IPM need to adapt to the decentralized environment:
\begin{enumerate}
    \item Byzantine workers may not connected to the same good worker.
    \item The model vectors are transmitted instead of gradients.
    \item The $\textsc{Avg}$ should be replaced by its equivalent gossip form.
\end{enumerate}
This motivates our \textit{dissensus} attack in the next section.

\subsection{Dissensus attack and other attacks in the decentralized environment}\label{ssec:dissensus}

\begin{wrapfigure}{r}{0.3\textwidth}
    \centering
    \includegraphics[width=\linewidth,page=9,clip,trim=5.5cm 10.5cm 24cm 5.5cm]{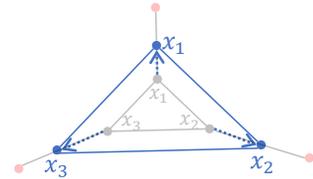}
    \caption{
        Example of the \textsc{dissensus} attack. The {\color{gray} gray} (resp. {\color{red} red}) denotes regular (resp. Byzantine) nodes. The {\color{blue} blue} dots represents the parameters of regular nodes after gossip steps.
    }
    \label{fig:attack:iid:dissensus}
\end{wrapfigure}

In this section, we introduce a novel \textit{dissensus} attack inspired by our impossibility construction in \Cref{thm:impossibility} and the IPM attack described above. The {dissensus} attack aims to prevent regular worker models from reaching consensus. Roughly speaking, {dissensus} attackers around worker $i$ send its model weights that are symmetric to the weighted average of regular neighbors around $i$. Then after gossip averaging step, the consensus distance drops slower or even grows which motivates the name ``dissensus''.

We can parameterize the attack through hyperparameter $\epsilon_i$ and summarize the attack in \Cref{def:dissensus}
\begin{equation}
    \xx_j := \xx_i - \epsilon_i \tfrac{\sum_{k\in \cN_i\cap\gset} \mW_{ik} (\xx_k - \xx_i)}{
        \sum_{j\in\cN_i\cap\bset} \mW_{ij}
    }.
\end{equation}

The $\epsilon_i$ determines the behavior of the attack. By taking smaller $\epsilon_i$, Byzantine model weights are closer to the target updates $i$ and difficult to be detected. On the other hand, a larger $\epsilon_i$ pulls the model away from the consensus.

Note that this attack requires omniscience since it exploits model information from across the network. If the attackers in addition can choose which node to attack, then they can choose either to spread about the attack across the network or focus on the targeting graph cut, that is min-cut of the graph.

\paragraph{Effect of the dissensus attack.} The dissensus attack enjoy the following properties.
\begin{proposition}\label{prop:dissensus:epsilon}
    \textnormal{(i)} For all $i\in\gset$, under the {dissensus} attack with $\epsilon_i=1$, the gossip averaging step \eqref{eq:gossip} is equivalent to \emph{no} communication on $i$, $\xx^{t+1}_i=\xx_i^{t}$. Secondly,
    \textnormal{(ii)} If the graph is fully connected, gossip averaging recovers the correct consensus even in the presence of {dissensus} attack.
\end{proposition}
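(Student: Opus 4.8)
The plan is to verify both parts by direct substitution of the dissensus formula \eqref{eq:dissensus} into the gossip update \eqref{eq:gossip} and then simplifying. The key observation for part (i) is that the dissensus attack, with $\epsilon_i=1$, is engineered precisely so that the total weighted pull from the Byzantine neighbors exactly cancels the total weighted pull from the regular neighbors.

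For part (i), I would start from the gossip update written in ``pull'' form, $\xx_i^{t+1} = \xx_i^t + \sum_{j=1}^n \mW_{ij}(\xx_j^t - \xx_i^t)$, which is valid since $\sum_j \mW_{ij}=1$ by \Cref{a:W}. I would split the sum $\sum_{j\in\overline{\cN}_i}$ into the self term ($j=i$, which contributes nothing), the regular-neighbor terms $j\in\cN_i\cap\gset$, and the Byzantine-neighbor terms $j\in\cN_i\cap\bset$. For each Byzantine neighbor, substituting $\xx_j - \xx_i = -\epsilon_i \frac{\sum_{k\in\cN_i\cap\gset}\mW_{ik}(\xx_k-\xx_i)}{\sum_{j'\in\cN_i\cap\bset}\mW_{ij'}}$ from \eqref{eq:dissensus} and summing against the weights $\mW_{ij}$ gives a Byzantine contribution of $-\epsilon_i \sum_{k\in\cN_i\cap\gset}\mW_{ik}(\xx_k-\xx_i)$, because the denominator $\sum_{j'\in\cN_i\cap\bset}\mW_{ij'}$ cancels exactly against $\sum_{j\in\cN_i\cap\bset}\mW_{ij}$. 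Setting $\epsilon_i=1$, the Byzantine contribution is the exact negation of the regular contribution $\sum_{k\in\cN_i\cap\gset}\mW_{ik}(\xx_k-\xx_i)$, so the two cancel and $\xx_i^{t+1}=\xx_i^t$.

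For part (ii), the point is that on a fully connected graph every regular node sees \emph{all} regular nodes as neighbors, so with uniform mixing weights $\mW_{ij}=1/n$ the regular-neighbor aggregate is (up to scaling) the true average $\bar\xx$ over the whole network. I would argue that the Byzantine injection, being proportional to $\sum_{k}\mW_{ik}(\xx_k-\xx_i)$, is itself a fixed multiple of the regular consensus displacement, so the resulting fixed point of the gossip iteration is still the true regular average: the attack only rescales the step toward $\bar\xx$ (by a factor $1-\epsilon_i$ or similar), and as long as this factor is nonzero the iteration still converges to the correct consensus $\bar\xx$.

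The main obstacle I anticipate is part (ii): the cancellation in part (i) is an exact one-line algebraic identity, but part (ii) requires a convergence argument rather than a single-step identity, and I must be careful about what ``fully connected'' buys us—specifically whether the weights are assumed uniform and whether the effective contraction factor per node stays strictly below $1$ uniformly. I would handle this by showing the combined (regular + Byzantine) update reduces to a standard averaging map with a strictly positive spectral gap on the fully connected graph, so that \Cref{lemma:p}-type reasoning yields convergence to $\bar\xx$; the care needed is ensuring the dissensus rescaling does not push any effective self-weight to a degenerate value.
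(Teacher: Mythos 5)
Your part~(i) is exactly the paper's argument: write the gossip step in ``pull'' form $\xx_i^{t+1}=\xx_i^t+\sum_{j\in\cN_i}\mW_{ij}(\xx_j^t-\xx_i^t)$, note that the denominator in \eqref{eq:dissensus} cancels against $\sum_{j\in\cN_i\cap\bset}\mW_{ij}$, so the Byzantine contribution equals $-\epsilon_i$ times the regular contribution, and $\epsilon_i=1$ annihilates the drift. That part is correct and identical in substance to the paper.

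Part~(ii) is where you take a genuinely different route, and where there is a gap. The paper's proof is a one-line conservation argument: the total perturbation injected by the attack is $-\epsilon\sum_{i,j\in\gset}\mW_{ij}(\xx_j^t-\xx_i^t)=0$ by antisymmetry of the summand and symmetry of $\mW$, so the regular average is left invariant, and since gossip on the complete graph computes the average, the value recovered is the correct one. You instead propose a contraction/fixed-point argument: the perturbed update is $\xx_i^{t+1}=\xx_i^t+(1-\epsilon_i)\sum_{k\in\gset}\mW_{ik}(\xx_k^t-\xx_i^t)$, and you claim convergence to $\bar\xx$ ``as long as this factor is nonzero.'' Two problems. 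First, nonzero is not sufficient: the effective matrix is $(1-\epsilon_i)\mW+\epsilon_i\mI$ on $\gset$, whose eigenvalues are $(1-\epsilon_i)\lambda+\epsilon_i$; on the complete graph with uniform weights $\mW$ has eigenvalue $\lambda\approx 0$ on $\1^\perp$, so for $\epsilon_i=2$ the deviation from the mean roughly doubles each step and the iteration diverges. You flag the need to check the contraction factor, but the condition you actually state is false. Second, even granting contraction, contraction to consensus only shows the limit is \emph{some} constant vector; identifying it as the \emph{correct} average $\bar\xx^0$ requires a conserved quantity, and that conserved quantity is precisely the average --- whose invariance is the symmetry cancellation that constitutes the paper's entire proof of~(ii). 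So the ingredient you treat as a side remark (``the fixed point is still the true regular average'') is the part that actually needs proof, and your contraction argument does not supply it on its own.
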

The above proposition illustrates two interesting aspects of the attack. Firstly, dissensus works by negating the progress that would be made by gossip. The attack in \citep{peng2020byzantine} also satisfies this property (see Appendix for additional discussion). Secondly, it is a uniquely decentralized attack and has no effect in the centralized setting. Hence, its effect can be used to measure the additional difficulty posed due to the restricted communication topology.
\begin{proof}
    For the first part, by definition \eqref{eq:gossip} we know that
    \begin{align*}
        \xx_i^{t+1} = & \tsum_{j=1}^n \mW_{ij} \xx^{t}_j
        =\xx^{t}_i+ \tsum_{j\in\cN_i} \mW_{ij} (\xx^{t}_j-\xx^{t}_i)
    \end{align*}
    By setting $\epsilon_i=1$ in the attack \eqref{eq:dissensus}, the second term 0 and therefore $\xx^{t+1}_i=\xx^{t}_i$. For part (ii), note that in a fully connected graph the gossip average is the same as standard average. Averaging all the perturbations introduced by the dissensus attack gives
    \begin{align*}
        - \epsilon \tsum_{i,j \in \gset} W_{i,j}(\xx^{t}_j-\xx^{t}_i) = 0\,.
    \end{align*}
    All terms cancel and sum to 0 by symmetry. Thus, in a fully connected graph the dissensus perturbations cancel out and the gossip average returns the correct consensus.
\end{proof}

\textbf{Relation with zero-sum attack and dissensus.}
\cite{peng2020byzantine} propose the {``zero-sum''} attack which achieves similar effects as \Cref{prop:dissensus:epsilon} part (i). This attack is defined for $j\in\bset$
\[
    \xx_j := - \tfrac{\sum_{k\in \cN_i\cap\gset}
        \xx_k }{|\cN_i\cap\bset|}.
\]
The key difference between zero-sum attack and our proposed attack is three-fold. First, zero-sum attack ensures $\sum_{j\in\cN_i}\xx_j=0$ which means the Byzantine models have to be far away from $\xx^{t}_i$ and therefore easy to detect. This attack pull the aggregated model to $\0$. On the other hand, our attack ensures
\[
    \frac{1}{\tsum_{j\in\cN_i} \mW_{ij}} \tsum_{j\in\cN_i} \mW_{ij} \xx^{t}_j=\xx^{t}_i
\]
and the Byzantine updates can be very close to $\xx^{t}_i$ and it is more difficult to be detected. Second, our proposed attack considers the gossip averaging which is prevalent in decentralized training \citep{koloskova2021unified} while the zero-sum attack only targets simple average. Third, our attack has an additional parameter $\epsilon$ controlling the strength of the attack with $\epsilon>1$ further compromise the model quality while zero-sum attack is fixed to training alone.

\section{Topologies and mixing matrices}\label{apx:sec:topologies_mixing}
\subsection{Constrained topologies} \label{apx:ssec:constrained}

\begin{figure}[t]
    \centering
    \includegraphics[page=14,width=.5\linewidth,clip,trim=4.3cm 9.25cm 26.1cm 7.9cm]{diagrams/threat-model.pdf}
    \caption{
        Example topology that does not satisfy the robust network assumptions in \citep{sundaram2018distributed,su2016multi}.
    }
    \label{fig:topology:dumbbell_variant2}
\end{figure}

\paragraph{Topologies that do not satisfy the robust network assumption in \citep{leblanc2013resilient,sundaram2018distributed,su2016multi}.}
The robust network assumption requires there to be at least $b+1$ paths between any two regular workers when there are $b$ Byzantine workers in the network \citep{leblanc2013resilient,sundaram2018distributed,su2016multi}. The topology in \Cref{fig:topology:dumbbell_variant2} only has 1 path between regular workers in two cliques while having 2 Byzantine workers in the network. Therefore this topology does not satisfy the robust network assumption. But the graph cut is not adjacent to the Byzantine workers and, intuitively, it would be possible for an ideal robust aggregator to help reach consensus. The experimental results are given in \Cref{apx:exp:weaker_assumption}.

\textbf{(Randomized) Small-world topology.} The small-world topology is a random graph generated with Watts-Strogatz model \citep{watts1998collective}. The topology is created using NetworkX package \citep{hagberg2008exploring} with 10 regular workers each connected to 2 nearest neighbors and probability of rewiring each edge as 0.15. Two additional Byzantine workers are linked to 2 random regular workers. There are 12 workers in total.

\textbf{Torus topology.} The regular workers form a torus grid $T_{3,3}$ and two additional Byzantine workers are linked to 2 random regular workers. There are 11 workers in total.

The mixing matrix for these topologies are constructed with Metropolis-Hastings algorithm introduced in the previous section. The spectral gap for small-world topology and torus topology are 0.084 and 0.131 respectively. In contrast, the dumbbell topology in \Cref{fig:tuning_tau} is more challenging with a spectral gap of 0.043. The data distribution is non-IID.

\subsection{Constructing mixing matrices} \label{ssec:construct_mixing_matrix}
In this section, we introduce a few possible ways to construct the mixing weight vectors in the presence of Byzantine workers. The constructed weight vectors satisfy \Cref{a:W} in \Cref{sec:setup}.

\begin{itemize}[nosep,leftmargin=*]
    \item \textbf{Metropolis-Hastings weight \citep{hastings1970monte}.} The Metropolis-Hastings algorithm locally constructs the mixing weights by exchanging degree information ($d_i$ and $d_j$) between two nodes $i$ and $j$. The mixing weight vector on regular worker $i\in\gset$ is computed as follows
          \begin{equation*}
              \mW_{ij}=
              \begin{cases}
                  \frac{1}{\max\{d_i, d_j\}+1}   & j\in\cN_i,        \\
                  1 - \tsum_{l\in\cN_i} \mW_{il} & j = i,            \\
                  0                              & \text{Otherwise}.
              \end{cases}
          \end{equation*}
          If worker $j\in\bset$ is Byzantine, then the only way for $j$ to maximize its weight $\mW_{ij}$ to regular worker $i$ is to report a smaller degree $d_j$. However, such Byzantine behavior of node $j$ has limited influence on worker $i$'s weight $\mW_{ij}$ because it can not be greater than $\frac{1}{d_i+1}$.

    \item \textbf{Equal-weight.} Let $d_{\max}$ be the maximum degree of nodes in a graph. Such upper bound $d_{\max}$ can be a public information, for example, a bluetooth device can at most connect to $d_{\max}$ other devices due to physical constraints. The Byzantine worker cannot change the value of $d_{\max}$. Then we use the following naive construction
          \begin{equation}
              \mW_{ij}=
              \begin{cases}
                  \frac{1}{d_{\max}+1}           & j\in\cN_i,        \\
                  1 - \frac{|\cN_i|}{d_{\max}+1} & j = i,            \\
                  0                              & \text{Otherwise}.
              \end{cases}
          \end{equation}

\end{itemize}
Note that these construction schemes are not proved to be the optimal. In this work, we focus on the Byzantine attacks given a topology and associated mixing weights. We leave it as future work to explore the best strategy to construct mixing weights.

\section{Experiments}\label{apx:sec:experiment}
We summarize the hardware and software for experiments in \Cref{tab:runtime}.
\begin{table}[!h]
    \caption{Runtime hardwares and softwares.}
    \scriptsize%
    \centering
    \label{tab:runtime}%
    \begin{tabular}{p{0.22\linewidth} p{0.7\linewidth}}
        \toprule
        CPU                       &                                             \\
        \hspace{1em} Model name   & Intel (R) Xeon (R) Gold 6132 CPU @ 2.60 GHz \\
        \hspace{1em} \# CPU(s)    & 56                                          \\
        \hspace{1em} NUMA node(s) & 2                                           \\ \midrule
        GPU                       &                                             \\
        \hspace{1em} Product Name & Tesla V100-SXM2-32GB                        \\
        \hspace{1em} CUDA Version & 11.0                                        \\ \midrule
        PyTorch                   &                                             \\
        \hspace{1em} Version      & 1.7.1                                       \\
        \bottomrule
    \end{tabular}
\end{table}
We list the setups and results of experiments for consensus in \Cref{ssec:consensus_attacks} and optimization in \Cref{ssec:optimization_setups}.

\subsection{Byzantine-robust consensus}\label{ssec:consensus_attacks}
\begin{wrapfigure}{r}{0.3\textwidth}
    \includegraphics[page=11,width=0.9\linewidth,clip,trim=4.3cm 9.75cm 26.1cm 7.9cm]{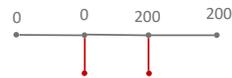}
    \caption{
        The topology for the attacks on consensus. The grey and red nodes denote regular and Byzantine workers respectively.
    }
    \label{fig:topology:consensus}
\end{wrapfigure}

In this section, we provide detailed setups for \Cref{fig:epsilon:spectral-gap}. The \Cref{fig:topology:consensus} demonstrates the topology for the experiment. The 4 regular workers are connected with two of them holding value 0 and the others holding 200. Then the average consensus is 100 with initial mean square error equals 10000. Two Byzantine workers are connected to two regular workers in the middle. We can tune the weights of each edge to change the mixing matrix and $\gamma$. Then we can decide the weight $\delta$ on the Byzantine edge. The $\gamma$ and $\delta$ used in the experiments are
\begin{itemize}[nosep,leftmargin=*]
    \item $p:=1-(1-\gamma)^2 \in$[0.06,0.05,0.04,0.03,0.02,0.01,0.005,0.0014,3.7e-4,1e-4,1e-5]
    \item $\delta\in[0.05, 0.1, 0.2, 0.3, 0.4, 0.5]$
\end{itemize}
where non-compatible combination of $\gamma$ and $\delta$ are ignored in the \Cref{fig:epsilon:spectral-gap}.
The dissensus attack is applied with $\epsilon=0.05$.
The hyperparameter $\beta$ of trimmed mean (\textsc{TM}) is set to the actual number of Byzantine workers around the regular worker. The clipping radius of \algo is chosen according to \eqref{eq:tau}.

In \Cref{fig:exp5:small_multiple}, we show the iteration-to-error curves for all possible combinations of $\gamma$ and $\delta$. In addition, we provide a version of \textsc{TM} and \textsc{Median} which takes the mixing weight into account. As we can see, the naive \textsc{TM}, \textsc{Median}, and \textsc{Median}* cannot bring workers closer because of the data distribution we constructed. The \textsc{TM}* is performing better than the other baselines but worse than \algo especially on the challenging cases where $\gamma$ is small and $\delta$ is large. For \algo, it matches with our intuition that for a fixed $\gamma$ the convergences is worse with increasing $\delta$ while for a fixed $\delta$ the convergence is worse with decreasing $\gamma$.
\begin{figure}[!h]
    \centering
    \includegraphics[width=\linewidth]{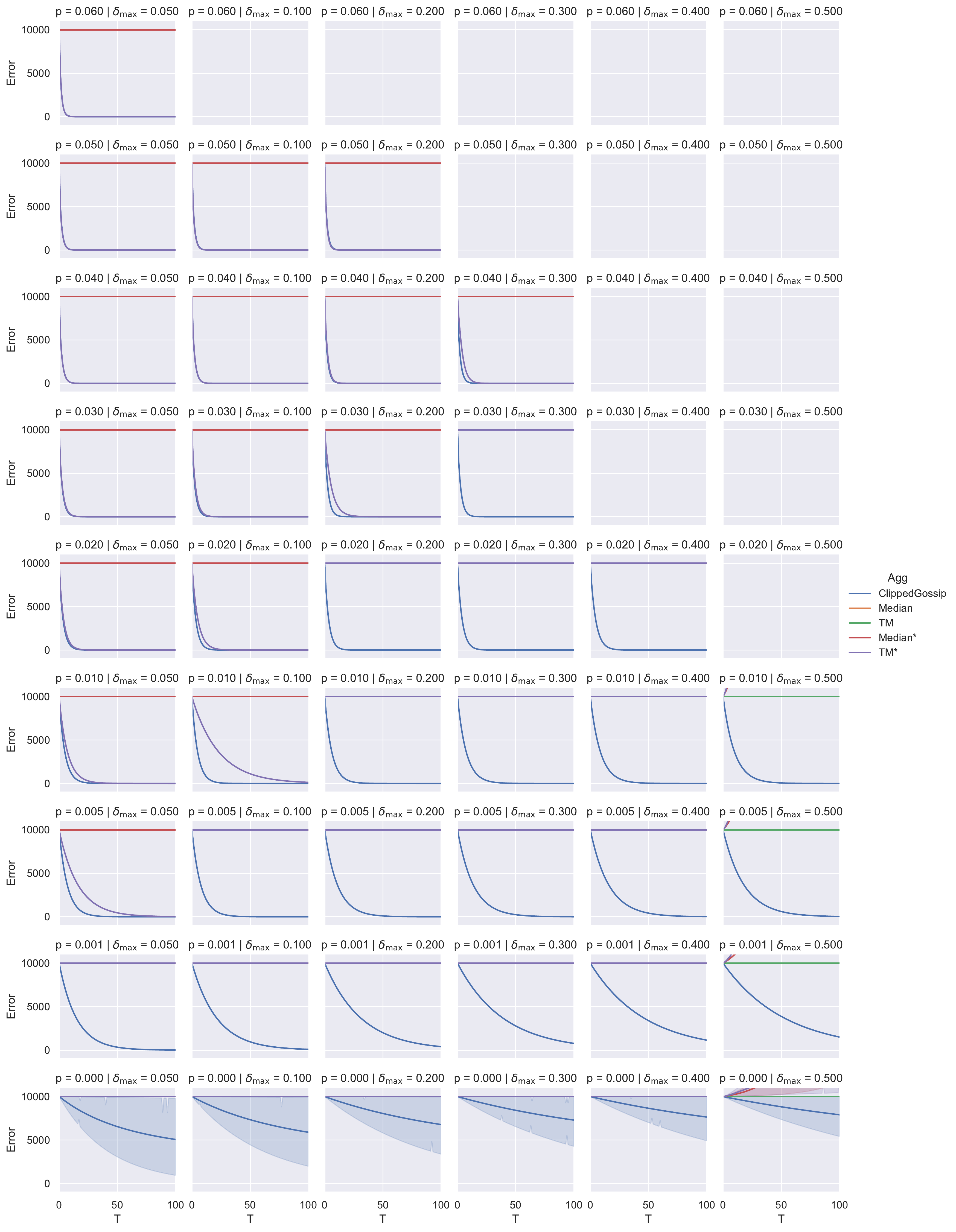}
    \caption{
        The iteration-to-error curves for defenses under dissensus attack. The \textsc{TM}* and \textsc{Median}* refer to the version of \textsc{TM} and \textsc{Median} which considers mixing weight.
    }
    \label{fig:exp5:small_multiple}
\end{figure}

\subsection{Byzantine-robust decentralized optimization} \label{ssec:optimization_setups}
In this section, we provide detailed hyperparameters and setups for experiments in the main text and then provide additional experiments.
For all MNIST tasks, we use the default setup listed in \Cref{tab:setup:mnist:default} unless specifically stated.
\begin{table}[t]
    \caption{Default experimental settings for MNIST}
    \centering
    \small
    \label{tab:setup:mnist:default}%
    \begin{tabular}{ll}
        \toprule
        Dataset               & MNIST                                           \\
        Architecture          & CONV-CONV-DROPOUT-FC-DROPOUT-FC                 \\
        Training objective    & Negative log likelihood loss                    \\
        Evaluation objective  & Top-1 accuracy                                  \\
        \midrule
        Batch size per worker & $32$                                            \\
        Momentum              & 0.9                                             \\
        Learning rate         & 0.01                                            \\
        LR decay              & No                                              \\
        LR warmup             & No                                              \\
        Weight decay          & No                                              \\
        \midrule
        Repetitions           & 1                                               \\
        Reported metric       & Mean test accuracy over the last 150 iterations \\
        \bottomrule
    \end{tabular}
\end{table}
The default hyperparameters of the robust aggregators: 1) For \textsc{GM}, we choose number of iterations $T=8$; 2) The \textsc{TM} drops top and bottom $\beta=\delta_{\max}n$ percent of values in each coordinate; 3) The clipping radius of \algo is $\tau=1$; 4) The model averaging hyperparameter of \textsc{Mozi} is $\alpha=1$.

\subsubsection{Setup for ``Decentralized defenses without attackers''}\label{sec:aggregator_hp}
The Fig.~\ref{fig:exp6_1} uses the dumbbell topology in Fig.~\ref{fig:impossibility} with 10 regular workers in each clique. There is no Byzantine workers. The experiments run for 900 iterations.
\textsc{Mozi} uses $\alpha=0.5$ and $\rho_i=0.99$  in this setting. For bucketing experiment, we choose bucket size of $s=2$. It means we randomly put at most two updates into one bucket and average within each bucket and then apply robust aggregators to the averaged updates.

\subsubsection{Setup for ``Effects of the number of Byzantine workers''}\label{ssec:lowerbound}

The Fig.~\ref{fig:vary_deltamax} uses a dumbbell topology variant in Fig.~\ref{fig:topology:dumbbell_variant} . The experiments run for 1500 iterations. In this experiment we choose $n-b=11$ and $b=0,1,2,3$. We choose the edge weight of Byzantine workers such that the $\widetilde{\mW}$ and $p$ remain the same for all these $b$. Then we can easily investigate the relation between $\delta_{\max} \in [0, \frac{b}{b + 3}]$ and $p$ by varying $b$. The hyperparameter of dissensus attack is set to $\epsilon_i=1.5$ for all workers and all experiments.

\begin{figure}[!t]
    \hfill
    \begin{minipage}[t]{0.45\textwidth}
        \centering
        \includegraphics[page=12,width=\linewidth,clip,trim=4.3cm 9.25cm 26.1cm 7.9cm]{diagrams/threat-model.pdf}
        \caption{
            Dumbbell variant where Byzantine workers maybe added to the central worker.
        }
        \label{fig:topology:dumbbell_variant}
    \end{minipage}
    \hfill
    \begin{minipage}[t]{0.45\textwidth}
        \includegraphics[width=\linewidth]{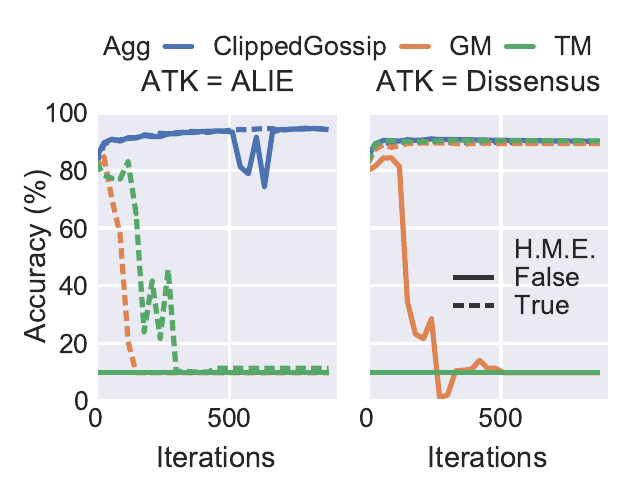}
        \caption{
            Accuracy of aggregators with or without the honest majority everywhere (H.M.E.) assumption. Regular workers are connected through a ring and have IID data. 
        }
        \label{fig:no_honest_majority}
    \end{minipage}
    \hfill
\end{figure}

\subsubsection{Setup for ``Defense without honest majority''}\label{subsec:honest-maj}
The Fig.~\ref{fig:no_honest_majority} uses the ring topology of 5 regular workers in Fig.~\ref{fig:topology:ring}. 11 Byzantine workers are added to the ring so that 1 regular worker do no have honest majority. The experiments run for 900 iterations.
We use $\epsilon_i=1.5$ for dissensus attacks. We use clipping radius $\tau=0.1$ for \algo.

\begin{wrapfigure}{r}{0.3\textwidth}
    \centering
    \includegraphics[page=13,width=\linewidth,clip,trim=4.5cm 9.45cm 26.3cm 8.1cm]{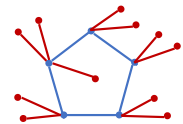}
    \caption{
        Ring topology without honest majority.
    }
    \label{fig:topology:ring}
\end{wrapfigure}

In the decentralized environment, the common \textit{honest majority} assumption in the federated learning setup can be strengthen to \textit{honest majority everywhere}, meaning all regular workers have an honest majority of neighbors \citep{su2016robust,yang2019byrdie,yang2019bridge}. Considering a ring of 5 regular workers with IID data, and adding~2 Byzantine workers to each node will still satisfy the honest majority assumption everywhere. Now adding one more Byzantine worker to a node will break the assumption. 

\Cref{fig:no_honest_majority} shows that while \textsc{TM} and \textsc{GM} can sometimes counter the attack under the honest majority assumption, adding one more Byzantine worker always corrupts the entire training. The \algo defend attacks successfully even beyond the assumption, because they leverage the fact that local updates are trustworthy.
This suggest that existing statistics-based aggregators which take no advantage of local information are vulnerable under this realistic decentralized threat model.

\subsubsection{Setup for ``More topologies and attacks.''}
In \Cref{fig:more_topologies}, we use the small-world and torus topologies described in \Cref{apx:ssec:constrained}. More specifically,
we created a randomized small-world topology using NetworkX package \citep{hagberg2008exploring} with 10 regular workers each connected to 2 nearest neighbors and probability of rewiring each edge as 0.15. Two additional Byzantine workers are linked to 2 random regular workers. There are 12 workers in total. For the torus topology, we let regular workers form a torus grid $T_{3,3}$ where all 9 regular workers are connected to 3 other workers. Two additional Byzantine workers are linked to 2 random regular workers. There are 11 workers in total.

The mixing matrix for these topologies are constructed with Metropolis-Hastings algorithm in \Cref{ssec:construct_mixing_matrix}. The spectral gap for small-world topology and torus topology are 0.084 and 0.131 respectively. In contrast, the dumbbell topology in \Cref{fig:tuning_tau} is more challenging with a spectral gap of 0.043. The data distribution is non-IID.

\subsection{Experiment: CIFAR-10 task}\label{sec:cifar}
In this section, we conduct experiments on CIFAR-10 dataset \cite{krizhevsky2009learning}.
The running environment of this experiment is the same as MNIST experiment \Cref{tab:runtime}.
The default setup for CIFAR-10 experiment is summarized in \Cref{tab:setup:cifar:default}.

We compare performances of 5 aggregators on dumbbell topology with 10 nodes in each clique (no attackers). The results of experiments are shown in \Cref{fig:exp6_1_CIFAR}.
In order to investigate if consensus has reached among the workers, we average the worker nodes in 3 different categories ( ``Global'', Clique A, and Clique B) and compare their performances on IID and NonIID datasets.
The ``IID-Global'' result show that GM and TM is much worse than \algo and Gossip, in contrast to the MNIST experiment \Cref{fig:exp6_1} where they have matching result. This is because the workers with in each clique are converging to different stationary point --- ``IID-Clique A'' and ``IID-Clique B'' show GM and TM in each clique can reach over 80\% accuracy which is close to Gossip. It demonstrates that GM and TM fail to reach consensus even in this Byzantine-free case and therefore vulnerable to attacks.

The NonIID experiment also support that \algo perform much better than all other robust aggregators. Notice that \algo's ``NonIID-Global'' performance is better than ``NonIID-Clique A'' and ``NonIID-Clique B'' while GM and TM's result are opposite. This is because \algo allows effective communication in this topology and therefore clique models are close to each other in the same local minima basin such that their average (global model) is better than both of them. The GM's and TM's clique models converge to different local minima, making their averaged model underperform.

\begin{table}[t]
    \caption{Default experimental settings for CIFAR-10}
    \centering
    \small
    \label{tab:setup:cifar:default}%
    \begin{tabular}{ll}
        \toprule
        Dataset               & CIFAR-10                                                        \\
        Architecture          & VGG-11\cite{simonyan2014very}                                   \\
        Training objective    & Cross entropy loss                                              \\
        Evaluation objective  & Top-1 accuracy                                                  \\
        \midrule
        Batch size per worker & $64$                                                            \\
        Momentum              & 0.9                                                             \\
        Learning rate         & 0.1                                                             \\
        LR decay              & 0.1 at epoch 80 and 120                                         \\
        LR warmup             & No                                                              \\
        Weight decay          & No                                                              \\
        \midrule
        Repetitions           & 1\footnote{Increase the number of seeds in the future version.} \\
        Reported metric       & Mean test accuracy over the last 150 iterations                 \\
        \bottomrule
    \end{tabular}
\end{table}

\begin{figure}[t]
    \centering
    \includegraphics[width=\linewidth]{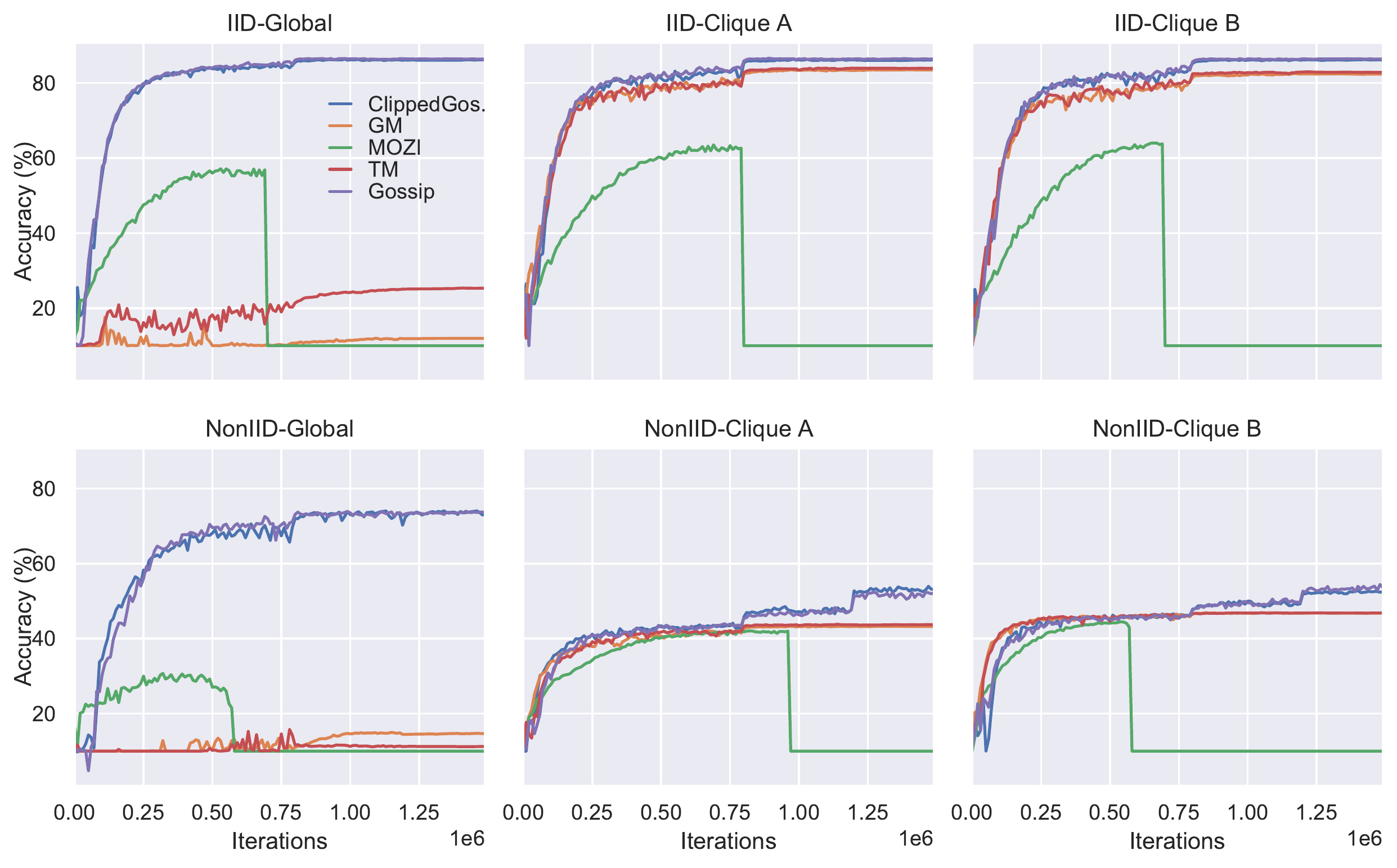}
    \caption{
        Train models on dumbbell topology with IID and NonIID datasets. The three figures in each row correspond to the same experiment with ``Global'', ``Clique A'', ``Clique B'' denoting the performances of globally averaged model, within-Clique A averaged model, and within-Clique B averaged model.
    }
    \label{fig:exp6_1_CIFAR}
\end{figure}

\subsection{Experiment for ``Weaker topology assumption''}\label{apx:exp:weaker_assumption}
As is mentioned in Remark~\ref{remark:assumption} and \Cref{apx:ssec:constrained}, the topology assumption in this work is weaker than the robust network assumption in \cite{su2016multi,sundaram2018distributed}. We use the topology in \Cref{fig:topology:dumbbell_variant2} which consists of 10 regular workers and 2 dissensus attack workers. While this topology does not satisfy the robust network assumption, it intuitively should allow communication between two cliques as no Byzantine workers are attached to the cut. However, both GM and TM will discard the graph cut due to data heterogeneity. This shows that GM and TM impede information diffusion. On the other hand, \algo is the only robust aggregator which help two cliques reaching consensus in the NonIID case. The \algo theoretically applies to more topologies and empirically perform better.

\begin{figure}[t]
    \centering
    \includegraphics[width=\linewidth]{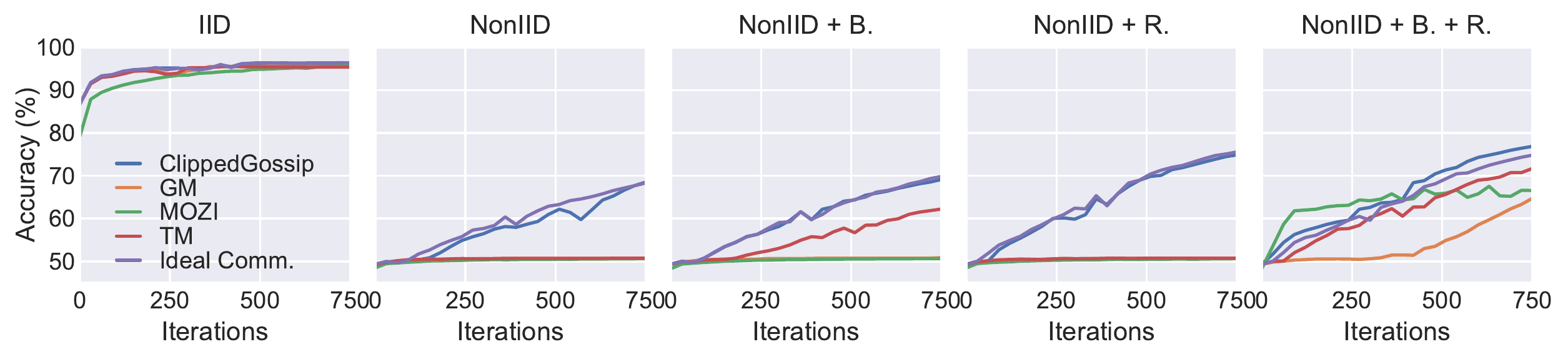}
    \caption{Compare robust aggregators under dissensus attacks over dumbbell topology \Cref{fig:more_topologies}.
    }
    \label{fig:dumbbell_variant2}
\end{figure}

\subsection{Experiment: choosing clipping radius}\label{ssub:additional}
In \Cref{fig:tuning_tau} we show the sensitive of tuning clipping radius. We use dumbbell topology with 5 regular workers in each clique and add 1 more Byzantine worker to each clique. The clipping radius is searched over a grid of $[0.1,0.5,1,2,10]$. The Byzantine workers are chosen to be Bit-Flipping, Label-Flipping, and ALIE.
\begin{figure}[t]
    \centering
    \includegraphics[width=\linewidth]{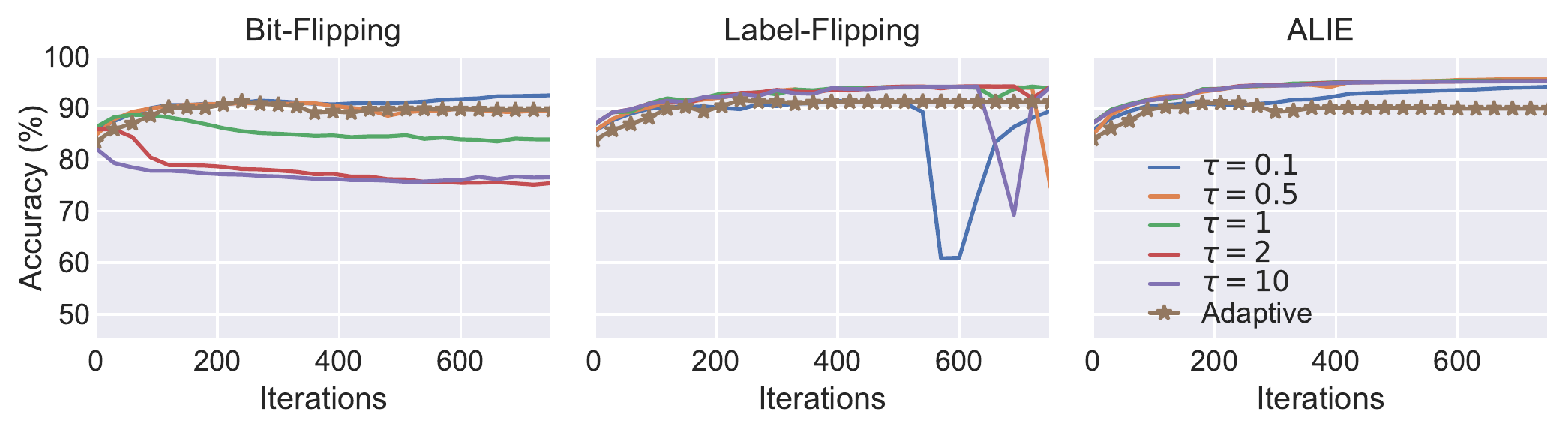}
    \caption{Tuning clipping radius on the dumbbell topology against Byzantine attacks. The y-axis is the averaged test accuracy over all of the regular workers.
    }
    \label{fig:tuning_tau}
\end{figure}

We also give an adaptive clipping strategy for different $i\in\gset$ and time $t$.
After communication step at time $t$, the value of $\xx^{t+\nicefrac{1}{2}}_{i}$ is available. Therefore we can sort the values of $\left\|\xx^{t+\nicefrac{1}{2}}_{i} - \xx_j^{t+\nicefrac{1}{2}} \right\|_2^2$ for all $j\in\cN_i$. We denote the set of indices set $\cS_i^t$ as the indices of workers that have the smallest distances to worker $i$
\[
    \cS_i^t = \argmin_{\cS:\sum_{j\in\cS}\mW_{ij} \le 1-\delta_{\max} } \sum_{j\in\cS} \left\|\xx^{t+\nicefrac{1}{2}}_{i} - \xx_j^{t+\nicefrac{1}{2}} \right\|_2^2.
\]
Then the adaptive strategy picks clipping radius as follows
\begin{equation}
    \textstyle
    \tau_i^{t+1} = \sqrt{
        \sum_{j\in\cS_i^t} \mW_{ij}
        \left\|
        \xx^{t+\nicefrac{1}{2}}_{i} - \xx_j^{t+\nicefrac{1}{2}}
        \right\|_2^2
    }.
\end{equation}
Note that this adaptive choice of clipping radius is generally a bit smaller than the theoretical value \eqref{eq:tau}. It guarantees that the Byzantine workers have limited influences at cost of small slow down on the convergence.

As we can see from \Cref{fig:tuning_tau}, the performances of \algo are similar with different constant choices of $\tau$ which shows that the choice of $\tau$ is not very sensitive. The adaptive algorithms perform well in all cases. Therefore, the adaptive choice of $\tau$ will be recommended in general.

\section{Analysis}\label{apx:sec:analysis}

We restate the core equations in \Cref{algo:ClippedGossip} at time $t$ on worker $i$ as follows
\begin{align}
    \mm_i^{t+1}                & = (1-\alpha) \mm_i^t + \alpha \gg_i(\xx_i^t) \label{eq:m}                                                                 \\
    \xx_i^{t+\nicefrac{1}{2}}  & = \xx_i^t - \eta \mm_i^{t+1} \label{eq:x_half}                                                                            \\
    \zz_{j\rightarrow i}^{t+1} & = \xx_i^{t+\nicefrac{1}{2}} + \textsc{Clip}(\xx_j^{t+\nicefrac{1}{2}} - \xx_i^{t+\nicefrac{1}{2}}, \tau_i^t) \label{eq:z} \\
    \xx_i^{t+1}                & = \sum_{j=1}^n \mW_{ij} \zz_{j\rightarrow i}^{t+1} \label{eq:x}
\end{align}
In addition, we define the following virtual iterates on the set of good nodes $\gset$
\begin{itemize}
    \item $\overline{\xx}^t=\frac{1}{|\gset|}\sum_{i\in\gset} \xx_i^t$ the average (over time) of good iterates.
    \item $\overline{\mm}^t=\frac{1}{|\gset|}\sum_{i\in\gset} \mm_i^t$ the average (over time) of momentum iterates.
\end{itemize}
In this proof, we define $p:=1-(1-\gamma)^2 \in (0,1]$ for convenience.

In this section, we show that the convergence behavior of the virtual iterates $\overline{\xx}^t$. The structure of this section is as follows:
\begin{itemize}
    \item In \Cref{ssec:di}, we give common quantities, simplified notations and list common equalities/inequalities used in the proof.
    \item In \Cref{ssec:lemmas}, we provide all auxiliary lemmas necessary for the proof. Among these lemmas, \Cref{lemma:sufficient_decrease} is the key sufficient descent lemma.
    \item In \Cref{ssec:main}, we provide the proof of the main theorem.
\end{itemize}

\subsection{Definitions, and inequalities}\label{ssec:di}
\paragraph{Notations for the proof.} We use the following variables to simplify the notation
\begin{itemize}
    \item Optimization sub-optimality: $$r^t:=f(\bar{\xx}^t) - f^\star$$
    \item Consensus distance: $$\Xi^t:= \frac{1}{|\gset|} \sum_{i\in\gset} \norm{\xx_i^t - \bar{\xx}^t}_2^2$$
    \item The distance between the ideal gradient and actual averaged momentum
          $$e_1^{t+1}:=\expect\norm{\nabla f(\bar{\xx}^{t}) - \bar{\mm}^{t+1}}_2^2$$
    \item Similarly, the distance between the ideal gradient and individual momentums
          $$\tilde{e}_1^{t+1}:=\frac{1}{|\gset|}\sum_{i\in\gset}\expect\norm{\nabla f(\bar{\xx}^{t}) - \mm_i^{t+1}}_2^2$$
    \item Similar, \rebuttal{distance between individual ideal gradients and individual momentums which is weighted by the mixing matrix}
          $$\bar{e}_1^{t+1}:=\frac{1}{|\gset|}\sum_{i\in\gset}\expect\norm{
              \sum_{j\in\gset} \widetilde{\mW}_{ij} (\nabla f_j(\bar{\xx}^{t}) - \mm_j^{t+1})}_2^2$$
    \item Similar we have \rebuttal{distance between individual ideal gradients and individual momentums}
          \begin{equation*}
              e_{\mI}^{t+1}:=\frac{1}{|\gset|}\sum_{i\in\gset}\expect\norm{
                  \mm_i^{t+1} - \nabla f_i(\bar{\xx}^{t})}_2^2,
          \end{equation*}
    \item Let $e_2^{t+1}$ be the averaged squared error introduced by clipping and Byzantine workers
          \begin{equation*}
              e_2^{t+1}:=\frac{1}{|\gset|}\sum_{i\in\gset} \expect\left\|
              \sum_{j\in\gset} \mW_{ij} (\zz^{t+1}_{j\rightarrow i} - \xx_j^{t+\nicefrac{1}{2}})+
              \sum_{j\in\bset} \mW_{ij} (\zz^{t+1}_{j\rightarrow i} - \xx_i^{t+\nicefrac{1}{2}})
              \right\|_2^2.
          \end{equation*}
\end{itemize}
\begin{lemma}[Common equalities and inequalities]
    We use the following equalities and inequalities
    \begin{itemize}
        \item The cosine theorem: $\forall~\xx,\yy\in\R^d$
              \begin{equation} \label{eq:cosine}
                  \langle \xx, \yy \rangle
                  = - \frac{1}{2} \norm{\xx - \yy}^2_2 + \frac{1}{2}\norm{\xx}_2^2 + \frac{1}{2}\norm{\yy}_2^2
              \end{equation}
        \item Young's inequality: For $\epsilon>0$ and $\xx,\yy\in\R^d$
              \begin{equation}\label{eq:young}
                  \norm{\xx + \yy}^2_2 \le (1+\epsilon) \norm{\xx}^2_2
                  + (1+\epsilon^{-1})\norm{\yy}_2^2
              \end{equation}
        \item If $f$ is convex, then for $\alpha\in[0, 1]$ and $\xx, \yy\in\R^d$
              \begin{equation}
                  \label{eq:convex}
                  f(\alpha \xx + (1-\alpha) \yy) \le \alpha f(\xx) + (1-\alpha) f(\yy)
              \end{equation}
        \item Cauchy-Schwarz inequality
              \begin{equation}\label{eq:cs}
                  \langle \xx, \yy \rangle \le \norm{\xx}_2 \norm{\yy}_2
              \end{equation}
        \item Let $\{\xx_i: i\in[m]\}$ be independent random variables
              and $\expect {\xx_i}=\0$ and $\expect\norm{\xx_i}^2=\sigma^2$ then
              \begin{equation}\label{eq:random}
                  \E \norm{\tfrac{1}{m}\tsum_{i=1}^m \xx_i}_2^2
                  = \frac{\sigma^2}{m} 
              \end{equation}
    \end{itemize}
\end{lemma}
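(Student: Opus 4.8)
The statement collects five elementary facts, each of which follows from the bilinearity of the inner product together with one short auxiliary estimate; the plan is to establish them one at a time, as none relies on any structure specific to the paper. I would order them so that Cauchy--Schwarz is available before Young's inequality, since the latter uses it.

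For the cosine theorem \eqref{eq:cosine}, I would write $\norm{\xx-\yy}_2^2 = \langle \xx-\yy, \xx-\yy\rangle$ and expand by symmetry and bilinearity to get $\norm{\xx-\yy}_2^2 = \norm{\xx}_2^2 - 2\langle\xx,\yy\rangle + \norm{\yy}_2^2$, then solve for $\langle\xx,\yy\rangle$. For Cauchy--Schwarz \eqref{eq:cs} I would use the nonnegativity of the real quadratic $q(t):=\norm{\xx - t\yy}_2^2 = \norm{\xx}_2^2 - 2t\langle\xx,\yy\rangle + t^2\norm{\yy}_2^2 \ge 0$: since $q(t)\ge 0$ for all $t\in\R$, its discriminant is nonpositive, which gives $\langle\xx,\yy\rangle^2 \le \norm{\xx}_2^2\norm{\yy}_2^2$ and hence $\langle\xx,\yy\rangle \le \norm{\xx}_2\norm{\yy}_2$ (the case $\yy=\0$ being trivial).

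Young's inequality \eqref{eq:young} then follows by expanding $\norm{\xx+\yy}_2^2 = \norm{\xx}_2^2 + 2\langle\xx,\yy\rangle + \norm{\yy}_2^2$ and dominating the cross term. First I would apply Cauchy--Schwarz, $2\langle\xx,\yy\rangle \le 2\norm{\xx}_2\norm{\yy}_2$, and then the scalar AM--GM bound $2ab \le \epsilon a^2 + \epsilon^{-1}b^2$ with $a=\norm{\xx}_2$, $b=\norm{\yy}_2$, which is itself obtained by expanding $\bigl(\sqrt{\epsilon}\,a - \epsilon^{-1/2}b\bigr)^2 \ge 0$; combining these gives $2\langle\xx,\yy\rangle \le \epsilon\norm{\xx}_2^2 + \epsilon^{-1}\norm{\yy}_2^2$ and hence the claim. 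The convexity inequality \eqref{eq:convex} needs no argument at all: for $\alpha\in[0,1]$ it is exactly the defining inequality of a convex function evaluated at the two points $\xx,\yy$, and is only ever invoked under the stated hypothesis that $f$ is convex.

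Finally, for the variance identity \eqref{eq:random} I would expand the squared norm by bilinearity, $\E\norm{\tfrac1m\sum_{i=1}^m\xx_i}_2^2 = \tfrac1{m^2}\sum_{i=1}^m\sum_{j=1}^m \E\langle\xx_i,\xx_j\rangle$, and split into diagonal and off-diagonal contributions. For $i\neq j$, independence together with $\E\xx_i=\0$ gives $\E\langle\xx_i,\xx_j\rangle = \langle\E\xx_i,\E\xx_j\rangle = 0$, so all cross terms vanish; the $m$ diagonal terms each equal $\E\norm{\xx_i}_2^2=\sigma^2$, yielding $\tfrac1{m^2}\cdot m\sigma^2 = \sigma^2/m$. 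I do not expect any genuine obstacle here: every line is a direct consequence of inner-product algebra or a definition, and the only two points requiring mild care are the direction of the AM--GM estimate in Young's inequality and the vanishing of the off-diagonal expectations above, both of which rest solely on the independence-plus-zero-mean hypothesis.
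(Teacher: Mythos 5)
Your proposal is correct: all five derivations (polarization/cosine identity, the discriminant argument for Cauchy--Schwarz, Cauchy--Schwarz plus scalar AM--GM for Young's inequality, convexity as the definition, and the vanishing of cross terms by independence and zero mean for the variance identity) are the standard arguments and contain no gaps. The paper itself states this lemma without proof, treating the items as standard tools, so your write-up simply supplies the canonical textbook justifications that the authors implicitly rely on.
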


\subsection{Lemmas}\label{ssec:lemmas}
The following lemma establish the update rule for $\bar{\xx}^t$.
\begin{lemma}
    Assume \Cref{corollary:W}. Let $\Delta^{t+1}$ be the error incurred by clipping and $\bset$
    \begin{equation}
        \Delta^{t+1}:=\frac{1}{|\gset|}\sum_{i\in\gset} \left(
        \sum_{j\in\gset} \mW_{ij} (\zz^{t+1}_{j\rightarrow i} - \xx_j^{t+\nicefrac{1}{2}})+
        \sum_{j\in\bset} \mW_{ij} (\zz^{t+1}_{j\rightarrow i} - \xx_i^{t+\nicefrac{1}{2}})
        \right).
    \end{equation}
    Then the virtual iterate updates
    \begin{equation}\label{eq:virtual_update}
        \bar{\xx}^{t+1} = \bar{\xx}^{t} - \eta \bar{\mm}^{t+1} + \Delta^{t+1}.
    \end{equation}
\end{lemma}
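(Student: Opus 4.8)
The plan is to start from the definition $\bar{\xx}^{t+1} = \tfrac{1}{|\gset|}\sum_{i\in\gset}\xx_i^{t+1}$ and unfold the aggregation rule \eqref{eq:x}, reorganizing the summands so that the ``clean'' part reduces to a plain gossip step under the hypothetical matrix $\widetilde{\mW}$ while everything else collects into $\Delta^{t+1}$. Concretely, for each regular node $i\in\gset$ I would split the neighbor sum in \eqref{eq:x} into its regular and Byzantine contributions, adding and subtracting the reference point used by the clipping map in \eqref{eq:z} (namely $\xx_j^{t+\nicefrac{1}{2}}$ when $j\in\gset$ and the local model $\xx_i^{t+\nicefrac{1}{2}}$ when $j\in\bset$):
\[
\xx_i^{t+1} = \sum_{j\in\gset}\mW_{ij}\xx_j^{t+\nicefrac{1}{2}} + \sum_{j\in\bset}\mW_{ij}\xx_i^{t+\nicefrac{1}{2}} + \Big(\sum_{j\in\gset}\mW_{ij}(\zz^{t+1}_{j\rightarrow i}-\xx_j^{t+\nicefrac{1}{2}}) + \sum_{j\in\bset}\mW_{ij}(\zz^{t+1}_{j\rightarrow i}-\xx_i^{t+\nicefrac{1}{2}})\Big),
\]
where, once averaged over $i\in\gset$, the final parenthesized group is precisely $\Delta^{t+1}$ by definition.

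The key manipulation is to fold the Byzantine weight back onto the diagonal of $\widetilde{\mW}$. Since $\sum_{j\in\bset}\mW_{ij}=\delta_i$, the second term equals $\delta_i\,\xx_i^{t+\nicefrac{1}{2}}$, and because $\widetilde{\mW}$ adds exactly $\delta_i$ to $\mW_{ii}$, the first two terms combine into $\sum_{j\in\gset}\widetilde{\mW}_{ij}\xx_j^{t+\nicefrac{1}{2}}$. Averaging over $i$ and exchanging the order of summation, I would then invoke \Cref{corollary:W}: since $\widetilde{\mW}$ is doubly stochastic, its column sums equal $1$, so the gossip step leaves the mean of the half-iterates untouched,
\[
\frac{1}{|\gset|}\sum_{i\in\gset}\sum_{j\in\gset}\widetilde{\mW}_{ij}\xx_j^{t+\nicefrac{1}{2}} = \frac{1}{|\gset|}\sum_{j\in\gset}\Big(\sum_{i\in\gset}\widetilde{\mW}_{ij}\Big)\xx_j^{t+\nicefrac{1}{2}} = \frac{1}{|\gset|}\sum_{j\in\gset}\xx_j^{t+\nicefrac{1}{2}}.
\]
Finally I would substitute the gradient step \eqref{eq:x_half}, $\xx_i^{t+\nicefrac{1}{2}}=\xx_i^t-\eta\mm_i^{t+1}$, and average to get $\tfrac{1}{|\gset|}\sum_{i\in\gset}\xx_i^{t+\nicefrac{1}{2}} = \bar{\xx}^t-\eta\bar{\mm}^{t+1}$, which combined with the previous step yields the claimed identity \eqref{eq:virtual_update}.

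There is no genuine analytic difficulty here; the argument is purely algebraic bookkeeping. The one point that requires care --- and the reason the hypothetical matrix $\widetilde{\mW}$ was introduced at all --- is the treatment of the Byzantine edges: because the clipping in \eqref{eq:z} recenters each Byzantine message on the \emph{local} iterate $\xx_i^{t+\nicefrac{1}{2}}$ rather than on $\xx_j^{t+\nicefrac{1}{2}}$, the ``missing'' edge mass $\delta_i$ would otherwise break stochasticity. Absorbing it onto the diagonal restores a doubly stochastic effective mixing matrix, so that the mean-preservation identity above holds exactly and the error terms are cleanly isolated in $\Delta^{t+1}$.
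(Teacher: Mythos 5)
Your proposal is correct and follows essentially the same route as the paper's proof: split the aggregation into regular and Byzantine contributions, add and subtract the respective clipping reference points to isolate $\Delta^{t+1}$, and then observe that the remaining ``clean'' part preserves the mean of the half-iterates. The only cosmetic difference is that you invoke the double stochasticity of $\widetilde{\mW}$ from Lemma~\ref{corollary:W} to get mean preservation, whereas the paper computes the column sums directly as $\sum_{i\in\gset}\mW_{ij}=1-\delta_j$ and recombines with the $\delta_i\,\xx_i^{t+\nicefrac{1}{2}}$ terms --- an identical calculation.
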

\begin{proof}
    Expand $\bar{\xx}^{t+1}$ with the definition of $\xx_i^{t+1}$ in \eqref{eq:x} yields
    \begin{align*}
        \bar{\xx}^{t+1}
        = & \frac{1}{|\gset|}\sum_{i\in\gset} \xx_i^{t+1}
        = \frac{1}{|\gset|}\sum_{i\in\gset}\left(\sum_{j\in\gset} \mW_{ij} \zz^{t+1}_{j\rightarrow i}
        + \sum_{j\in\bset} \mW_{ij} \zz^{t+1}_{j\rightarrow i}\right)
        \\
        = & \frac{1}{|\gset|}\sum_{i\in\gset}\left(\sum_{j\in\gset} \mW_{ij} (\zz^{t+1}_{j\rightarrow i} - \xx_j^{t+\nicefrac{1}{2}}) + \sum_{j\in\gset} \mW_{ij}\xx_j^{t+\nicefrac{1}{2}}
        \right)
        \\
          & + \frac{1}{|\gset|}\sum_{i\in\gset} \left(
        \sum_{j\in\bset} \mW_{ij} (\zz^{t+1}_{j\rightarrow i} - \xx_i^{t+\nicefrac{1}{2}})
        + \sum_{j\in\bset} \mW_{ij} \xx_i^{t+\nicefrac{1}{2}}
        \right).
    \end{align*}
    Reorganize the terms to form $\Delta^{t+1}$
    \begin{align*}
        \bar{\xx}^{t+1}= & \frac{1}{|\gset|}\sum_{i\in\gset}\left(\sum_{j\in\gset} \mW_{ij}\xx_j^{t+\nicefrac{1}{2}} + \sum_{j\in\bset} \mW_{ij} \xx_i^{t+\nicefrac{1}{2}}
        \right) + \Delta^{t+1}
        \\
        =                & \frac{1}{|\gset|}\sum_{j\in\gset} (1-\delta_j)\xx_j^{t+\nicefrac{1}{2}} + \frac{1}{|\gset|}\sum_{i\in\gset} \delta_i \xx_i^{t+\nicefrac{1}{2}} + \Delta^{t+1} \\
        =                & \frac{1}{|\gset|}\sum_{i\in\gset} \xx_i^{t+\nicefrac{1}{2}} + \Delta^{t+1}
        = \frac{1}{|\gset|}\sum_{i\in\gset} (\xx_i^{t} - \eta \mm_i^{t+1}) + \Delta^{t+1}                                                                                                \\
        =                & \bar{\xx}_i^t - \eta \bar{\mm}^{t+1} + \Delta^{t+1}.
        \qedhere
    \end{align*}
\end{proof}
Note that the $\Delta^{t+1}$ can be written as the follows
\[
    \Delta^{t+1} =\frac{1}{|\gset|}\sum_{i\in\gset} \left(
    \xx_i^{t+1} -
    \sum_{j\in\gset} \tilde{\mW}_{ij} \xx_j^{t+\nicefrac{1}{2}}
    \right)
    = \bar{\xx}^{t+1} - \frac{1}{|\gset|}\sum_{i\in\gset} \xx_i^{t+\nicefrac{1}{2}}
    .
\]
where measures the error introduced to $\bar{\xx}^{t+1}$ considering the impact of Byzantine workers and clipping. Therefore when $\bset=\emptyset$ and $\tau$ is sufficiently large, $\Delta^{t+1}=0$ and $\bar{\xx}^{t+1}$ converge at the same rate as the centralized SGD with momentum.

Recall that $e_1^{t+1}:=\expect \norm{\nabla f(\bar{\xx}^{t}) - \bar{\mm}^{t+1}}_2^2$. The key descent lemma is stated as follow
\begin{lemma}[Sufficient decrease]\label{lemma:sufficient_decrease}
    Assume \Cref{a:L} and $\eta\le\frac{1}{2L}$, then
    \begin{align*}
        \expect f(\bar{\xx}^{t+1}) \le &
        f(\bar{\xx}^{t})
        - \frac{\eta}{2}\norm{\nabla f(\bar{\xx}^{t})}_2^2
        - \frac{\eta}{4} \expect \norm{\bar{\mm}^{t+1}-\frac{1}{\eta}\Delta^{t+1}}_2^2
        + \eta e_1^{t+1}
        + \frac{1}{\eta}e_2^{t+1}.
    \end{align*}
\end{lemma}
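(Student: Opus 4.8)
The plan is to treat the averaged iterate as a single perturbed gradient step and run the standard smoothness descent argument. Writing the virtual update \eqref{eq:virtual_update} as $\bar{\xx}^{t+1} = \bar{\xx}^{t} - \eta\,\dd^{t+1}$ with the \emph{effective direction} $\dd^{t+1} := \bar{\mm}^{t+1} - \tfrac{1}{\eta}\Delta^{t+1}$, I would first invoke $L$-smoothness of $f$. Since $f=\tfrac{1}{|\gset|}\sum_{i\in\gset}f_i$ is an average of $L$-smooth functions (\Cref{a:L}), it is itself $L$-smooth, so the descent lemma gives
$$f(\bar{\xx}^{t+1}) \le f(\bar{\xx}^{t}) - \eta\inp{\nabla f(\bar{\xx}^{t})}{\dd^{t+1}} + \tfrac{L\eta^2}{2}\norm{\dd^{t+1}}^2.$$

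Next I would split the inner product using the cosine identity \eqref{eq:cosine} with $\xx=\nabla f(\bar{\xx}^{t})$ and $\yy=\dd^{t+1}$, which rewrites $-\eta\inp{\cdot}{\cdot}$ as $\tfrac{\eta}{2}\norm{\nabla f(\bar{\xx}^{t})-\dd^{t+1}}^2 - \tfrac{\eta}{2}\norm{\nabla f(\bar{\xx}^{t})}^2 - \tfrac{\eta}{2}\norm{\dd^{t+1}}^2$. Collecting the two $\norm{\dd^{t+1}}^2$ contributions produces the coefficient $-\tfrac{\eta}{2}(1-L\eta)$; the stepsize condition $\eta\le\tfrac{1}{2L}$ forces $1-L\eta\ge\tfrac12$, so this coefficient is at most $-\tfrac{\eta}{4}$, which yields exactly the $-\tfrac{\eta}{4}\expect\norm{\bar{\mm}^{t+1}-\tfrac{1}{\eta}\Delta^{t+1}}^2$ term in the claim after taking expectations.

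It then remains to control the cross term $\tfrac{\eta}{2}\expect\norm{\nabla f(\bar{\xx}^{t})-\dd^{t+1}}^2 = \tfrac{\eta}{2}\expect\norm{(\nabla f(\bar{\xx}^{t})-\bar{\mm}^{t+1}) + \tfrac{1}{\eta}\Delta^{t+1}}^2$. I would apply Young's inequality \eqref{eq:young} with $\epsilon=1$ to separate the momentum-error part from the clipping/Byzantine part, obtaining the bound $\eta\,e_1^{t+1} + \tfrac{1}{\eta}\expect\norm{\Delta^{t+1}}^2$. Finally, since $\Delta^{t+1}$ is the average over $i\in\gset$ of the per-node error vectors whose mean squared norm is precisely $e_2^{t+1}$, Jensen's inequality gives $\expect\norm{\Delta^{t+1}}^2 \le e_2^{t+1}$; substituting this in produces the stated $\eta\,e_1^{t+1} + \tfrac{1}{\eta}e_2^{t+1}$ and completes the bound.

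The argument is essentially a bookkeeping exercise, so no single step is a genuine obstacle. The only points requiring care are recognizing that the relevant descent direction is the \emph{corrected} quantity $\bar{\mm}^{t+1}-\tfrac{1}{\eta}\Delta^{t+1}$ rather than $\bar{\mm}^{t+1}$ alone, and tracking constants so that the Young split with $\epsilon=1$ delivers exactly the coefficients $\eta$ and $\tfrac{1}{\eta}$ in front of $e_1^{t+1}$ and $e_2^{t+1}$, together with the $\tfrac{\eta}{4}$ retention of the descent term guaranteed by $\eta\le\tfrac{1}{2L}$.
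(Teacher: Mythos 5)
Your proposal is correct and follows essentially the same route as the paper's proof: smoothness applied to the perturbed update $\bar{\xx}^{t+1}=\bar{\xx}^t-\eta(\bar{\mm}^{t+1}-\tfrac{1}{\eta}\Delta^{t+1})$, the cosine identity on the inner product, the stepsize condition $\eta\le\tfrac{1}{2L}$ to retain the $-\tfrac{\eta}{4}$ coefficient, Young's inequality with $\epsilon=1$, and finally $\expect\norm{\Delta^{t+1}}_2^2\le e_2^{t+1}$ by Jensen. Your explicit justification of that last Jensen step is a small clarification the paper states without proof; otherwise the two arguments coincide.
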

\begin{proof}
    Use smoothness \Cref{a:L} and expand it with \eqref{eq:virtual_update}
    \begin{align*}
        f(\bar{\xx}^{t+1}) \le &
        f(\bar{\xx}^{t}) - \langle \nabla f(\bar{\xx}^{t}), \eta \bar{\mm}^{t+1}-\Delta^{t+1} \rangle
        + \frac{L}{2} \norm{\eta \bar{\mm}^{t+1} - \Delta^{t+1}}_2^2
    \end{align*}
    Apply cosine theorem \eqref{eq:cosine} to the inner product $\eta\langle \nabla f(\bar{\xx}^{t}), \bar{\mm}^{t+1}-\frac{1}{\eta}\Delta^{t+1} \rangle$ yields
    \begin{align*}
        \expect f(\bar{\xx}^{t+1}) \le &
        f(\bar{\xx}^{t})
        - \frac{\eta}{2}\norm{\nabla f(\bar{\xx}^{t})}_2^2
        - \left(\frac{\eta-L\eta^2}{2}\right) \expect \norm{\bar{\mm}^{t+1}-\frac{1}{\eta}\Delta^{t+1}}_2^2
        \\
                                       &
        + \frac{\eta}{2}\expect\norm{\nabla f(\bar{\xx}^{t}) - \bar{\mm}^{t+1}+\frac{1}{\eta}\Delta^{t+1}}_2^2.
    \end{align*}
    If step size $\eta\le\frac{1}{2L}$, then $-\frac{\eta-L\eta^2}{2} \le -\frac{\eta}{4}$. Applying inequality \rebuttal{\eqref{eq:young}} to the last term
    \begin{align*}
        \frac{\eta}{2} \expect\norm{\nabla f(\bar{\xx}^{t}) - \bar{\mm}^{t+1}+\frac{1}{\eta}\Delta^{t+1}}_2^2
        \le \eta \expect\norm{\nabla f(\bar{\xx}^{t}) - \bar{\mm}^{t+1}}_2^2
        + \frac{1}{\eta}\expect\norm{\Delta^{t+1}}_2^2.
    \end{align*}
    Since $e_1^{t+1}:=\expect\norm{\nabla f(\bar{\xx}^{t}) - \bar{\mm}^{t+1}}_2^2$ and $\expect\norm{\Delta^{t+1}}_2^2 \le e_2^{t+1}$, then we have
    \begin{align*}
        \expect f(\bar{\xx}^{t+1}) \le &
        f(\bar{\xx}^{t})
        - \frac{\eta}{2}\norm{\nabla f(\bar{\xx}^{t})}_2^2
        - \frac{\eta}{4} \expect\norm{\bar{\mm}^{t+1}-\frac{1}{\eta}\Delta^{t+1}}_2^2
        + \eta e_1^{t+1}
        + \frac{1}{\eta} e_2^{t+1}. \qedhere
    \end{align*}
\end{proof}
In the next lemma, we establish the recursion for the distance between momentums and gradients
\begin{lemma}\label{lemma:e1_bar}
    Assume \Cref{a:L,a:noise,corollary:W},
    For any doubly stochastic mixing matrix $\mA\in\R^{n\times n}$
    \begin{equation*}
        e_A^{t+1}=\frac{1}{|\gset|}\sum_{i\in\gset}\expect\norm{
        \sum_{j\in\gset} \mA_{ij} (\mm_j^{t+1} - \nabla f_j(\bar{\xx}^{t}))}_2^2,
    \end{equation*}
    then we have the following recursion
    \begin{equation}\label{eq:e1_bar}
        e_A^{t+1}
        \le (1-\alpha)e_A^t
        + \frac{\alpha^2 \sigma^2}{|\gset|} \norm{\mA}_{F,\gset}^2
        + 2\alpha L^2 \Xi^t
        + \frac{2L^2\eta^2}{\alpha} \norm{
            \bar{\mm}^t - \frac{1}{\eta} \Delta^t
        }_2^2.
    \end{equation}
    where we define $\norm{\mA}_{F,\gset}^2:=\sum_{i\in\gset}\sum_{j\in\gset} \mA^2_{ij}$
    Therefore,
    \begin{itemize}
        \item If $\mA_{ij}=\frac{1}{|\gset|}$ for all $i,j\in\gset$, then $e_A^{t+1}=e_1^{t+1}$ and $\norm{\mA}_{F,\gset}^2=1$.
        \item If $\mA=\widetilde{\mW}$, then $e_A^{t+1}=\bar{e}_1^{t+1}$ and $\norm{\mA}_{F,\gset}^2=\sum_{i\in\gset}\sum_{j\in\gset} \widetilde{\mW}^2_{ij}\le |\gset|$.
        \item If $\mA=\mI$, then $\norm{\mA}_{F,\gset}^2=|\gset|$. In addition,
              \begin{align*}
                  \tilde{e}_1^{t+1}\le 2 e_{\mI}^{t+1} + 2 \zeta^2
              \end{align*}
              where $\mA=\mI$.
    \end{itemize}
\end{lemma}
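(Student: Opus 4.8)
The plan is to unroll one step of the momentum recursion \eqref{eq:m} and track how the momentum-gradient error transforms when the reference point moves from $\bar\xx^{t-1}$ to $\bar\xx^t$. Writing $\gg_j(\xx_j^t) = \nabla f_j(\xx_j^t) + \bxi_j^t$ with $\bxi_j^t$ the zero-mean stochastic noise, and using $(1-\alpha)+\alpha=1$, I would first establish the per-worker identity
\begin{equation*}
    \mm_j^{t+1} - \nabla f_j(\bar\xx^t)
    = (1-\alpha)\underbrace{(\mm_j^t - \nabla f_j(\bar\xx^{t-1}))}_{\aa_j}
    + (1-\alpha)\underbrace{(\nabla f_j(\bar\xx^{t-1}) - \nabla f_j(\bar\xx^t))}_{\bb_j}
    + \alpha\underbrace{(\nabla f_j(\xx_j^t) - \nabla f_j(\bar\xx^t))}_{\cc_j}
    + \alpha\bxi_j^t .
\end{equation*}
After applying $\sum_{j\in\gset}\mA_{ij}(\cdot)$, the $\aa_j$ part is exactly what reproduces $e_A^t$. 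Since $\bxi_j^t$ is independent across $j\in\gset$ and zero-mean conditioned on the filtration at time $t$ (note $\xx_j^t$ and $\bar\xx^t$ are measurable at that point), it separates in expectation and contributes $\alpha^2\sum_j\mA_{ij}^2\expect\norm{\bxi_j^t}^2\le \alpha^2\sigma^2\sum_j\mA_{ij}^2$; summing over $i\in\gset$ and dividing by $|\gset|$ yields the $\tfrac{\alpha^2\sigma^2}{|\gset|}\norm{\mA}_{F,\gset}^2$ term.

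The crux is bounding the deterministic remainder $\sum_j\mA_{ij}\big((1-\alpha)\aa_j + (1-\alpha)\bb_j + \alpha\cc_j\big)$ so that the coefficient on $e_A^t$ is \emph{exactly} $(1-\alpha)$ while the correct $\alpha$-scaling of the other two terms is preserved. A naive convexity split is too lossy here. Instead I would apply Young's inequality \eqref{eq:young} with the tuned parameter $\epsilon=\tfrac{\alpha}{1-\alpha}$ to separate the $\aa$-term from the $(\bb,\cc)$-remainder: this makes $(1+\epsilon)(1-\alpha)^2=(1-\alpha)$ and leaves the prefactor $(1+\epsilon^{-1})=\tfrac1\alpha$ on $\norm{(1-\alpha)\sum_j\mA_{ij}\bb_j + \alpha\sum_j\mA_{ij}\cc_j}_2^2$. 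The key subtlety is that this inner term must then be split by the \emph{plain} relaxed triangle inequality $\norm{u+v}_2^2\le 2\norm{u}_2^2+2\norm{v}_2^2$ (keeping the quadratic scaling intact), not by convexity, which would collapse $\alpha^2$ to $\alpha$ and then $\tfrac1\alpha\cdot\alpha$ to $O(1)$. Keeping quadratic scaling instead produces $\tfrac{2(1-\alpha)^2}{\alpha}\norm{\sum_j\mA_{ij}\bb_j}_2^2 + 2\alpha\norm{\sum_j\mA_{ij}\cc_j}_2^2$, so the $\cc$-term correctly retains its factor $\alpha$.

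It remains to bound the two inner norms via smoothness (\Cref{a:L}) and Jensen's inequality, using that $\mA$ is row-stochastic. For $\cc_j$ I would write $\norm{\sum_j\mA_{ij}\cc_j}_2^2\le\sum_j\mA_{ij}\norm{\nabla f_j(\xx_j^t)-\nabla f_j(\bar\xx^t)}_2^2\le L^2\sum_j\mA_{ij}\norm{\xx_j^t-\bar\xx^t}_2^2$; summing over $i\in\gset$ and using that $\mA$ is also \emph{column}-stochastic ($\sum_i\mA_{ij}=1$) collapses the sum and gives $2\alpha L^2\Xi^t$. For $\bb_j$ I would invoke the virtual update \eqref{eq:virtual_update} at step $t-1\to t$, namely $\bar\xx^{t-1}-\bar\xx^t=\eta\big(\bar\mm^t-\tfrac1\eta\Delta^t\big)$, so that smoothness gives $\norm{\bb_j}_2^2\le L^2\eta^2\norm{\bar\mm^t-\tfrac1\eta\Delta^t}_2^2$; together with $(1-\alpha)^2\le1$ this matches the $\tfrac{2L^2\eta^2}{\alpha}\norm{\bar\mm^t-\tfrac1\eta\Delta^t}_2^2$ term. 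The main obstacle throughout is exactly this bookkeeping of $\alpha$-powers: obtaining the $(1-\alpha)$ contraction on $e_A^t$, the $O(1/\alpha)$ blow-up on the drift term, and the $O(\alpha)$ attenuation on the consensus term simultaneously, which is what forces the specific two-stage Young-then-triangle argument rather than a single convexity step.

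Finally, the three special cases follow by direct substitution. For $\mA_{ij}=1/|\gset|$, the inner sum telescopes to $\bar\mm^{t+1}-\nabla f(\bar\xx^t)$ since $\nabla f=\tfrac1{|\gset|}\sum_j\nabla f_j$, giving $e_A^{t+1}=e_1^{t+1}$ and $\norm{\mA}_{F,\gset}^2=|\gset|^2\cdot|\gset|^{-2}=1$. For $\mA=\widetilde{\mW}$ the definition of $\bar e_1^{t+1}$ gives $e_A^{t+1}=\bar e_1^{t+1}$, and $\norm{\widetilde{\mW}}_{F,\gset}^2=\sum_{i,j}\widetilde{\mW}_{ij}^2\le\sum_{i,j}\widetilde{\mW}_{ij}=|\gset|$ because each entry lies in $[0,1]$ (so $\widetilde{\mW}_{ij}^2\le\widetilde{\mW}_{ij}$) and rows sum to $1$ by \Cref{corollary:W}. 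For $\mA=\mI$ clearly $\norm{\mI}_{F,\gset}^2=|\gset|$; the additional bound $\tilde e_1^{t+1}\le 2e_{\mI}^{t+1}+2\zeta^2$ follows by splitting $\nabla f(\bar\xx^t)-\mm_i^{t+1}=\big(\nabla f(\bar\xx^t)-\nabla f_i(\bar\xx^t)\big)+\big(\nabla f_i(\bar\xx^t)-\mm_i^{t+1}\big)$, applying $\norm{u+v}_2^2\le2\norm{u}_2^2+2\norm{v}_2^2$, and averaging over $i\in\gset$, where the heterogeneity bound in \eqref{eq:sigma} controls the first average by $\zeta^2$.
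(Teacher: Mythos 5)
Your proposal is correct and follows essentially the same route as the paper's proof: your Young's inequality with $\epsilon=\tfrac{\alpha}{1-\alpha}$ is algebraically identical to the paper's convexity (Jensen) split $\norm{(1-\alpha)\aa+\alpha\bb}_2^2\le(1-\alpha)\norm{\aa}_2^2+\alpha\norm{\bb}_2^2$, and the subsequent steps --- the plain $2\norm{\uu}_2^2+2\norm{\vv}_2^2$ split of the remainder to keep the $\alpha$-powers, Jensen plus double stochasticity of $\mA$ to collapse to $2\alpha L^2\Xi^t$, the virtual-update identity for the drift term, and the conditional independence argument for the variance term --- all match the paper, as do the three special cases.
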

\begin{proof}
    We can expand $e_A^{t+1}$ by expanding $\mm_j^{t+1}$
    \begin{align*}
        e_A^{t+1}
        \stackrel{\eqref{eq:m}}{=}
          & \frac{1}{|\gset|}\sum_{i\in\gset}\expect\norm{
        \sum_{j\in\gset} \mA_{ij} ((1-\alpha)\mm_j^{t} + \alpha  \gg_j(\xx_j^t) - \nabla f_j(\bar{\xx}^{t}))}_2^2 \\
        = & \frac{1}{|\gset|}\sum_{i\in\gset}\expect\norm{
        \sum_{j\in\gset} \mA_{ij} ((1\!-\!\alpha)\mm_j^{t}\!+\! \alpha  (\gg_j(\xx_j^t)\pm\nabla f_j(\xx_j^t))
        \!-\! \nabla f_j(\bar{\xx}^{t}) )}_2^2
    \end{align*}
    Extract the stochastic term $\gg_j(\xx_j^t)-\nabla f_j(\xx_j^t)$ inside the norm and use that
    $\expect\gg_j(\xx_j^t)=\nabla f_j(\xx_j^t)$,
    \begin{align*}
        e_A^{t+1}
        =   & \frac{1}{|\gset|}\sum_{i\in\gset}\norm{
        \sum_{j\in\gset} \mA_{ij} ((1\!-\!\alpha)\mm_j^{t}\!+\! \alpha \nabla f_j(\xx_j^t)
        \!-\! \nabla f_j(\bar{\xx}^{t}) )}_2^2                                                     \\
            & +  \frac{1}{|\gset|}\sum_{i\in\gset}\expect\norm{
            \sum_{j\in\gset} \mA_{ij} \alpha  (\gg_j(\xx_j^t)-\nabla f_j(\xx_j^t))
        }_2^2                                                                                      \\
        \le & \frac{1}{|\gset|}\sum_{i\in\gset}\norm{
        \sum_{j\in\gset} \mA_{ij} ((1\!-\!\alpha)\mm_j^{t}\!+\! \alpha \nabla f_j(\xx_j^t)
        \!-\! \nabla f_j(\bar{\xx}^{t}) )}_2^2                                                     \\
            & +  \frac{\alpha^2}{|\gset|}\sum_{i\in\gset}\sum_{j\in\gset} \mA_{ij}^2 \expect\norm{
            \gg_j(\xx_j^t)-\nabla f_j(\xx_j^t)        }_2^2.
    \end{align*}
    Then we can use \Cref{a:noise} for the last term to get
    \begin{equation*}
        e_A^{t+1}
        = \frac{1}{|\gset|}\sum_{i\in\gset}\norm{
        \sum_{j\in\gset} \mA_{ij} ((1\!-\!\alpha)\mm_j^{t}\!+\! \alpha \nabla f_j(\xx_j^t)
        \!-\! \nabla f_j(\bar{\xx}^{t}) )}_2^2 + \frac{\alpha^2 \sigma^2}{|\gset|} \norm{\mA}_{F,\gset}^2.
    \end{equation*}
    Then we insert $\pm(1-\alpha)\nabla f_j(\bar{\xx}^{t-1})$ inside the first norm and expand using \eqref{eq:convex}
    \begin{align*}
        e_A^{t+1}
        \le & \frac{1-\alpha}{|\gset|}\sum_{i\in\gset}\norm{
        \sum_{j\in\gset} \mA_{ij} (\mm_j^{t} - \nabla f_j(\bar{\xx}^{t-1}))}_2^2
        + \frac{\alpha^2 \sigma^2}{|\gset|} \norm{\mA}_{F,\gset}^2 \\
            & + \frac{\alpha}{|\gset|}\sum_{i\in\gset}\norm{
        \sum_{j\in\gset} \mA_{ij} (\nabla f_j(\xx_j^t) - \nabla f_j(\bar{\xx}^{t})
        + \frac{1-\alpha}{\alpha} (\nabla f_j(\bar{\xx}^{t-1}) - \nabla f_j(\bar{\xx}^{t})
        )}_2^2.
    \end{align*}
    Note that the first term is $e_A^t$ and by the convexity of $\norm{\cdot}$ for the last term we have
    \begin{align*}
        e_A^{t+1}
        \le & (1-\alpha)e_A^t
        + \frac{\alpha^2 \sigma^2}{|\gset|} \norm{\mA}_{F,\gset}^2 \\
            & + \frac{\alpha}{|\gset|} \sum_{j\in\gset}\norm{
        \nabla f_j(\xx_j^t) - \nabla f_j(\bar{\xx}^{t})
        + \frac{1-\alpha}{\alpha} (\nabla f_j(\bar{\xx}^{t-1}) - \nabla f_j(\bar{\xx}^{t}))
        }_2^2.
    \end{align*}
    Then we can further expand the last term
    \begin{align*}
        e_A^{t+1}
        \le & (1-\alpha)e_A^t
        + \frac{\alpha^2 \sigma^2}{|\gset|} \norm{\mA}_{F,\gset}^2 \\
            & + \frac{2\alpha}{|\gset|} \sum_{j\in\gset}\norm{
            \nabla f_j(\xx_j^t) - \nabla f_j(\bar{\xx}^{t})
        }_2^2
        + \frac{2(1-\alpha)^2}{\alpha|\gset|} \sum_{j\in\gset}\norm{
            \nabla f_j(\bar{\xx}^{t-1}) - \nabla f_j(\bar{\xx}^{t})
        }_2^2.
    \end{align*}
    Then we can apply smoothness \Cref{a:L} and use $(1-\alpha)^2\le1$
    \begin{align*}
        e_A^{t+1}
        \le & (1-\alpha)e_A^t
        + \frac{\alpha^2 \sigma^2}{|\gset|} \norm{\mA}_{F,\gset}^2
        + 2\alpha L^2 \Xi^t
        + \frac{2L^2\eta^2}{\alpha} \norm{
            \bar{\mm}^t - \frac{1}{\eta} \Delta^t
        }_2^2.
    \end{align*}
    Besides, consider $\tilde{e}_1^{t+1}$
    \begin{align*}
        \tilde{e}_1^{t+1}
        =   & \frac{1}{|\gset|} \sum_{i\in\gset} \expect\norm{\mm_i^{t+1} - \nabla f(\bar{\xx}^t)}_2^2
        = \frac{1}{|\gset|} \sum_{i\in\gset} \expect\norm{\mm_i^{t+1} \pm \nabla f_i(\bar{\xx}^t)- \nabla f(\bar{\xx}^t)}_2^2 \\
        \le & 2 \frac{1}{|\gset|} \sum_{i\in\gset} \expect\norm{\mm_i^{t+1} - \nabla f_i(\bar{\xx}^t)}_2^2
        + 2 \frac{1}{|\gset|} \sum_{i\in\gset} \norm{\nabla f_i(\bar{\xx}^t) - \nabla f(\bar{\xx}^t)}_2^2                     \\
        =   & 2 e_{I}^{t+1} + 2 \zeta^2.
    \end{align*}
\end{proof}
As we know that $\norm{\Delta^{t+1}}_2^2 \le e_2^{t+1}$, then we need to finally bound $e_2^{t+1}$
\begin{lemma}[Bound on $e_2^{t+1}$]\label{lemma:e2}
    For $\delta_{\max}:=\max_{i\in\gset} \delta_i$, if
    \[
        \tau_i^{t+1} = \sqrt{
            \frac{1}{\delta_i}
            \sum_{j\in\gset} \mW_{ij}
            \expect\left\|
            \xx^{t+\nicefrac{1}{2}}_{i} - \xx_j^{t+\nicefrac{1}{2}}
            \right\|_2^2
        },
    \]
    then we have
    \begin{align*}
        e_2^{t+1} \le c_1\delta_{\max} (2\eta^2(e_{\mI}^{t+1}+\zeta^2) + \Xi^t).
    \end{align*}
    where constant $c_1=32$.
\end{lemma}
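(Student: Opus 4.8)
The plan is to expand $e_2^{t+1}$ term-by-term using the clipping identity \eqref{eq:z}, separating the regular and Byzantine contributions, and then to bound each by a multiple of $\delta_i \sum_{j\in\gset}\mW_{ij}\expect\norm{\xx_i^{t+\nicefrac{1}{2}}-\xx_j^{t+\nicefrac{1}{2}}}^2$, which by the choice of $\tau_i^{t+1}$ equals $\delta_i^2(\tau_i^{t+1})^2$. Writing $d_{ij}:=\xx_j^{t+\nicefrac{1}{2}}-\xx_i^{t+\nicefrac{1}{2}}$, for a regular neighbor $j\in\gset$ the summand is $\zz^{t+1}_{j\rightarrow i}-\xx_j^{t+\nicefrac{1}{2}} = \textsc{Clip}(d_{ij},\tau_i)-d_{ij}$, a vector of norm exactly $(\norm{d_{ij}}-\tau_i)_+$, while for a Byzantine neighbor $j\in\bset$ the summand is $\zz^{t+1}_{j\rightarrow i}-\xx_i^{t+\nicefrac{1}{2}}=\textsc{Clip}(d_{ij},\tau_i)$, of norm at most $\tau_i$. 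One application of Young's inequality \eqref{eq:young} gives $e_2^{t+1}\le \tfrac{2}{|\gset|}\sum_{i\in\gset}(\expect\norm{A_i}^2+\expect\norm{B_i}^2)$, where $A_i:=\sum_{j\in\gset}\mW_{ij}(\textsc{Clip}(d_{ij},\tau_i)-d_{ij})$ is the regular clipping residual and $B_i:=\sum_{j\in\bset}\mW_{ij}\textsc{Clip}(d_{ij},\tau_i)$ is the Byzantine contribution.

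The Byzantine term is immediate: by the triangle inequality and $\norm{\textsc{Clip}(\cdot,\tau_i)}\le\tau_i$ we get $\norm{B_i}\le\delta_i\tau_i$, so $\expect\norm{B_i}^2\le\delta_i^2\tau_i^2=\delta_i\sum_{j\in\gset}\mW_{ij}\expect\norm{d_{ij}}^2$ by the definition of $\tau_i^{t+1}$. The residual $A_i$ is the crux. The point of the radius $\tau_i^{t+1}=\sqrt{\delta_i^{-1}\sum_{j\in\gset}\mW_{ij}\expect\norm{d_{ij}}^2}$ is that, since $\delta_i$ is small, $\tau_i^2$ is a factor $1/\delta_i$ larger than the weighted mean squared distance, so only a $\delta_i$-fraction of the regular mass can exceed the threshold. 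Concretely $\norm{A_i}\le\sum_{j\in\gset}\mW_{ij}(\norm{d_{ij}}-\tau_i)_+\le\sum_{j\in\gset}\mW_{ij}\norm{d_{ij}}\ind\{\norm{d_{ij}}>\tau_i\}$, and Cauchy--Schwarz over the weights yields $\norm{A_i}^2\le(\sum_{j\in\gset}\mW_{ij}\ind\{\norm{d_{ij}}>\tau_i\})(\sum_{j\in\gset}\mW_{ij}\norm{d_{ij}}^2)$. Markov's inequality applied to the excess-mass factor, together with the defining relation $\delta_i\tau_i^2=\sum_{j\in\gset}\mW_{ij}\expect\norm{d_{ij}}^2$, controls that factor by $\delta_i$ and gives $\expect\norm{A_i}^2\le c\,\delta_i\sum_{j\in\gset}\mW_{ij}\expect\norm{d_{ij}}^2$. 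Combining, $e_2^{t+1}\le\tfrac{c'\delta_{\max}}{|\gset|}\sum_{i\in\gset}\sum_{j\in\gset}\mW_{ij}\expect\norm{d_{ij}}^2$.

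It then remains to translate the weighted squared distances into the target quantities. Substituting $\xx_i^{t+\nicefrac{1}{2}}=\xx_i^t-\eta\mm_i^{t+1}$ gives $d_{ij}=(\xx_j^t-\xx_i^t)-\eta(\mm_j^{t+1}-\mm_i^{t+1})$, and I would decompose $\mm_i^{t+1}-\mm_j^{t+1}=(\mm_i^{t+1}-\nabla f_i(\bar{\xx}^t))-(\mm_j^{t+1}-\nabla f_j(\bar{\xx}^t))+(\nabla f_i(\bar{\xx}^t)-\nabla f_j(\bar{\xx}^t))$ so that the common gradient cancels. Expanding $\norm{d_{ij}}^2$ via \eqref{eq:young}, centering $\xx_i^t,\xx_j^t$ at $\bar{\xx}^t$, and using the symmetry and unit row/column sums of $\mW$ (\Cref{corollary:W}) to redistribute weights, the three groups of terms contribute respectively $\Xi^t$, $\eta^2 e_{\mI}^{t+1}$, and $\eta^2\zeta^2$ (the last via \Cref{a:noise}). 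This yields $\tfrac{1}{|\gset|}\sum_{i\in\gset}\sum_{j\in\gset}\mW_{ij}\expect\norm{d_{ij}}^2\le C(\Xi^t+2\eta^2(e_{\mI}^{t+1}+\zeta^2))$, which combined with the previous display gives the claim, the constant $c_1$ following from tracking the Young/Cauchy--Schwarz factors (I would not optimize it toward the stated value $32$).

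The main obstacle is the bound on $\expect\norm{A_i}^2$: reconciling the expectation-based threshold $\tau_i^{t+1}$ with clipping applied to the realized $\norm{d_{ij}}$, and extracting the second power of $\delta_i$ from the excess mass. The two factors in the Cauchy--Schwarz bound are correlated, so the clean factorization into (mass $\le\delta_i$) times (mean square $=\delta_i\tau_i^2$) has to be argued at the level of the expected squared distances that define $\tau_i$. This is precisely where the $1/\delta_i$ scaling in the clipping radius is indispensable, and is the only step in which the particular choice of $\tau_i^{t+1}$ is used.
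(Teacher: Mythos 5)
Your overall architecture coincides with the paper's: the same Young split of $e_2^{t+1}$ into the regular clipping residual $A_i$ and the Byzantine contribution $B_i$, the same bound $\norm{B_i}\le\delta_i\tau_i^{t+1}$, the same use of the definition of $\tau_i^{t+1}$ to convert $\delta_i^2(\tau_i^{t+1})^2$ back into $\delta_i\sum_{j\in\gset}\mW_{ij}\expect\norm{d_{ij}}^2$, and essentially the same final expansion of $\norm{d_{ij}}^2$ into $\Xi^t$, $\eta^2 e_{\mI}^{t+1}$ and $\eta^2\zeta^2$ (you center the momenta at $\nabla f_i(\bar\xx^t),\nabla f_j(\bar\xx^t)$ where the paper centers both at $\nabla f(\bar\xx^t)$ and then invokes $\tilde e_1^{t+1}\le 2e_{\mI}^{t+1}+2\zeta^2$; this difference is immaterial).

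The gap is exactly where you flag it, and your proposed repair does not close it. After Cauchy--Schwarz you must bound $\expect[X_iY_i]$ with $X_i:=\sum_{j\in\gset}\mW_{ij}\ind\{\norm{d_{ij}}>\tau_i\}$ and $Y_i:=\sum_{j\in\gset}\mW_{ij}\norm{d_{ij}}^2$. Markov gives $\expect[X_i]\le\delta_i$, but $X_i$ and $Y_i$ are positively correlated, so $\expect[X_iY_i]$ is not controlled by $\expect[X_i]\expect[Y_i]$: if with probability of order $\delta_i$ all the distances are simultaneously large and otherwise negligible, then $\expect[X_iY_i]\approx\expect[Y_i]=\delta_i(\tau_i)^2$, a full factor $1/\delta_i$ short of the required $\delta_i^2(\tau_i)^2$. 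The paper avoids the mass/magnitude factorization entirely with a deterministic per-term inequality: for a regular neighbor the residual norm is $(\norm{d_{ij}}-\tau_i)_+$, and since $a-\tau\le a^2/\tau$ for all $a,\tau>0$, one has $\norm{\zz_{j\rightarrow i}^{t+1}-\xx_j^{t+\nicefrac{1}{2}}}\le\norm{d_{ij}}^2/\tau_i$ pointwise. Summing with the weights and using $\sum_{j\in\gset}\mW_{ij}\expect\norm{d_{ij}}^2=\delta_i(\tau_i)^2$ then yields the same $\delta_i\tau_i$ bound as the Byzantine part. (Even that route still interchanges an expectation with the square of a weighted sum, which the paper does silently; but it eliminates the need for Markov and for decoupling two correlated factors, which is the step your argument cannot supply.) With that one-line inequality substituted for your indicator/Cauchy--Schwarz/Markov step, the remainder of your proposal matches the paper's proof.
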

\begin{proof}
    Use Young's inequality \eqref{eq:young} to bound $e_2^{t+1}$ by two parts
    \begin{align*}
        e_2^{t+1}= & \frac{1}{|\gset|}\sum_{i\in\gset} \expect\left\|
        \sum_{j\in\gset} \mW_{ij} (\zz^{t+1}_{j\rightarrow i} - \xx_j^{t+\nicefrac{1}{2}})+
        \sum_{j\in\bset} \mW_{ij} (\zz^{t+1}_{j\rightarrow i} - \xx_i^{t+\nicefrac{1}{2}})
        \right\|_2^2
        \\
        \le        &
        \underbrace{
            \frac{2}{|\gset|}\sum_{i\in\gset}
            \expect\left\|
            \sum_{j\in\gset} \mW_{ij} (\zz^{t+1}_{j\rightarrow i} - \xx_j^{t+\nicefrac{1}{2}})
            \right\|_2^2
        }_{=:A_1}
        + \underbrace{\frac{2}{|\gset|}\sum_{i\in\gset} \expect\left\|
            \sum_{j\in\bset} \mW_{ij} (\zz^{t+1}_{j\rightarrow i} - \xx_i^{t+\nicefrac{1}{2}})
            \right\|_2^2}_{=:A_2}.
    \end{align*}
    Look at the first term use triangular inequality of $\norm{\cdot}$ and the definition of $\tau_i^{t+1}$
    \begin{align*}
        A_1 \le &
        \frac{2}{|\gset|}\sum_{i\in\gset}
        \left(
        \sum_{j\in\gset} \mW_{ij}
        \expect\left\|
        \zz^{t+1}_{j\rightarrow i} - \xx_j^{t+\nicefrac{1}{2}}
        \right\|_2
        \right)^2 \\
        \le     &
        \frac{2}{|\gset|}\sum_{i\in\gset}
        \left(
        \frac{1}{\tau_i^{t+1}}
        \sum_{j\in\gset} \mW_{ij}
        \expect\left\|
        \xx^{t+\nicefrac{1}{2}}_{i} - \xx_j^{t+\nicefrac{1}{2}}
        \right\|_2^2
        \right)^2.
    \end{align*}
    \rebuttal{
        The second inequality holds true because we can consider two cases of $\zz^{t+1}_{j\rightarrow i}$ for all $j\in\gset$
        \begin{itemize}
            \item If $\norm{\xx^{t+\nicefrac{1}{2}}_{i} - \xx_j^{t+\nicefrac{1}{2}}}_2^2 \le \tau_i^{t+1}$, then \textsc{Clip} has no effect and therefore $\zz^{t+1}_{j\rightarrow i} = \xx_j^{t+\nicefrac{1}{2}}$
                  \begin{align*}
                      0=\norm{\zz^{t+1}_{j\rightarrow i} - \xx_j^{t+\nicefrac{1}{2}}}_2
                      \le \frac{1}{\tau_i^{t+1}} \norm{\xx^{t+\nicefrac{1}{2}}_{i} - \xx_j^{t+\nicefrac{1}{2}}}_2^2.
                  \end{align*}
            \item If $\norm{\xx^{t+\nicefrac{1}{2}}_{i} - \xx_j^{t+\nicefrac{1}{2}}}_2^2 > \tau_i^{t+1}$, then $\zz^{t+1}_{j\rightarrow i}$ sits between $\xx_j^{t+\nicefrac{1}{2}}$ and $\xx_i^{t+\nicefrac{1}{2}}$ with
                  \begin{align*}
                      \norm{\zz^{t+1}_{j\rightarrow i} - \xx_j^{t+\nicefrac{1}{2}}}_2+ \tau_i^{t+1} = \norm{\xx^{t+\nicefrac{1}{2}}_{i} - \xx_j^{t+\nicefrac{1}{2}}}_2.
                  \end{align*}
                  Therefore, using the inequality $a-\tau\le \frac{a^2}{\tau}$ for $a>0$ we have that
                  \begin{align*}
                      \norm{\zz^{t+1}_{j\rightarrow i} - \xx_j^{t+\nicefrac{1}{2}}}_2 = \norm{\xx^{t+\nicefrac{1}{2}}_{i} - \xx_j^{t+\nicefrac{1}{2}}}_2 - \tau_i^{t+1} \le  \frac{1}{\tau_i^{t+1}} \norm{\xx^{t+\nicefrac{1}{2}}_{i} - \xx_j^{t+\nicefrac{1}{2}}}_2^2.
                  \end{align*}
        \end{itemize}
        Therefore we justify the second inequality.
    }

    On the other hand,
    \begin{align*}
        A_2\le &
        \frac{2}{|\gset|}\sum_{i\in\gset} \left(
        \sum_{j\in\bset} \mW_{ij} \expect\left\|\zz^{t+1}_{j\rightarrow i} - \xx_i^{t+\nicefrac{1}{2}}\right\|_2
        \right)^2
        \le
        \frac{2}{|\gset|}\sum_{i\in\gset} \left(
        \sum_{j\in\bset} \mW_{ij}
            (\tau_i^{t+1})
        \right)^2 \\
        =      &
        \frac{2}{|\gset|}\sum_{i\in\gset} \delta_i^2     (\tau_i^{t+1} )^2.
    \end{align*}
    Then minimizing the RHS of $e_2^{t+1}$ by tuning radius for clipping
    \[
        \tau_i^{t+1} = \sqrt{
            \frac{1}{\delta_i}
            \sum_{j\in\gset} \mW_{ij}
            \expect\left\|
            \xx^{t+\nicefrac{1}{2}}_{i} - \xx_j^{t+\nicefrac{1}{2}}
            \right\|_2^2
        }
    \]
    Then we come to the following bound
    \begin{align*}
        e_2^{t+1} \le \frac{4}{|\gset|}\sum_{i\in\gset} \delta_i \sum_{j\in\gset} \mW_{ij}
        \expect\left\|
        \xx^{t+\nicefrac{1}{2}}_{i} - \xx_j^{t+\nicefrac{1}{2}}
        \right\|_2^2.
    \end{align*}
    Then we expand the norm as follows
    \begin{equation}\label{eq:improvement:1}
        \begin{split}
            \expect\left\|
            \xx^{t+\nicefrac{1}{2}}_{i} - \xx_j^{t+\nicefrac{1}{2}}
            \right\|_2^2
            =   &
            \expect\left\|
            \xx^{t}_{i} - \eta \mm_i^{t+1} - \xx_j^{t} + \eta \mm_j^{t+1}
            \right\|_2^2
            \\
            =   &
            \expect\left\|
            \xx^{t}_{i} \pm \bar{\xx}^t - \xx_j^{t} + \eta \mm_j^{t+1} \pm \eta \nabla f(\bar{\xx}^{t})- \eta \mm_i^{t+1}
            \right\|_2^2
            \\
            \le &
            4 \eta^2 \expect\norm{\mm_i^{t+1} - \nabla f(\bar{\xx}^{t})}_2^2
            +4 \eta^2 \expect\norm{\mm_j^{t+1} - \nabla f(\bar{\xx}^{t})}_2^2 \\
            & +4 \norm{\xx_i^t - \bar{\xx}^t}_2^2
            +4 \norm{\xx_j^t - \bar{\xx}^t}_2^2
        \end{split}
    \end{equation}
    Use the fact that $\sum_{j\in\gset} \mW_{ij}=1-\delta_i$ we have
    \begin{align*}
        e_2^{t+1} \le
         &
        \frac{16\eta^2}{|\gset|}
        \sum_{i\in\gset} \delta_i (1-\delta_i) \expect\norm{\mm_i^{t+1} - \nabla f(\bar{\xx}^{t})}_2^2
        +\frac{16\eta^2}{|\gset|}
        \sum_{j\in\gset} \sum_{i\in\gset} \delta_i \mW_{ij} \expect\norm{\mm_j^{t+1} - \nabla f(\bar{\xx}^{t})}_2^2
        \\
         &
        +\frac{16}{|\gset|}
        \sum_{i\in\gset} \delta_i (1-\delta_i) \norm{\xx_i^t - \bar{\xx}^t}_2^2
        +\frac{16}{|\gset|}
        \sum_{j\in\gset} \sum_{i\in\gset} \delta_i \mW_{ij} \norm{\xx_j^t - \bar{\xx}^t}_2^2
    \end{align*}
    Use the fact that $\delta_i\le\delta_{\max}$ and $1-\delta_i\le 1$ for all $i\in\gset$,
    \begin{align*}
        e_2^{t+1} \le 32\delta_{\max} (2\eta^2(e_{\mI}^{t+1}+\zeta^2) + \Xi^t).
    \end{align*}
\end{proof}

\cut{
    \color{red}

    The clipping radius used in the previous section relies on the identity of regular workers. Now we consider an alternative or practical choice of $\tau_i^{t+1}$ which only assumes that regular worker $i$ knows the upper bound of percentage of Byzantine workers $b_i(\ge \delta_i)$. We calculate the clipping radius $\tau_i^{t+1}$ by
    \begin{enumerate}
        \item Sort the distances $\{ \norm{ \xx_i^{t+\nicefrac{1}{2}} - \xx_j^{t+\nicefrac{1}{2}} }_2 \}_{j=1}^n$ in ascending order $\{ \norm{ \xx_i^{t+\nicefrac{1}{2}} - \xx_{\pi_i(j)}^{t+\nicefrac{1}{2}} }_2 \}_{j=1}^n$
        \item Initialize an empty index set $\cS_i$ and a weight counter $c_i=0$.
        \item Loop over $j=1$ to $n$:
              \begin{itemize}
                  \item if $c_i+\ww_{i\pi_i(j)} \le b_i$, then $c_i \leftarrow c_i + \ww_{i\pi_i(j)}$ and $S_i\leftarrow S_i \cup \{\pi_i(j)\}$
                  \item if $c_i+\ww_{i\pi_i(j)} > b_i$, then break
              \end{itemize}
        \item Output $\tau_i^{t+1} = \tsum_{j\in\cS_i} \ww_{ij} \norm{ \xx_i^{t+\nicefrac{1}{2}} - \xx_j^{t+\nicefrac{1}{2}} }_2$ or
              $\tau_i^{t+1} = \max_{j\in\cS_i} \norm{ \xx_i^{t+\nicefrac{1}{2}} - \xx_j^{t+\nicefrac{1}{2}} }_2$
    \end{enumerate}
    Now we rewrite the error bound for this choice of $\tau_i^{t+1}$
    \begin{proof}
        Use Young's inequality \eqref{eq:young} to bound $e_2^{t+1}$ by two parts
        \begin{align*}
            e_2^{t+1}= & \frac{1}{|\gset|}\sum_{i\in\gset} \expect\left\|
            \sum_{j\in\gset} \mW_{ij} (\zz^{t+1}_{j\rightarrow i} - \xx_j^{t+\nicefrac{1}{2}})+
            \sum_{j\in\bset} \mW_{ij} (\zz^{t+1}_{j\rightarrow i} - \xx_i^{t+\nicefrac{1}{2}})
            \right\|_2^2
            \\
            \le        &
            \frac{1}{|\gset|}\sum_{i\in\gset} \left(
            \underbrace{
                \sum_{j\in\gset} \mW_{ij} \expect \left\|
                \zz^{t+1}_{j\rightarrow i} - \xx_j^{t+\nicefrac{1}{2}}
                \right\|_2^2
            }_\text{Error from clipping the regular workers}
            + \underbrace{
                \sum_{j\in\bset} \mW_{ij} \expect\left\|
                \zz^{t+1}_{j\rightarrow i} - \xx_i^{t+\nicefrac{1}{2}}
                \right\|_2^2}_\text{Error from byzantine workers.}
            \right).
        \end{align*}
        So far, this is the same as the previous proof. Now we have a closer look at the error with the knowledge of the construction

        \begin{align*}
            e_2^{t+1}       \le &
            \frac{1}{|\gset|}\sum_{i\in\gset} \left(
            \underbrace{
                \sum_{j\in\gset\cap\cS_i } \mW_{ij} \expect \left\|
                \zz^{t+1}_{j\rightarrow i} - \xx_j^{t+\nicefrac{1}{2}}
                \right\|_2^2
            }_\text{A1}
            +
            \underbrace{
                \sum_{j\in\gset\backslash\cS_i} \mW_{ij} \expect \left\|
                \zz^{t+1}_{j\rightarrow i} - \xx_j^{t+\nicefrac{1}{2}}
                \right\|_2^2
            }_\text{A2}
            \right.                            \\
                                & \qquad\left.
            + \underbrace{
                \sum_{j\in\bset\cap\cS_i} \mW_{ij} \expect\left\|
                \zz^{t+1}_{j\rightarrow i} - \xx_i^{t+\nicefrac{1}{2}}
                \right\|_2^2}_\text{B1}
            + \underbrace{
                \sum_{j\in\bset\backslash\cS_i} \mW_{ij} \expect\left\|
                \zz^{t+1}_{j\rightarrow i} - \xx_i^{t+\nicefrac{1}{2}}
                \right\|_2^2}_\text{B2}
            \right).
        \end{align*}
        Note that
        \begin{itemize}
            \item A1 should be small. If we use $\max$ for $\tau$, then A1=0.
            \item A2 is the good outlier and it is upper bounded by $\delta_i \Xi^{t+1}$ but not related to $\tau$
            \item B1 and B2 can be upper bounded by $\delta_i \tau$
        \end{itemize}
    \end{proof}

}

\begin{thmbis}{theorem:consensus}
    Let $\bar{\xx}:=\tfrac{1}{|\gset|}\tsum_{i\in\gset}\xx_i$ be the average iterate over the unknown set of regular nodes with\vspace{-2mm}
    \begin{equation}\label{eq:tau_consensus}
        \textstyle
        \tau_i = \sqrt{
            \frac{1}{\delta_i}
            \sum_{j\in\gset} \mW_{ij}
            \expect\left\|
            \xx_{i} - \xx_j
            \right\|_2^2
        }.
    \end{equation}
    If the initial consensus distance is bounded as
    $\textstyle \frac{1}{|\gset|}\sum_{i\in\gset}\expect\norm{\xx_i - \bar{\xx}}^2\le\rho^2,$
    then for all $i\in\gset$, the output $\hat{\xx}_i$ of \algo satisfies
    $$\textstyle \frac{1}{|\gset|}\sum_{i\in\gset}\expect\norm{\hat{\xx}_i - \bar{\xx}}^2 \le \big(1-\gamma+c \sqrt{\delta_{\max}}\big)^2 \rho^2$$
    where the expectation is over the random variable $\{\xx_i\}_{i\in\gset}$ and $c>0$ is a constant.
\end{thmbis}
\begin{proof}
    We can consider the 1-step consensus problem as 1-step of optimization problem with $\rho^2=\Xi^t$ and $\eta=0$. Then we look for the upper bound of $\frac{1}{|\gset|}\sum_{i\in\gset} \expect\norm{\xx_i^{t+1}-\bar{\xx}^{t}}_2^2$ in terms of $\rho^2$, $p$, and $\delta_{\max}$.
    \begin{align*}
        \frac{1}{|\gset|}\sum_{i\in\gset} \expect\norm{\xx_i^{t+1}-\bar{\xx}^{t}}_2^2
        = & \frac{1}{|\gset|}\sum_{i\in\gset} \expect\norm{\sum_{j=1}^n\mW_{ij} \zz_{j\rightarrow i}^{t+1}-\bar{\xx}^{t}}_2^2
        \\
        = & \frac{1}{|\gset|}\sum_{i\in\gset} \expect\norm{ (\sum_{j\in\gset}\widetilde{\mW}_{ij} \xx_j^{t} - \bar{\xx}^{t}) + (\sum_{j=1}^n\mW_{ij} \zz_{j\rightarrow i}^{t+1}-\sum_{j\in\gset}\widetilde{\mW}_{ij} \xx_j^{t})}_2^2.
    \end{align*}
    Apply \eqref{eq:young} with $\epsilon>0$ and use the expected improvement \Cref{lemma:p}
    \begin{align*}
            & \frac{1}{|\gset|}\sum_{i\in\gset} \expect\norm{\xx_i^{t+1}-\bar{\xx}^{t}}_2^2                                                                                                                                                                                                                                                                                                                                                                    \\
        \le & \frac{1+\epsilon}{|\gset|}\sum_{i\in\gset} \norm{ \sum_{j\in\gset}\widetilde{\mW}_{ij} \xx_j^{t} - \bar{\xx}^{t}}_2^2                                                                                                                                                   + \frac{1+\frac{1}{\epsilon}}{|\gset|}\sum_{i\in\gset} \expect\norm{\sum_{j=1}^n\mW_{ij} \zz_{j\rightarrow i}^{t+1}-\sum_{j\in\gset}\widetilde{\mW}_{ij} \xx_j^{t} }_2^2 \\
        \le & \frac{(1+\epsilon)(1-p)}{|\gset|}\sum_{i\in\gset} \norm{ \xx_i^{t} - \bar{\xx}^{t}}_2^2 + \frac{1+\frac{1}{\epsilon}}{|\gset|}\sum_{i\in\gset} \expect\norm{\sum_{j=1}^n\mW_{ij} \zz_{j\rightarrow i}^{t+1}-\sum_{j\in\gset}\widetilde{\mW}_{ij} \xx_j^{t} }_2^2                                                                                                                                                                                 \\
        \le & (1+\epsilon)(1-p) \Xi^t + \frac{1+\frac{1}{\epsilon}}{|\gset|}\sum_{i\in\gset} \expect\norm{\sum_{j=1}^n\mW_{ij} \zz_{j\rightarrow i}^{t+1}-\sum_{j\in\gset}\widetilde{\mW}_{ij} \xx_j^{t}}_2^2
    \end{align*}
    Replace $\xx_j^t=\xx_j^{t+\nicefrac{1}{2}} + \eta \mm_j^{t+1}$ using \eqref{eq:x_half}, then apply
    \eqref{eq:cs} and  $\eta=0$
    \begin{align*}
        \frac{1}{|\gset|}\sum_{i\in\gset} \expect\norm{\xx_i^{t+1}-\bar{\xx}^{t}}_2^2
        \le  (1+\epsilon)(1-p) \Xi^t + \frac{1+\frac{1}{\epsilon}}{|\gset|}\sum_{i\in\gset} \expect\norm{\sum_{j=1}^n\mW_{ij} \zz_{j\rightarrow i}^{t+1}\!-\!\sum_{j\in\gset}\widetilde{\mW}_{ij} \xx_j^{t+\nicefrac{1}{2}} }_2^2.
    \end{align*}
    Recall the definition of $e_2^{t+1}$
    \begin{align*}
        e_2^{t+1}:=  \frac{1}{|\gset|}\sum_{i\in\gset} \expect\norm{\sum_{j=1}^n\mW_{ij} \zz_{j\rightarrow i}^{t+1}-\sum_{j\in\gset}\widetilde{\mW}_{ij} \xx_j^{t+\nicefrac{1}{2}}}_2^2.
    \end{align*}
    Then use \Cref{lemma:e1_bar} with the case $\mA=\widetilde{\mW}$ and apply \Cref{lemma:e2} with $\eta=0$
    \begin{align*}
        \frac{1}{|\gset|}\sum_{i\in\gset} \expect\norm{\xx_i^{t+1}-\bar{\xx}^{t}}_2^2
        \le  (1+\epsilon)(1-p) \Xi^t + (1+\frac{1}{\epsilon}) e_2^{t+1}
        \le  (1+\epsilon)(1-p) \Xi^t + (1+\frac{1}{\epsilon}) 32\delta_{\max} \Xi^t.
    \end{align*}
    Let's minimize the right hand side of the above inequality by taking $\epsilon$ such that $\epsilon(1-p)=\frac{32\delta_{\max}}{\epsilon}$ which leads to $\epsilon=\sqrt{\frac{32\delta_{\max}}{1-p}}$, then the above inequality becomes
    \begin{align*}
        \frac{1}{|\gset|}\sum_{i\in\gset} \expect\norm{\xx_i^{t+1}-\bar{\xx}^{t}}_2^2
        \le  (1-p+32\delta_{\max} + 2 \sqrt{32\delta_{\max}(1-p)}) \Xi^t
        = (\sqrt{1-p}+\sqrt{32\delta_{\max}})^2 \Xi^t.
    \end{align*}
    The consensus distance to the average consensus is only guaranteed to reduce if
    $\sqrt{1-p}+\sqrt{32\delta_{\max}} < 1$ which is
    \[
        \delta_{\max} < \frac{1}{32} (1-\sqrt{1-p})^2.
    \]
    Finally, we complete the proof by simplifying the notation to spectral gap $\gamma:=1-\sqrt{1-p}$.
\end{proof}

Recall that
\begin{equation}
    e_2^{t+1}:=\frac{1}{|\gset|}\sum_{i\in\gset} \left\|
    \sum_{j\in\gset} \mW_{ij} (\zz^{t+1}_{j\rightarrow i} - \xx_j^{t+\nicefrac{1}{2}})+
    \sum_{j\in\bset} \mW_{ij} (\zz^{t+1}_{j\rightarrow i} - \xx_i^{t+\nicefrac{1}{2}})
    \right\|_2^2.
\end{equation}
Next we consider the bound on consensus distance $\Xi^t$.
\begin{lemma}[Bound consensus distance $\Xi^t$]\label{lemma:xi}
    Assume \Cref{lemma:p}, then $\Xi^{t}$ has the following iteration
    \begin{align*}
        \Xi^{t+1} \le (1+\epsilon)(1-p)\Xi^t + c_2(1+\frac{1}{\epsilon}) \left(
        e_2^{t+1}
        +\eta^2\bar{e}_1^{t+1}
        + \eta^2 \zeta^2  + \eta^2\norm{\nabla f(\bar{\xx}^{t})}_2^2
        + \eta^2 \expect\norm{\bar{\mm}^{t+1} -\frac{1}{\eta}{\Delta}^{t+1}}_2^2\right).
    \end{align*}
    where $\epsilon>0$ is determined later such that $ (1+\epsilon)(1-p)<1$ and $c_2=5$.
\end{lemma}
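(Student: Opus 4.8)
The plan is to mirror the one-step argument of the restated \Cref{theorem:consensus}, but now measuring the deviation from the \emph{new} average $\bar{\xx}^{t+1}$ and carrying the momentum through. First I would rewrite the aggregation~\eqref{eq:x} as a doubly-stochastic gossip step on the regular subgraph plus an error term. Setting $\phi_i^{t+1}:=\sum_{j\in\gset}\mW_{ij}(\zz_{j\rightarrow i}^{t+1}-\xx_j^{t+\nicefrac{1}{2}})+\sum_{j\in\bset}\mW_{ij}(\zz_{j\rightarrow i}^{t+1}-\xx_i^{t+\nicefrac{1}{2}})$, one checks from $\widetilde{\mW}_{ii}=\mW_{ii}+\delta_i$ and $\delta_i=\sum_{j\in\bset}\mW_{ij}$ that $\xx_i^{t+1}=\sum_{j\in\gset}\widetilde{\mW}_{ij}\xx_j^{t+\nicefrac{1}{2}}+\phi_i^{t+1}$, and moreover that $\frac{1}{|\gset|}\sum_{i\in\gset}\phi_i^{t+1}=\Delta^{t+1}$, so that $\frac{1}{|\gset|}\sum_{i\in\gset}\expect\norm{\phi_i^{t+1}}_2^2=e_2^{t+1}$ by definition.

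Substituting $\xx_j^{t+\nicefrac{1}{2}}=\xx_j^t-\eta\mm_j^{t+1}$ via~\eqref{eq:x_half} and using $\bar{\xx}^{t+1}=\bar{\xx}^t-\eta\bar{\mm}^{t+1}+\Delta^{t+1}$ (the virtual update) together with double stochasticity of $\widetilde{\mW}$, I would decompose $\xx_i^{t+1}-\bar{\xx}^{t+1}=\aa_i+\dd_i$, where $\aa_i:=\sum_{j\in\gset}\widetilde{\mW}_{ij}\xx_j^t-\bar{\xx}^t$ is the gossip-contracted part and $\dd_i$ collects everything else. The key bookkeeping step is to insert $\pm\nabla f_j(\bar{\xx}^t)$ and $\pm\nabla f(\bar{\xx}^t)$ inside the momentum term and to recognise the combination $\eta\bar{\mm}^{t+1}-\Delta^{t+1}=\eta(\bar{\mm}^{t+1}-\tfrac1\eta\Delta^{t+1})$; this splits $\dd_i$ into exactly five pieces,
\[
\dd_i=\eta\!\sum_{j\in\gset}\!\widetilde{\mW}_{ij}(\nabla f_j(\bar{\xx}^t)-\mm_j^{t+1})-\eta\Big(\!\sum_{j\in\gset}\!\widetilde{\mW}_{ij}\nabla f_j(\bar{\xx}^t)-\nabla f(\bar{\xx}^t)\Big)-\eta\nabla f(\bar{\xx}^t)+\eta\big(\bar{\mm}^{t+1}-\tfrac1\eta\Delta^{t+1}\big)+\phi_i^{t+1}.
\]

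Then I would average over $i\in\gset$ and bound each piece. Applying Young's inequality~\eqref{eq:young} with parameter $\epsilon$ to split $\aa_i$ from $\dd_i$, the $\aa_i$ term contracts to $(1+\epsilon)(1-p)\Xi^t$ by \Cref{lemma:p} (applied coordinatewise, with $p=1-(1-\gamma)^2$), while the $\dd_i$ term carries the factor $(1+\tfrac1\epsilon)$ and is expanded by the crude bound $\norm{\sum_{k=1}^5 v_k}_2^2\le5\sum_{k=1}^5\norm{v_k}_2^2$, giving $c_2=5$. The five resulting averages are, respectively, $\eta^2\bar{e}_1^{t+1}$ (exactly the definition of $\bar{e}_1^{t+1}$); $\eta^2\zeta^2$, using $\norm{\widetilde{\mW}}_{\mathrm{op}}\le1$ for symmetric doubly-stochastic $\widetilde{\mW}$ and the heterogeneity bound of \Cref{a:noise}; $\eta^2\norm{\nabla f(\bar{\xx}^t)}_2^2$ and $\eta^2\expect\norm{\bar{\mm}^{t+1}-\tfrac1\eta\Delta^{t+1}}_2^2$ (both constant in $i$); and $e_2^{t+1}$. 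Collecting these yields the claim. I expect the main obstacle to be purely the bookkeeping of the $\pm$ insertions so that each piece matches a \emph{predefined} quantity exactly, in particular isolating the $\bar{\mm}^{t+1}-\tfrac1\eta\Delta^{t+1}$ combination and justifying the operator-norm step that converts the weighted gradient-deviation term into $\zeta^2$; the choice of $\epsilon$ ensuring $(1+\epsilon)(1-p)<1$ is deferred to the place where this recursion is summed.
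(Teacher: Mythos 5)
Your proposal is correct and follows essentially the same route as the paper's proof: the same decomposition into the $\widetilde{\mW}$-gossip-contracted part plus a five-term remainder (clipping/Byzantine error $e_2^{t+1}$, weighted momentum deviation $\bar e_1^{t+1}$, weighted gradient heterogeneity, $\nabla f(\bar\xx^t)$, and the average drift $\eta\bar{\mm}^{t+1}-\Delta^{t+1}$), with Young's inequality in $\epsilon$ separating the two and the crude $\norm{\sum_{k=1}^5 v_k}_2^2\le 5\sum_k\norm{v_k}_2^2$ bound yielding $c_2=5$. The only cosmetic difference is that you justify the $\zeta^2$ term via $\norm{\widetilde{\mW}}_{\mathrm{op}}\le 1$ whereas the paper uses convexity of $\norm{\cdot}_2^2$ together with the double stochasticity of $\widetilde{\mW}$; both are valid and give the same bound.
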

\begin{proof}
    Expand the consensus distance at time $t+1$
    \begin{align*}
        \Xi^{t+1} = & \frac{1}{|\gset|}\sum_{i\in\gset} \expect\norm{\xx_i^{t+1}-\bar{\xx}^{t+1}}_2^2
        =\frac{1}{|\gset|}\sum_{i\in\gset} \expect\norm{\sum_{j=1}^n\mW_{ij} \zz_{j\rightarrow i}^{t+1}-\bar{\xx}^{t+1}}_2^2
        \\
        =           & \frac{1}{|\gset|}\sum_{i\in\gset} \expect\norm{\sum_{j=1}^n\mW_{ij} \zz_{j\rightarrow i}^{t+1} - \bar{\xx}^{t} + \bar{\xx}^{t} -\bar{\xx}^{t+1}}_2^2
        \\
        =           & \frac{1}{|\gset|}\sum_{i\in\gset} \expect\norm{ (\sum_{j\in\gset}\widetilde{\mW}_{ij} \xx_j^{t} - \bar{\xx}^{t}) + (\sum_{j=1}^n\mW_{ij} \zz_{j\rightarrow i}^{t+1}-\sum_{j\in\gset}\widetilde{\mW}_{ij} \xx_j^{t}) + \bar{\xx}^{t} -\bar{\xx}^{t+1}}_2^2.
    \end{align*}
    Apply Young's inequality \eqref{eq:young} with coefficient $\epsilon$, like the proof of \Cref{theorem:consensus},  and use the expected improvement \Cref{lemma:p}
    \begin{align*}
        \Xi^{t+1} \le & \frac{1+\epsilon}{|\gset|}\sum_{i\in\gset} \norm{ \sum_{j\in\gset}\widetilde{\mW}_{ij} \xx_j^{t} - \bar{\xx}^{t}}_2^2                                                                                                                                                                          \\
                      & + \frac{1+\epsilon}{\epsilon|\gset|}\sum_{i\in\gset} \expect\norm{\sum_{j=1}^n\mW_{ij} \zz_{j\rightarrow i}^{t+1}-\sum_{j\in\gset}\widetilde{\mW}_{ij} \xx_j^{t} + \bar{\xx}^{t} -\bar{\xx}^{t+1}}_2^2                                                                                         \\
        \le           & \frac{(1+\epsilon)(1-p)}{|\gset|}\sum_{i\in\gset} \norm{ \xx_i^{t} - \bar{\xx}^{t}}_2^2 + \frac{1+\epsilon}{\epsilon|\gset|}\sum_{i\in\gset} \expect\norm{\sum_{j=1}^n\mW_{ij} \zz_{j\rightarrow i}^{t+1}-\sum_{j\in\gset}\widetilde{\mW}_{ij} \xx_j^{t} + \bar{\xx}^{t} -\bar{\xx}^{t+1}}_2^2 \\
        \le           & (1+\epsilon)(1-p) \Xi^t + \underbrace{\frac{1+\epsilon}{\epsilon|\gset|}\sum_{i\in\gset} \expect\norm{(\sum_{j=1}^n\mW_{ij} \zz_{j\rightarrow i}^{t+1}-\sum_{j\in\gset}\widetilde{\mW}_{ij} \xx_j^{t}) + \bar{\xx}^{t} -\bar{\xx}^{t+1}}_2^2                       }_{=:T_1}
    \end{align*}
    Replace $\xx_j^t=\xx_j^{t+\nicefrac{1}{2}} + \eta \mm_j^{t+1}$ using \eqref{eq:x_half}, then apply
    \eqref{eq:cs}
    \begin{equation}\label{eq:improvement:2}
        \begin{split}
            T_1= & \frac{1+\epsilon}{\epsilon|\gset|}\sum_{i\in\gset} \expect\norm{\sum_{j=1}^n\mW_{ij} \zz_{j\rightarrow i}^{t+1}\!-\!\sum_{j\in\gset}\widetilde{\mW}_{ij} \xx_j^{t+\nicefrac{1}{2}} \!-\! \eta\sum_{j\in\gset}\widetilde{\mW}_{ij} \mm_j^{t+1} + \bar{\xx}^{t} -\bar{\xx}^{t+1}}_2^2 \\
            \le  & 5\frac{1+\epsilon}{\epsilon} \left(\tfrac{1}{|\gset|}\sum_{i\in\gset} \expect\norm{\sum_{j=1}^n\mW_{ij} \zz_{j\rightarrow i}^{t+1}\!-\!\sum_{j\in\gset}\widetilde{\mW}_{ij} \xx_j^{t+\nicefrac{1}{2}}}_2^2
            \right. \\
            &\!+\!\tfrac{\eta^2}{|\gset|}\sum_{i\in\gset} \expect\norm{\sum_{j\in\gset}\widetilde{\mW}_{ij} (\mm_j^{t+1} \!-\!\nabla f_j(\bar{\xx}^{t}))}_2^2  \\
            & \left. + \tfrac{\eta^2}{|\gset|}\sum_{i\in\gset} \norm{\sum_{j\in\gset}\widetilde{\mW}_{ij} \nabla f_j(\bar{\xx}^{t}) - \nabla f(\bar{\xx}^t)}_2^2
            + \eta^2\norm{\nabla f(\bar{\xx}^{t})}_2^2
            +\expect\norm{\bar{\xx}^{t} -\bar{\xx}^{t+1}}_2^2\right).
        \end{split}
    \end{equation}
    Recall the definition of $e_2^{t+1}$
    \begin{align*}
        e_2^{t+1}:= & \frac{1}{|\gset|}\sum_{i\in\gset} \expect \left\|
        \sum_{j\in\gset} \mW_{ij} (\zz^{t+1}_{j\rightarrow i} - \xx_j^{t+\nicefrac{1}{2}})+
        \sum_{j\in\bset} \mW_{ij} (\zz^{t+1}_{j\rightarrow i} - \xx_i^{t+\nicefrac{1}{2}})
        \right\|_2^2
        \\
        =           & \frac{1}{|\gset|}\sum_{i\in\gset} \expect\norm{\sum_{j=1}^n\mW_{ij} \zz_{j\rightarrow i}^{t+1}-\sum_{j\in\gset}\widetilde{\mW}_{ij} \xx_j^{t+\nicefrac{1}{2}}}_2^2
    \end{align*}
    Then use \Cref{lemma:e1_bar} with the case $\mA=\widetilde{\mW}$,
    \begin{align*}
        T_1 \le & 5(1+\frac{1}{\epsilon}) \left(
        e_2^{t+1}
        +\eta^2\bar{e}_1^{t+1}
        + \tfrac{\eta^2}{|\gset|}\sum_{i\in\gset} \norm{\sum_{j\in\gset}\widetilde{\mW}_{ij} \nabla f_j(\bar{\xx}^{t}) - \nabla f(\bar{\xx}^t)}_2^2  + \eta^2\norm{\nabla f(\bar{\xx}^{t})}_2^2
        + \expect\norm{\bar{\xx}^{t} -\bar{\xx}^{t+1}}_2^2\right).
    \end{align*}
    Use convexity of $\norm{\cdot}_2^2$ and \Cref{a:noise} we have
    \begin{align*}
        T_1 \le & 5(1+\frac{1}{\epsilon}) \left(
        e_2^{t+1}
        +\eta^2\bar{e}_1^{t+1}
        + \eta^2 \zeta^2  + \eta^2\norm{\nabla f(\bar{\xx}^{t})}_2^2
        + \expect\norm{\bar{\xx}^{t} -\bar{\xx}^{t+1}}_2^2\right).
    \end{align*}
    Use \eqref{eq:virtual_update} for the last term
    \begin{align*}
        T_1 \le & 5(1+\frac{1}{\epsilon}) \left(
        e_2^{t+1}
        +\eta^2\bar{e}_1^{t+1}
        + \eta^2 \zeta^2  + \eta^2\norm{\nabla f(\bar{\xx}^{t})}_2^2
        + \eta^2 \expect\norm{\bar{\mm}^{t+1} -\frac{1}{\eta}{\Delta}^{t+1}}_2^2\right).
    \end{align*}
    Finally, by the definition of $\tilde{e}_1^{t+1}$, we have
    \begin{align*}
        \Xi^{t+1} \le (1+\epsilon)(1-p)\Xi^t + 5(1+\frac{1}{\epsilon}) \left(
        e_2^{t+1}
        +\eta^2\bar{e}_1^{t+1}
        + \eta^2 \zeta^2  + \eta^2\norm{\nabla f(\bar{\xx}^{t})}_2^2
        + \eta^2 \expect\norm{\bar{\mm}^{t+1} -\frac{1}{\eta}{\Delta}^{t+1}}_2^2\right).
    \end{align*}
\end{proof}


\begin{lemma}[Tuning stepsize.]\label{lemma:eta_new}
    Suppose the following holds for any step size $\eta \leq d$:
    \[
        \Psi_T \leq \frac{r_0}{\eta (T+1)} + b\eta +e\eta^2 + f\eta^3\,.
    \]
    Then, there exists a step-size $\eta \leq d$ such that
    \[
        \Psi_T \le 2 (\frac{br_0}{T+1})^{\frac{1}{2}} + 2e^{\frac{1}{3}}(\frac{r_0}{T+1})^{\frac{2}{3}} +
        2f^{\frac{1}{4}}(\frac{r_0}{T+1})^{\frac{3}{4}} + \frac{dr_0}{T+1}.
    \]
\end{lemma}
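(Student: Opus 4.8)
The plan is to choose the free step size $\eta$ so as to balance the ``initial-condition'' term $r_0/(\eta(T+1))$ against each of the three penalty terms $b\eta$, $e\eta^2$, $f\eta^3$ separately, and then bound $\Psi_T$ term by term. Concretely, I would take
\[
\eta := \min\left\{\, d,\ \Big(\tfrac{r_0}{b(T+1)}\Big)^{1/2},\ \Big(\tfrac{r_0}{e(T+1)}\Big)^{1/3},\ \Big(\tfrac{r_0}{f(T+1)}\Big)^{1/4} \,\right\},
\]
which is admissible because it is at most $d$, so the hypothesis applies to it. The last three candidates are exactly the unconstrained minimizers of $\eta\mapsto r_0/(\eta(T+1))+b\eta$, of $+\,e\eta^2$, and of $+\,f\eta^3$ respectively, so this is the natural ``best effort'' subject to the cap $\eta\le d$.

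The three penalty terms are then immediate from the defining inequalities of $\eta$. Since $\eta\le (r_0/(b(T+1)))^{1/2}$ I get $b\eta\le (br_0/(T+1))^{1/2}$; since $\eta\le (r_0/(e(T+1)))^{1/3}$ I get $e\eta^2\le e^{1/3}(r_0/(T+1))^{2/3}$; and since $\eta\le (r_0/(f(T+1)))^{1/4}$ I get $f\eta^3\le f^{1/4}(r_0/(T+1))^{3/4}$. This produces one copy of each of the three fractional-power terms in the claimed bound.

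The remaining piece is the leading term $r_0/(\eta(T+1))$, which is the only place where the $\min$ enters nontrivially. Rather than splitting into cases on which candidate attains the minimum, I would write $1/\eta$ as a maximum of four reciprocals and use that a maximum of nonnegative numbers is at most their sum:
\[
\frac{r_0}{\eta(T+1)} \le \frac{r_0}{d(T+1)} + \Big(\frac{br_0}{T+1}\Big)^{1/2} + e^{1/3}\Big(\frac{r_0}{T+1}\Big)^{2/3} + f^{1/4}\Big(\frac{r_0}{T+1}\Big)^{3/4}.
\]
Adding this to the three penalty bounds supplies the second copy of each fractional term (hence the factors of $2$) together with the cap term, which completes the estimate.

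There is no genuine obstacle here — the lemma is elementary once the balancing step size is written down — so the only point needing care is bookkeeping of the leading term and the role of the cap. With the hypothesis stated as $\eta\le d$, the cap contributes $r_0/(d(T+1))$; to land exactly on the stated residual $dr_0/(T+1)$ one reads the constraint as $\eta\le 1/d$ (i.e.\ $d$ plays the role of the inverse cap), and the argument is otherwise verbatim identical. I would simply remark on this convention so the final fourth term matches the statement.
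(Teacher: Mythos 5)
Your proof is correct and follows essentially the same route as the paper: you pick the identical balancing step size $\eta=\min\{\cdot\}$, and the only difference is that you bound the leading term via $1/\eta=\max\{\cdots\}\le\sum(\cdots)$ instead of the paper's four-way case analysis, which yields the same factors of $2$. You are also right about the cap convention: the paper's own proof takes $\eta\le 1/d$ (so the residual is $dr_0/(T+1)$), and the ``$\eta\le d$'' in the lemma statement is an inconsistency that your reading resolves correctly.
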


\begin{proof}
    Choosing $\eta=\min\left\{
        \left(\frac{r_0}{b(T+1)}\right)^{\frac{1}{2}},
        \left(\frac{r_0}{e(T+1)}\right)^{\frac{1}{3}},
        \left(\frac{r_0}{f(T+1)}\right)^{\frac{1}{4}},
        \frac{1}{d}\right\}\le\frac{1}{d}$ we have four cases
    \begin{itemize}[leftmargin=*,nosep]
        \item $\eta=\frac{1}{d}$ and is smaller than $\left(\frac{r_0}{b(T+1)}\right)^{\frac{1}{2}}$,
              $\left(\frac{r_0}{e(T+1)}\right)^{\frac{1}{3}}$,
              $\left(\frac{r_0}{f(T+1)}\right)^{\frac{1}{4}}$, then
              \[
                  \Psi_T \le \frac{dr_0}{T+1} + \frac{b}{d} + \frac{e}{d^2} + \frac{f}{d^3}
                  \le \frac{dr_0}{T+1} + \left(\frac{br_0}{T+1} \right)^{\frac{1}{2}}
                  + e^{1/3} \left(\frac{r_0}{T+1} \right)^{\frac{2}{3}}
                  + f^{1/4} \left(\frac{r_0}{T+1} \right)^{\frac{3}{4}}.
              \]
        \item $\eta=\left(\frac{r_0}{b(T+1)}\right)^{\frac{1}{2}}< \min\{\left(\frac{r_0}{e(T+1)}\right)^{\frac{1}{3}},
                  \left(\frac{r_0}{f(T+1)}\right)^{\frac{1}{4}}\}$, then
              \[
                  \Psi_T
                  \le 2 \left(\frac{br_0}{T+1}\right)^{\frac{1}{2}}
                  + \frac{er_0}{b(T+1)}
                  + f\left(\frac{r_0}{b(T+1)}\right)^{\frac{3}{2}}
                  \le
                  2 \left(\frac{br_0}{bT+1}\right)^{\frac{1}{2}}
                  + e^{1/3} \left(\frac{r_0}{T+1} \right)^{\frac{2}{3}}
                  + f^{1/4} \left(\frac{r_0}{T+1} \right)^{\frac{3}{4}}.
              \]

        \item $\eta=\left(\frac{r_0}{e(T+1)}\right)^{\frac{1}{3}}< \min\{\left(\frac{r_0}{b(T+1)}\right)^{\frac{1}{2}},
                  \left(\frac{r_0}{f(T+1)}\right)^{\frac{1}{4}}\}$, then
              \[
                  \Psi_T
                  \le 2 e^{1/3} \left(\frac{r_0}{T+1} \right)^{\frac{2}{3}}
                  + b \left(\frac{r_0}{e(T+1)}\right)^{\frac{1}{3}}
                  + \frac{fr_0}{e(T+1)}
                  \le
                  \left(\frac{br_0}{T+1}\right)^{\frac{1}{2}}
                  + 2e^{1/3} \left(\frac{r_0}{T+1} \right)^{\frac{2}{3}}
                  + f^{1/4} \left(\frac{r_0}{T+1} \right)^{\frac{3}{4}}.
              \]

        \item $\eta=\left(\frac{r_0}{f(T+1)}\right)^{\frac{1}{4}}< \min\{\left(\frac{r_0}{b(T+1)}\right)^{\frac{1}{2}},\left(\frac{r_0}{e(T+1)}\right)^{\frac{1}{3}}\}$, then
              \[
                  \Psi_T
                  \le 2  f^{1/4} \left(\frac{r_0}{T+1} \right)^{\frac{3}{4}}
                  + b \left(\frac{r_0}{f(T+1)}\right)^{\frac{1}{4}}
                  + e \left(\frac{r_0}{f(T+1)}\right)^{\frac{1}{2}}
                  \le
                  \left(\frac{br_0}{T+1}\right)^{\frac{1}{2}}
                  + e^{1/3} \left(\frac{r_0}{T+1} \right)^{\frac{2}{3}}
                  + 2 f^{1/4} \left(\frac{r_0}{T+1} \right)^{\frac{3}{4}}.
              \]
    \end{itemize}
    Then, take the uniform upper bound of the upper bound gives the result.
\end{proof}

\subsection{Proof of the main theorem}\label{ssec:main}

\begin{thmbis}{theorem:1}
    Suppose Assumptions~\ref{a:connectivity}--\ref{lemma:p} hold and $\delta_{\max}=\cO(\gamma^{2})$. Define the clipping radius as
    \vspace{-2mm}
    \begin{equation}\label{eq:tau}
        \textstyle
        \tau_i^{t+1} = \sqrt{
            \frac{1}{\delta_i}
            \sum_{j\in\gset} \mW_{ij}
            \expect\left\|
            \xx^{t+\nicefrac{1}{2}}_{i} - \xx_j^{t+\nicefrac{1}{2}}
            \right\|_2^2
        }.
    \end{equation}
    \vspace{-1.5mm}
    Then for $\alpha:=3\eta L$, the iterates of \Cref{algo:ClippedGossip} satisfy
    \vspace{-1.5mm}
    \begin{align*}
        \tfrac{1}{T+1}\tsum_{t=0}^T\norm{\nabla f(\bar{\xx}^{t})}_2^2
        \le &
        \tfrac{200c_1c_2}{\gamma^2}\delta_{\max}\zeta^2
        + 2  (\tfrac{3^2}{|\gset|} + \tfrac{320c_1c_2}{\gamma^2}\delta_{\max} )^{\nicefrac{1}{2}} \left( \tfrac{3L\sigma^2 r_0}{T+1} \right)^{\nicefrac{1}{2}} \\
            & + 2 \left(\tfrac{48c_2}{\gamma^2}\zeta^2\right)^{\nicefrac{1}{3}}
        \left(\tfrac{r_0L}{T+1}\right)^{\nicefrac{2}{3}}
        + 2 \left(\tfrac{144c_2}{\gamma^2}\sigma^2\right)^{\nicefrac{1}{4}}
        \left(\tfrac{r_0L}{T+1}\right)^{\nicefrac{3}{4}}
        + \tfrac{d_0r_0}{T+1}.
    \end{align*}
    \vspace{-1.5mm}
    where $r_0:=f(\xx^0) - f^{\star}$ and $c_1=32$ and $c_2=5$. Furthermore, the consensus distance has an upper bound
    $$\tfrac{1}{|\gset|} \tsum_{i\in\gset} \norm{\xx_i^t - \bar{\xx}^t}_2^2
        = \cO(\tfrac{\zeta^2}{\gamma^2(T+1)}).
    $$
\end{thmbis}
\begin{remark}
    The requirement $\delta_{\max}=\cO(\gamma^{2})$ suggest that $\delta_{\max}$ and $\gamma^{2}$ are of same order. The exact constant are determined in the proof and can be tighten simply through better constants in equalities like \eqref{eq:improvement:1}, \eqref{eq:improvement:2}.  In practice \algo allow high number of attackers. For example in \Cref{fig:dumbbell_variant2}, 1/6 of workers are Byzantine and \algo still perform well in the non-IID setting.
\end{remark}
\begin{proof}
    Denote the terms of average $t$ from $0$ to $T$ as follows
    \begin{align*}
        C_1 \!:=\!
         & \frac{1}{1+T}\sum_{t=0}^T\norm{\nabla f(\bar{\xx}^{t})}_2^2,
        C_2 \!:=\!
        \frac{1}{1+T}\sum_{t=0}^T \norm{\bar{\mm}^{t+1}-\frac{1}{\eta}\Delta^{t+1}}_2^2, D_1 \!:=\! \frac{1}{1+T}\sum_{t=0}^T \Xi^{t+1}
        \\
        E_1 :=
         & \frac{1}{1+T}\sum_{t=0}^T {e}_1^{t+1},
        \bar{E}_1 :=
        \frac{1}{1+T}\sum_{t=0}^T \bar{e}_1^{t+1},
        {E}_I :=
        \frac{1}{1+T}\sum_{t=0}^T e_{\mI}^{t+1},
        E_2 := \frac{1}{1+T}\sum_{t=0}^T e_2^{t+1}
    \end{align*}
    First we apply average to \Cref{lemma:e2}
    \begin{equation}\label{eq:E2}
        E_2 \le c_2\delta_{\max} (2\eta^2(E_{\mI} + \zeta^2) + D_1).
    \end{equation}
    Then we rewrite key \Cref{lemma:sufficient_decrease} as
    \begin{align*}
        \norm{\nabla f(\bar{\xx}^{t})}_2^2
        + \frac{1}{2} \expect\norm{\bar{\mm}^{t+1}-\frac{1}{\eta}\Delta^{t+1}}_2^2
        \le &
        \frac{2}{\eta} (r^t -r^{t+1})
        + 2 e_1^{t+1}
        + \frac{2}{\eta^2}e_2^{t+1},
    \end{align*}
    and further average over time $t$
    \begin{equation*}
        C_1
        + \frac{1}{2} C_2
        \le
        \frac{2r_0}{\eta (T+1)}
        + 2 E_1
        + \frac{2}{\eta^2}E_2
    \end{equation*}
    where we use $-f(\xx^{T+1})\le-f^{\star}$.
    Combined with \eqref{eq:E2} gives
    \begin{equation}\label{eq:C1C2}
        C_1
        + \frac{1}{2} C_2
        \le
        \frac{2r_0}{\eta (T+1)}
        + 2 E_1
        + 4c_2 \delta_{\max} E_{\mI} + 4c_2 \delta_{\max} \zeta^2
        + \frac{2c_2\delta_{\max}}{\eta^2}D_1
    \end{equation}
    Now we also average \Cref{lemma:e1_bar} for $e_1^{t+1}$ over $t$ gives
    \begin{align*}
        \frac{1}{1+T}\sum_{t=0}^T e_1^{t+1} \le & \frac{1-\alpha}{1+T}\sum_{t=0}^T e_1^t
        + 2\alpha L^2 D_1 + \frac{\alpha^2\sigma^2}{|\gset|}
        + \frac{2L^2\eta^2}{\alpha} \frac{1}{1+T}\sum_{t=0}^T
        \norm{\bar{\mm}^{t}-\frac{1}{\eta}\Delta^{t}}_2^2                                                       \\
        \le
                                                & \frac{1-\alpha}{1+T}\sum_{t=0}^T e_1^{t+1}  + 2\alpha L^2 D_1
        + \frac{\alpha^2\sigma^2}{|\gset|}  + \frac{2L^2\eta^2}{\alpha} C_2
    \end{align*}
    where we use $\Xi^{0} = e_1^{0}=0$ and $\bar{\mm}^0 = \Delta^0=\0$. Then let $\beta_1:=\frac{2L^2\eta^2}{\alpha^2}$
    \begin{equation}\label{eq:E1}
        E_1 \le 2L^2 D_1 + \frac{\alpha\sigma^2}{|\gset|} + \beta_1 C_2.
    \end{equation}
    Similarly, \Cref{lemma:e1_bar} for $e_{\mI}^{t+1}$ the only difference is that we don't have $\frac{1}{n}$ for $\sigma^2$
    \begin{equation}\label{eq:EI}
        E_{\mI} \le 2L^2 D_1 + \alpha\sigma^2 + \beta_1 C_2.
    \end{equation}
    Similarly, let's call $\beta_2:=\frac{1}{|\gset|}\sum_{i\in\gset}\sum_{j\in\gset} \widetilde{\mW}^2_{ij}\le1$
    \begin{equation}\label{eq:E1b}
        \bar{E}_1 \le 2L^2 D_1 + \beta_2\alpha\sigma^2  + \beta_1 C_2.
    \end{equation}
    The consensus distance \Cref{lemma:xi} has
    \begin{align*}
        D_1 \le & \frac{(1+\epsilon)(1-p)}{1+T}\sum_{t=0}^T \Xi^t + c_2(1+\tfrac{1}{\epsilon}) E_2
        + c_2(1+\tfrac{1}{\epsilon})\eta^2  (\bar{E}_1^{t+1} + \zeta^2 + C_1 + C_2)                \\
        \le     & (1+\epsilon)(1-p) D_1 + c_2(1+\tfrac{1}{\epsilon}) E_2
        + c_2(1+\tfrac{1}{\epsilon})\eta^2  (\bar{E}_1^{t+1} + \zeta^2 + C_1 + C_2).
    \end{align*}
    Replace $E_2$ using \eqref{eq:E2} gives
    \begin{align*}
        D_1 \le & (1+\epsilon)(1-p) D_1
        + c_2(1+\tfrac{1}{\epsilon})
        (c_1\delta_{\max} (2\eta^2(E_{\mI}^{t+1}+\zeta^2) + D_1))
        + c_2(1+\tfrac{1}{\epsilon})\eta^2 (\bar{E}_1^{t+1} + \zeta^2 + C_1 + C_2)     \\
        \le     & ((1+\epsilon)(1-p)+ c_1c_2(1+\tfrac{1}{\epsilon}) \delta_{\max}) D_1
        + c_2(1+\tfrac{1}{\epsilon})\eta^2(2c_1\delta_{\max} E_{\mI}^{t+1} + \bar{E}_1^{t+1} + (1+2c_1\delta_{\max})\zeta^2 + C_1 + C_2).
    \end{align*}
    Now  replace $\bar{E}_1$, $E_{\mI}$ with \eqref{eq:E1b}, \eqref{eq:EI}, then
    \begin{align*}
        D_1
        \le & {((1+\epsilon)(1-p)+ c_2(1+\tfrac{1}{\epsilon}) (c_1\delta_{\max}(1+4L^2\eta^2) + 2L^2\eta^2 ))} D_1                \\
            & + c_2(1+\tfrac{1}{\epsilon})\eta^2( (2c_1\delta_{\max} + \beta_2) \alpha \sigma^2 + (2c_1\delta_{\max} + 1) \zeta^2
        + ((2c_1\delta_{\max} + 1)\beta_1 + 1) C_2 + C_1).
    \end{align*}
    By enforcing $\eta\le \frac{\gamma}{9L}$ and $\delta_{\max}\le\frac{\gamma^2}{10c_1c_2}$
    we have
    \begin{align*}
        2c_2L^2\eta^2 \le                     & \gamma^2/8 \\
        c_1c_2\delta_{\max}(1+4L^2\eta^2) \le & \gamma^2/8
    \end{align*}
    we can achieve
    \[
        \sqrt{c_1c_2\delta_{\max}(1+4L^2\eta^2) + 2c_2L^2\eta^2} \le \frac{\gamma}{2}.
    \]
    Then
    \begin{align*}
        D_1
        \le & \underbrace{((1+\epsilon)(1-p)+ (1+\tfrac{1}{\epsilon})\tfrac{\gamma^2}{4}  )}_{=:T_2} D_1                          \\
            & + c_2(1+\tfrac{1}{\epsilon})\eta^2( (2c_1\delta_{\max} + \beta_2) \alpha \sigma^2 + (2c_1\delta_{\max} + 1) \zeta^2
        + ((2c_1\delta_{\max} + 1)\beta_1 + 1) C_2 + C_1).
    \end{align*}
    Let us minimize the the coefficients of $D_1$ on the right hand side of inequality by having
    \[
        \epsilon(1-p) = \frac{1}{\epsilon} \frac{\gamma^2}{4},
    \]
    that is $\epsilon = \sqrt{\frac{\gamma^2}{4(1-p)}}$. Then the coefficient becomes
    \begin{align*}
        T_2= & (1+\epsilon)(1-p)+ (1+\tfrac{1}{\epsilon})  \frac{\gamma^2}{4} \\
        =    & (\sqrt{1-p} + \frac{\gamma}{2})^2                              \\
        =    & (1-\frac{\gamma}{2})^2.
    \end{align*}
    Then we use $\frac{1}{\epsilon}=\sqrt{\frac{4(1-p)}{\gamma^2}}\le\frac{2}{\gamma}$ and $1+\frac{1}{\epsilon}\le\frac{3}{\gamma}$
    \begin{align*}
        D_1
        \le &
        \tfrac{4c_2\eta^2}{\gamma^2}( (2c_1\delta_{\max} + \beta_2) \alpha \sigma^2 + (2c_1\delta_{\max} + 1) \zeta^2
        + ((2c_1\delta_{\max} + 1)\beta_1 + 1) C_2 + C_1).
    \end{align*}
    This leads to $2c_1\delta_{\max}\le\frac{\gamma^2}{5c_2}\le 1$ and $\beta_2\le 1$, then we know
    \begin{equation}\label{eq:D1}
        D_1 \le
        \frac{4c_2\eta^2}{\gamma^2} (2\alpha\sigma^2 + 2\zeta^2+C_1+(1+2\beta_1)C_2)
    \end{equation}
    Finally, we combine \eqref{eq:C1C2}, \eqref{eq:E1}, \eqref{eq:E1b}
    \begin{align*}
        C_1
        + \frac{1}{2} C_2
        \le &
        \frac{2r_0}{\eta (T+1)}
        + 2 E_1
        + 4c_1 \delta_{\max} {E}_I
        + 4c_1 \delta_{\max} \zeta^2
        + \frac{2c_1\delta_{\max}}{\eta^2}D_1                                                                                           \\
        \le &
        \frac{2r_0}{\eta (T+1)}
        \!+\! (4L^2D_1 \!+\! \tfrac{2\alpha\sigma^2}{|\gset|}+2\beta_1C_2)
        \!+\! 2c_1 \delta_{\max} (4L^2D_1 + 2\beta_2\alpha\sigma^2 + 2\beta_1C_2)                                                       \\
            &
        + 4c_1 \delta_{\max} \zeta^2
        + \frac{2c_1\delta_{\max}}{\eta^2}D_1                                                                                           \\
        \le &
        \frac{2r_0}{\eta (T+1)}
        + (4L^2+8c_1\delta_{\max}L^2 +\frac{2c_1\delta_{\max}}{\eta^2})D_1 + (\tfrac{1}{|\gset|} + 2c_1\delta_{\max}) 2 \alpha \sigma^2 \\
            & \!+\! 4\beta_1C_2 + 4c_1 \delta_{\max} \zeta^2
    \end{align*}
    Then we replace $D_1$ with \eqref{eq:D1}
    \begin{equation}\label{eq:C1C2:1}
        \begin{split}
            C_1
            + \frac{1}{2} C_2
            \le &
            \tfrac{2r_0}{\eta (T+1)}
            + (\tfrac{1}{|\gset|} + 2c_1\delta_{\max}) 2 \alpha \sigma^2 \!+\! 4\beta_1C_2 + 4c_1 \delta_{\max} \zeta^2 \\
            &+ (4L^2\eta^2+8c_1\delta_{\max}L^2\eta^2 +{2c_1\delta_{\max}}) \tfrac{4c_2}{\gamma^2} (2\alpha\sigma^2 + 2\zeta^2+C_1+(1+2\beta_1)C_2)
        \end{split}
    \end{equation}
    To have a valid bound on $C_1$, there are two constraints on the coefficient of the RHS $C_1$ and $C_2$.
    \begin{align*}
        (4L^2\eta^2+8c_1\delta_{\max}L^2\eta^2 +{2c_1\delta_{\max}}) \tfrac{4c_2}{\gamma^2} <                           & 1            \\
        (4L^2\eta^2+8c_1\delta_{\max}L^2\eta^2 +{2c_1\delta_{\max}}) \tfrac{4c_2}{\gamma^2} (1+2\beta_1) + 4\beta_1 \le & \frac{1}{2}.
    \end{align*}
    We can strength the first requirement to
    \begin{equation}\label{eq:C1_coef}
        (4L^2\eta^2+8c_1\delta_{\max}L^2\eta^2 +{2c_1\delta_{\max}}) \tfrac{4c_2}{\gamma^2} \le \frac{1}{4}.
    \end{equation}
    Then, apply this inequality to the second inequality gives
    \begin{align*}
        \frac{1}{4}+\frac{1}{2}\beta_1 + 4\beta_1 \le \frac{1}{2}
    \end{align*}
    which requires $\eta\le\frac{\alpha}{3L}$. Next \eqref{eq:C1_coef} can be achieved by requiring $\delta_{\max}\le\frac{\gamma^2}{64c_1c_2}$
    \begin{align*}
        (4+8c_1\delta_{\max})L^2\eta^2 + 2c_1\delta_{\max} \le 8L^2\eta^2 + 2c_1\delta_{\max}
        \le \frac{\gamma^2}{16c_2}
    \end{align*}
    which requires $8\eta^2L^2\le \frac{\gamma^2}{32c_2}$, and we can simplify it to $\eta\le\frac{\gamma}{40L}$.
    Now we can simplify \eqref{eq:C1C2:1} with \eqref{eq:C1_coef}
    \begin{align*}
        \tfrac{3}{4}C_1 \le &
        \tfrac{2r_0}{\eta (T+1)}
        + (\tfrac{1}{|\gset|} + 2c_1\delta_{\max}) 2 \alpha \sigma^2  + 4c_1 \delta_{\max} \zeta^2                                               \\
                            & + (4L^2\eta^2+8c_1\delta_{\max}L^2\eta^2 +{2c_1\delta_{\max}}) \tfrac{4c_2}{\gamma^2} (2\alpha\sigma^2 + 2\zeta^2)
    \end{align*}
    Multiply both sides with $\frac{4}{3}$ and relax constant $\frac{4}{3}\cdot 2 \le 3$.
    Then by taking $\eta\le \frac{1}{2L}$ we have that
    \begin{align*}
        C_1 \le & \tfrac{3r_0}{\eta (T+1)}
        + (\tfrac{1}{|\gset|} + \tfrac{151}{\gamma^2}2c_1\delta_{\max}) 3 \alpha \sigma^2  + \tfrac{200c_1c_2}{\gamma^2} \delta_{\max} \zeta^2 + \tfrac{48c_2}{\gamma^2} (\alpha\sigma^2 + \zeta^2)L^2\eta^2
    \end{align*}
    By taking $\alpha:=3\eta L$ and relax the constants we have
    \begin{align*}
        C_1 \le \tfrac{3r_0}{\eta (T+1)}
        + (\tfrac{3^2}{|\gset|} + \tfrac{320c_1}{\gamma^2}\delta_{\max} )  L\sigma^2 \eta
        + \tfrac{48c_2}{\gamma^2}(\alpha\sigma^2+\zeta^2)L^2\eta^2
        + \tfrac{200c_1c_2}{\gamma^2}\delta_{\max}\zeta^2.
    \end{align*}
    Minimize the the right hand side by tuning step size \Cref{lemma:eta_new} we have
    \begin{align*}
        \frac{1}{T+1}\sum_{t=0}^T\norm{\nabla f(\bar{\xx}^{t})}_2^2
        \le &
        \tfrac{200c_1c_2}{\gamma^2}\delta_{\max}\zeta^2
        + 2 \left( \frac{(\tfrac{3^2}{|\gset|} + \tfrac{320c_1}{\gamma^2}\delta_{\max} ) 3L\sigma^2 r_0}{T+1} \right)^{\frac{1}{2}} \\
            & + 2 \left(\tfrac{48c_2}{\gamma^2}\zeta^2\right)^{\frac{1}{3}}
        \left(\tfrac{r_0L}{T+1}\right)^{\frac{2}{3}}
        + 2 \left(\tfrac{144c_2}{\gamma^2}\sigma^2\right)^{\frac{1}{4}}
        \left(\tfrac{r_0L}{T+1}\right)^{\frac{3}{4}}
        + \frac{d_0r_0}{T+1}
    \end{align*}
    where $\frac{1}{d_0}:=\min\{\frac{1}{2L}, \frac{\gamma}{9L}, \frac{\gamma}{40L}\}=\frac{\gamma}{40L}$ and
    \begin{align*}
        \eta= \min\left\{
        \left(\frac{2r_0}{(\frac{9}{|\gset|} + \frac{320c_1}{\gamma^2} \delta_{\max})L\sigma^2(T+1)}\right)^{\nicefrac{1}{2}},
        \left(\frac{2r_0\gamma^2}{48c_2\zeta^2L^2(T+1)}\right)^{\nicefrac{1}{3}}, \left(\frac{2r_0\gamma^2}{L^3\sigma^2(T+1)}\right)^{\nicefrac{1}{4}}, \tfrac{1}{d_0}\right\}.
    \end{align*}

    \paragraph{Bound on the consensus distance $D_1$.}
    Since $\beta_1=\frac{2L^2\eta^2}{\alpha^2}=\frac{2}{9}$, we can relax \eqref{eq:D1} to
    \begin{align*}
        D_1 \le & \tfrac{4c_2\eta^2}{\gamma^2} (2\alpha\sigma^2 + 2\zeta^2+ 2(1+2\beta_1)(C_1 + \tfrac{1}{2}C_2)) \\
        \le     & \tfrac{4c_2\eta^2}{\gamma^2} (2\alpha\sigma^2 + 2\zeta^2+ 3(C_1 + \tfrac{1}{2}C_2)).
    \end{align*}
    For significantly large $T$, we know that $\eta=\alpha=\cO(\frac{1}{\sqrt{T+1}})$ and find the upper bound of $2\alpha\sigma^2+2\zeta^2+C_1+\frac{1}{2}C_2$ with  $\cO(\zeta^2)$ where higher order terms of $1/T$ are dropped. Therefore, the upper bound on the consensus distance $D_1$ is $\cO\left(\frac{\zeta^2}{\gamma^2(T+1)}\right)$.
\end{proof}

\section{Other related works and discussions}\label{sec:other_related_work}
In this section, we add more related works and discussions.

\paragraph{Byzantine resilient learning with constraints} Byzantine-robustness is challenging when the training is combined with other constraints, such as asynchrony \citep{damaskinos2018asynchronous,xie2020zeno++,yang2020basgd}, data heterogeneity \citep{karimireddy2021byzantinerobust,peng2020byzantine,li2019rsa,data2021byzantine}, privacy \citep{he2020secure,burkhalter2021rofl}. These works all assume the existence of a central server which can communicate with all regular workers. In this paper, we consider the decentralized setting and focus on the constraint that not all regular workers can communicate with each other.

\paragraph{More works on decentralized learning.} Many works focus on compression-techniques \citep{koloskova2019decentralized,koloskova2020decentralized,vogels2020practical}, data heterogeneity \citep{tang2018d2,vogels2021relaysum,koloskova2021improved}, and communication topology~\citep{assran2019stochastic,ying2021exponential}.

\paragraph{Detailed comparison with one line of work.} Among all the works on robust decentralized training, Sundaram et al. \cite{sundaram2018distributed} and Su et al. \cite{su2016multi} and their followup works \cite{yang2019byrdie,yang2019bridge} have the most similar setup with ours. They are all using the trimmed mean as the aggregator assumptions on the graph. We illustrate our advantages over these methods as follows
\begin{enumerate}[leftmargin=*]
    \item Their methods (TM) make unrealistic assumptions about the graph while our method is much more relaxed.
          Their main assumption on the graph has 2 parts: 1) each good node should have at least $2b+1$ neighbors where $b$ is the maximum number of Byzantine workers in the \textit{whole} network; 2) by removing any $b$ edges the good nodes should be connected. This assumption essentially requires the good workers have honest majority \textit{everywhere} and additionally they have to be well connected. This can be hardly enforced in the decentralized environment. In contrast, our method has a weaker condition relating the spectral gap and $\delta$. Our method also works without a honest majority \Cref{fig:no_honest_majority}. The second part of their assumption exclude common topologies like Dumbbell.

    \item TM \textbf{fails} to reach consensus even in some \textbf{Byzantine-free} graphs (e.g. Dumbbell) while SSClip converges as fast as gossip. For example, TM fails to reach consensus in NonIID setting for MNIST dataset (\Cref{fig:exp6_1}) and even fails in IID setting for CIFAR-10 dataset (\Cref{fig:exp6_1_CIFAR}).

    \item We have a clear convergence rate for SGD while they only show asymptotic convergence for GD.
          In fact, we even improve the state-of-art decentralized SGD analysis \citep{koloskova2021unified}.

    \item Our work reveals how the quantitative relation between percentage of Byzantine workers ($\delta$) and information bottleneck ($\gamma$) influence the consensus (see \Cref{fig:epsilon:spectral-gap} and \Cref{theorem:consensus}).

    \item We propose a novel dissensus attacks that utilize topology information.

    \item Impossibility results. Sundaram et al. \cite{sundaram2018distributed} and Su et al. \cite{su2016multi} give impossibility results in terms of number of nodes while we give a novel results in terms of spectral gap ($\gamma$).
\end{enumerate}

\paragraph{Other related works and discussions.} Zhao et al.~\cite{zhao2019resilient} make assumption that some users are \textit{trusted} and then adopt trimmed mean as robust aggregator. But this assumption is incompatible with our setting where every node only trusts itself. Peng et al.~\cite{peng2020byzantine} propose a ``zero-sum'' attack which exploits the topology where Byzantine worker $j$ construct
\begin{equation*}
    \xx_j := - \tfrac{\sum_{k\in \cN_i\cap\gset} \xx_k}{
        |\cN_i\cap\bset|
    }.
\end{equation*}
They aim to manipulate the good worker $i$'s model to 0, but it also makes the constructed Byzantine model very far away from the good worker models, making it easy to detect. In contrast, our dissensus attack \eqref{eq:dissensus} simply amplifies the existing disagreement amongst the good workers, which keeps the attack much less undetectable.
In addition, we take mixing matrix into consideration and use $\epsilon_i$ to parameterize the attack which makes it more flexible.

\paragraph{Clarifications about our method.}
We make the following clarifications regarding our method:
\begin{itemize}[leftmargin=*]
    \item Ideally we would like to replace the $\delta_{\max} = \max_{j}\delta_j$ with an average $\bar\delta = \tfrac{1}{n}\sum_{j}\delta_j$. However, the requirement that $\delta_{\max}$ be small may be achieved by the good workers increasing its weight on itself. Note that Byzantine workers cannot alter good workers local behavior.
    \item \Cref{theorem:1} does not tell us what happens if the percentage of Byzantine workers $\delta$ is relatively larger than spectral gap ($\gamma$), but it does not necessarily mean that \algo diverges. Instead, it means reaching global consensus is not possible as Byzantine workers effectively block the information bottleneck.
          We conjecture that within each connected good component not blocked by the byzantine workers, the good workers still reach component-level consensus by applying the analysis of \Cref{theorem:1} to only this component. We leave such a component-wise analysis for future work.

\end{itemize}

\begin{figure}[!t]
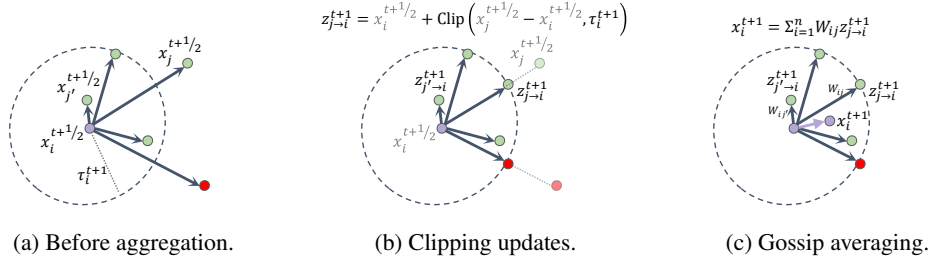

    \begin{subfigure}[b]{0.33\linewidth}
        \centering
        {
            \includegraphics[page=15,width=.9\linewidth,clip,trim=5.4cm 9.75cm 24.5cm 6.5cm]{diagrams/threat-model.pdf}
        }
        \caption{
            Before aggregation.
        }
        \label{fig:clipping_diagram:1}
    \end{subfigure}
    \begin{subfigure}[b]{0.33\linewidth}
        \centering
        {
            \includegraphics[page=16,width=.9\linewidth,clip,trim=5.4cm 9.75cm 24.5cm 6.5cm]{diagrams/threat-model.pdf}
        }
        \caption{
            Clipping updates.
        }
        \label{fig:clipping_diagram:2}
    \end{subfigure}
    \begin{subfigure}[b]{0.33\linewidth}
        \centering
        {
            \includegraphics[page=17,width=.9\linewidth,clip,trim=5.4cm 9.75cm 24.5cm 6.5cm]{diagrams/threat-model.pdf}
        }
        \caption{
            Gossip averaging.
        }
        \label{fig:clipping_diagram:3}
    \end{subfigure}
    \caption{
        \rebuttal{
            Diagram of ClippedGossip at time $t$ on worker $i$. Let purple node be the model of worker $i$ and green nodes be models of worker $i$'s regular neighbors and red nodes be models of worker $i$'s Byzantine neighbors.
            The figure (a), (b), and (c) demonstrate the 3 stages of ClippedGossip.
            First, in the left figure (a) worker $i$ collects models $\{\xx_j^{t+\nicefrac{1}{2}}:j\in\cN_i\}$ from its neighbors. Then in the middle figure (b) worker $i$ clips neighbor models to ensure the clipped models are no farther than $\tau_i^{t+1}$ from node $i$. Nodes outside the circle (e.g. $\xx_j^{t+1/2}$ ) clipped to the circle (e.g. $\zz_{j\rightarrow i}^{t+1}$) while nodes inside the circle (e.g. $\xx_{j'}^{t+1}$) remain the same after clipping (e.g. $\zz_{j'\rightarrow i}^{t+1}$). In the right figure (c) worker $i$ update its model to $\xx_i^{t+1}$ using gossip averaging over clipped models.
        }
    }
\end{figure}

\end{document}